\documentclass[twoside,11pt]{article}

\usepackage{jmlr2e}
\usepackage{mathrsfs}
\usepackage{hyperref}
\usepackage{courier}
\usepackage{epsfig}
\usepackage{graphicx}
\usepackage{amsmath}
\usepackage{mathtools}
\usepackage{amssymb}
\usepackage{verbatim}
\usepackage{booktabs}
\usepackage{algpseudocode}
\usepackage{amscd}
\usepackage{times}
\usepackage{subfigure} 
\usepackage{algorithm}
\usepackage{color}
\usepackage{enumerate}
\usepackage{tikz}
\usepackage{epstopdf}
\usepackage{multirow}
\usepackage[
singlelinecheck=false 
]{caption}

\algrenewcommand\textproc{}

\newtheorem{thm}{Theorem}

\newtheorem{lem}[thm]{Lemma}
\newtheorem{prp}[thm]{Proposition}
\newtheorem{dfn}[thm]{Definition}
\newtheorem{prb}[thm]{Problem}

\def\B{\boldsymbol{B}}
\def\bC{\boldsymbol{C}}

\def\bE{\boldsymbol{E}}

\def\bL{\boldsymbol{L}}

\def\bO{\boldsymbol{\mathcal{O}}}
\def\T{\mathscr{T}}
\def\bT{\boldsymbol{T}}
\def\X{\mathcal{X}}
\def\Y{\mathcal{Y}}
\def\bY{\boldsymbol{\Y}}
\def\bX{\boldsymbol{\mathcal{X}}}
\def\btX{\boldsymbol{\tilde{\X}}}
\def\cS{\mathcal{S}}
\def\Sp{\mathbb{S}}
\def\H{\mathcal{H}}

\def\bI{\boldsymbol{I}}
\def\J{\mathcal{J}}

\def\Q{\mathscr{Q}}
\def\bP{\boldsymbol{P}}
\def\Re{\mathbb{R}}
\def\R{\mathcal{R}}
\def\bR{\boldsymbol{R}}

\def\U{\mathcal{U}}
\def\bU{\boldsymbol{U}}

\def\V{\mathcal{V}}

\def\bW{\boldsymbol{W}}
\def\a{\boldsymbol{a}}
\def\b{\boldsymbol{b}}
\def\ba{\boldsymbol{\alpha}}

\def\c{\boldsymbol{c}}
\def\eX{\epsilon_{\bX,N}}
\def\eO{\epsilon_{\bO,M}}

\def\bzeta{\boldsymbol{\zeta}}
\def\boldeta{\boldsymbol{\eta}}
\def\bxi{\boldsymbol{\xi}}

\def\d{\boldsymbol{d}}
\def\f{\boldsymbol{f}}
\def\h{\boldsymbol{h}}

\def\bn{\boldsymbol{n}}
\def\r{\boldsymbol{r}}
\def\o{\boldsymbol{o}}
\def\t{\boldsymbol{t}}
\def\u{\boldsymbol{u}}
\def\v{\boldsymbol{v}}
\def\y{\boldsymbol{y}}
\def\x{\boldsymbol{x}}
\def\w{\boldsymbol{w}}
\def\z{\boldsymbol{z}}
\def\xt{\tilde{x}}
\def\xtb{\boldsymbol{\xt}}
\def\e{\boldsymbol{e}}
\def\c{\boldsymbol{c}}

\def\0{\boldsymbol{0}}
\def\1{\boldsymbol{1}}
\def\transpose{\top}
\DeclareMathOperator*{\Trace}{Trace}
\DeclareMathOperator*{\Rank}{Rank}
\DeclareMathOperator*{\card}{Card}
\DeclareMathOperator*{\codim}{codim}
\DeclareMathOperator*{\Sgn}{Sgn}
\DeclareMathOperator*{\Span}{Span}
\DeclareMathOperator*{\Sign}{Sign}
\DeclareMathOperator*{\Conv}{Conv}
\DeclareMathOperator*{\Diag}{Diag}
\DeclareMathOperator*{\argmin}{argmin}
\DeclarePairedDelimiter{\ceil}{\lceil}{\rceil}

\newcommand{\myparagraph}[1]{\smallskip\noindent\textbf{#1.}}

\newcommand{\ra}[1]{\renewcommand{\arraystretch}{#1}}


\ShortHeadings{Dual Principal Component Pursuit}{Tsakiris and Vidal}
\firstpageno{1}

\begin{document} 

\graphicspath{{figures/}}

\title{Dual Principal Component Pursuit}

\author{Manolis C. Tsakiris \email mtsakiris@shanghaitech.edu.cn \\
School of Information Science and Technology \\
ShanghaiTech University \\
Pudong, Shanghai, China \\
\AND
Ren\'e Vidal \email rvidal@jhu.edu \\
Center for Imaging Science \\
Johns Hopkins University \\
Baltimore, MD, 21218, USA}

\editor{TBD}

\maketitle

\begin{abstract}
 
We consider the problem of learning a linear subspace from data corrupted by outliers. Classical approaches are typically designed for the case in which the subspace dimension is small relative to the ambient dimension. Our approach works with a dual representation of the subspace and hence aims to find its orthogonal complement; as such, it is particularly suitable for subspaces whose dimension is close to the ambient dimension (subspaces of high relative dimension). We pose the problem of computing normal vectors to the inlier subspace as a non-convex $\ell_1$ minimization problem on the sphere, which we call Dual Principal Component Pursuit (DPCP) problem. We provide theoretical guarantees under which every global solution to DPCP is a vector in the orthogonal complement of the inlier subspace. Moreover, we relax the non-convex DPCP problem to a recursion of linear programs whose solutions are shown to converge in a finite number of steps to a vector orthogonal to the subspace. In particular, when the inlier subspace is a hyperplane, the solutions to the recursion of linear programs converge to the global minimum of the non-convex DPCP problem in a finite number of steps. We also propose algorithms based on alternating minimization and iteratively re-weighted least squares, which are suitable for dealing with large-scale data. Experiments on synthetic data show that the proposed methods are able to handle more outliers and higher relative dimensions than current state-of-the-art methods, while experiments in the context of the three-view geometry problem in computer vision suggest that the proposed methods can be a useful or even superior alternative to traditional RANSAC-based approaches for computer vision and other applications.
\end{abstract}

\begin{keywords}
Outliers, Robust Principal Component Analysis, High Relative Dimension, $\ell_1$ Minimization, Non-Convex Optimization, Linear Programming, Trifocal Tensor
\end{keywords}

\section{Introduction}\label{section:Introduction}
Principal Component Analysis (PCA) is one of the oldest \citep{Pearson-1901,Hotelling-1933} and most fundamental techniques in data analysis,
with ubiquitous applications in 
engineering \citep{moore1981principal}, 
economics and sociology \citep{Vyas:HPP2006}, 
chemistry \citep{Ku:Chemometrics95}, 
physics \citep{Loyd:NaturePhysics2014} and 
genetics \citep{Price:NatureGenetics2006} to name a few; see \cite{Jolliffe-2002} for more applications. Given a data matrix $\bX \in \Re^{D \times L}$ of $L$ data points of coordinate dimension $D$, PCA gives a closed
form solution to the problem of finding a $d$-dimensional linear subspace $\hat \cS$ that is closest, in the Euclidean sense, to the columns of $\bX$. Although the optimization problem associated with PCA is non-convex, it does admit a simple solution by means of the Singular Value Decomposition (SVD) of $\bX$. In fact, $\hat \cS$ 
is the subspace spanned by the first $d$ left singular vectors~of~$\bX$. 

Using $\hat{\cS}$ as a model for the data $\bX$ is meaningful when the data are known to have an approximately linear structure of underlying dimension $d$, i.e., they lie close to a $d$-dimensional subspace ${\cS}$. In practice, the principal components of $\bX$ are known to be well-behaved under mild levels of noise, i.e., the principal angles between $\hat{{\cS}}$ and ${\cS}$ are relatively small, and in fact, $\hat{{\cS}}$ is optimal when the noise is Gaussian \citep{Jolliffe-2002}. 
However, very often in applications the data are corrupted by outliers, i.e., the data matrix has the form $\btX = \left[\bX \, \bO\right] \boldsymbol{\Gamma}$, where the $M$ columns of $\bO \in \Re^{D \times M}$ are points of $\Re^D$ whose angles from the underlying ground truth subspace ${\cS}$ associated with the inlier points $\bX$ are large, and $\boldsymbol{\Gamma}$ is an unknown permutation. In such cases, the principal angles between ${\cS}$ and its PCA estimate $\hat{{\cS}}$ will in general be large, even when $M$ is small. This is to be expected since, by definition, the principal components of $\btX$ are orthogonal directions of maximal correlation with \emph{all} the points of $\btX$. This phenomenon, together with the fact that outliers are almost always present in real datasets, has given rise to the important problem of outlier detection in PCA.

Traditionally, outlier detection has been a major area of study in robust statistics with notable methods being \emph{Influence-based Detection}, \emph{Multivariate Trimming}, $M$\emph{-Estimators}, \emph{Iteratively Reweighted Least Squares} (IRLS) and \emph{Random Sampling Consensus} (RANSAC) \citep{Huber-1981,Jolliffe-2002}. These methods are usually based on non-convex optimization problems and in practice converge only to a local minimum. In addition, their theoretical analysis is usually limited and their computational complexity may be large (e.g., in the case of RANSAC). Recently, two attractive methods have 
appeared \citep{Xu:TIT12,Soltanolkotabi:AS12} that are directly based on convex optimization and are inspired by
\emph{low-rank representation} \citep{Liu:ICML10} and \emph{compressed sensing} \citep{Candes:SPM08}. Even though both of these methods admit theoretical guarantees and efficient implementations, they are in principle applicable only in the case of subspaces of small relative dimension (i.e., $d/D \ll 1$). On the other hand, the theoretical guarantees of the recent REAPER method of \cite{Lerman:FCM15} seem to suggest that the method is able to handle any subspace dimension.

In this paper we adopt a \emph{dual} approach to the problem of robust PCA in the presence of outliers, which allows us to explicitly transcend the low relative dimension regime of modern methods such as \cite{Xu:TIT12} or \cite{Soltanolkotabi:AS12}, and even be able to handle as many as  $70\%$ outliers for hyperplanes (subspaces of maximal relative dimension $(D-1)/D$), a regime where other modern \citep{Lerman:FCM15} or classic \citep{Huber-1981} methods fail. The key idea of our approach comes from the fact that, in the absence of noise, the inliers $\bX$ lie inside any hyperplane $\H_1=\Span(\b_1)^\perp$ that contains the underlying linear subspace ${\cS}$ associated with the inliers. This suggests that, instead of attempting to fit directly a low-dimensional linear subspace to the entire dataset $\btX$, as done e.g. in \cite{Xu:TIT12}, we can search for a \emph{maximal hyperplane} $\H_1$ that contains as many points of the dataset as possible. When the inliers $\bX$ are in general position (to be made precise shortly) inside $\cS$, and the outliers $\bO$ are in general position in $\Re^D$, such a maximal hyperplane will contain the entire set of inliers together with possibly a few outliers. Then one may remove all points that lie outside this hyperplane and be left with an easier robust PCA problem that could potentially be addressed by existing methods. Alternatively, one can continue by finding a second maximal hyperplane $\H_2=\Span(\b_2)^\perp$, with the new dual principal component $\b_2$ perpendicular to the first one, i.e., $\b_2  \perp \b_1$, and so on, until $c:=D-d$ such maximal hyperplanes $\H_1,\dots,\H_c$ have been found, leading to a \emph{Dual Principal Component Analysis (DPCA)} of $\btX$. In such a case, the inlier subspace is precisely equal to $\bigcap_{i=1}^c \H_i$, and a point is an outlier if and only if it lies outside this intersection. 

We formalize the problem of searching for maximal hyperplanes with respect to $\btX$ as an $\ell_0$ cosparsity-type problem \citep{nam2013cosparse}, which we relax to a non-convex $\ell_1$ problem on the sphere, referred to as the Dual Principal Component Pursuit (DPCP) problem. We provide theoretical guarantees under which every global solution of the DPCP problem is a vector orthogonal to the linear subspace associated with the inliers, i.e., it is a dual principal component. Moreover, we relax the non-convex DPCP problem to a recursion of linear programming problems and we show that, under mild conditions, their solutions converge to a dual principal component in a finite number of steps. In particular, when the inlier subspace is a hyperplane, then the solutions of the linear programming recursion converge to the global minimum of the non-convex problem in a finite number of steps. Furthermore, we propose algorithms based on alternating minimization and IRLS that are suitable for dealing with large-scale data. Extensive experiments on synthetic data show that the proposed methods are able to handle more outliers and subspaces of higher relative dimension $d/D$ than state-of-the-art methods \citep{RANSAC,Xu:TIT12,Soltanolkotabi:AS12,Lerman:FCM15}, while experiments with real face and object images show that our DPCP-based methods perform on par with state-of-the-art methods.

\paragraph{Notation} The shorthand RHS stands for \emph{Right-Hand-Side} and similarly for LHS. The notation $\cong$ stands for \emph{isomorphism} in whatever category the objects lying to the LHS and RHS of the symbol belong to. The notation $\simeq$ denotes approximation. For any positive integer $n$ let $[n]:=\left\{1,2,\hdots,n\right\}$. For any positive number $\alpha$ let $\ceil{\alpha}$ denote the smallest integer that is greater than $\alpha$. For sets $\mathcal{A}, \mathcal{B}$, the set $\mathcal{A} \setminus \mathcal{B}$ is the set of all elements of $\mathcal{A}$ that do not belong to $\mathcal{B}$. If $\cS$ is a subspace of $\Re^D$, then $\dim(\cS)$ denotes the dimension of $\cS$ and $\pi_{\cS}: \Re^D \rightarrow \cS$ is the orthogonal projection of $\Re^D$ onto $\cS$. For vectors $\b, \b' \in \Re^D$ we let $\angle \b,\b'$ be the acute angle between $\b$ and $\b'$, defined as the unique angle $\theta \in \left[0 \, \, 90^\circ\right]$ such that $\cos\theta = \left|\b^\transpose \b' \right|$. If $\b$ is a vector of $\Re^D$ and $\cS$ a linear subspace of $\Re^D$, the principal angle of $\b$ from $\cS$ is  $\angle \b,\pi_{\cS}(\b)$. The symbol $\oplus$ denotes direct sum of subspaces. The 
orthogonal complement of a subspace $\cS$ in $\Re^D$ is $\cS^\perp$. If $\y_1, \dots, \y_s$ are elements of $\Re^D$, we denote by $\Span(\y_1,\dots,\y_s)$ the subspace of $\Re^D$ spanned by these elements. $\Sp^{D-1}$ denotes the unit sphere of $\Re^D$. For a vector $\w \in \Re^D$ we define $\hat{\w}:=\w/\big \|\w\big \|_2$, if $\w \neq \0$, and $\hat{\w}:=0$ otherwise. Given a square matrix $\bC$, $\Diag(\bC)$ denotes the vector of diagonal elements of $\bC$. Given a square matrix $\bP$, the notation $\0 \le \bP \le \bI$ indicates that $\bP, \bI -\bP$ are positive semi-definite matrices. With a mild abuse of notation we will be treating on several occasions matrices as sets, i.e., if $\bX$ is $D \times N$ and $\x$ a point of $\Re^D$, the notation $\x \in \bX$ signifies that $\x$ is a column of $\bX$. Similarly, if $\bO$ is a $D \times M$ matrix, the notation $\bX \cap \bO$ signifies the points of $\Re^D$ that are common columns of $\bX$ and $\bO$. The notation $\Sign$ denotes the sign function $\Sign: \Re \rightarrow \left\{-1,0,1\right\}$ defined as
\begin{align}
\Sign(x) = \left\{
\begin{array}{ll}
x/|x| &  \text{if} \, \, \,    x \neq 0, \\
0 &  \text{if} \, \, \,    x = 0. 
\end{array}
\right.
\end{align}
Finally, we note that the $i$th entry of the subdifferential of the $\ell_1$-norm $\|\z\|_1=\sum_{i=1}^D |z_i|$ of a vector $\z=(z_1,\dots,z_D)^\transpose$ is a set-valued function on $\Re^D$ defined as
\begin{align}
\Sgn(z_i) = 
\begin{cases}
\Sign(z_i) &  \text{if} \, \, \,    z_i \neq 0, \\
\left[-1,1\right] & \text{if} \, \, \,  z_i=0. 
\end{cases}
\end{align}

\section{Prior Art} \label{section:RelatedWork}
We begin by briefly reviewing some state-of-the-art methods for learning a linear subspace from data $\btX = \left[\bX \, \bO\right] \boldsymbol{\Gamma}$ in the presence of outliers. The literature on this subject is vast and our account is far from exhaustive; with a few exceptions, we mainly focus on modern methods based on convex optimization. For methods from robust statistics see \cite{Huber-1981, Jolliffe-2002}, for
online subspace learning methods see \cite{Balzano:Allerton10,Feng:NIPS13}, for regression-type methods see \cite{WangCamps:CVPR15}, while for fast and other methods the reader is referred to the excellent literature review of \cite{Lerman:CA14} or the recent survey by \cite{Lerman:arXiv18}. 

\noindent Finally, we note that preliminary results associated with the present work have been published in the form of a conference paper\footnote{We note that the proof of Theorem $2$ in \cite{Tsakiris:DPCPICCV15} contained an inaccuracy which makes its statement incomplete. The complete statement is Theorem \ref{thm:DiscreteNonConvex} in the present paper.} \citep{Tsakiris:DPCPICCV15}. While the present paper was under review, we extended our approach to clustering data from multiple subspaces \citep{Tsakiris:ICML17}, which can also be thought of as a robust PCA problem but with structured outliers. This is a continuation of the present work, which certainly builds on the concepts and algorithms presented here, yet requires sufficiently distinct machinery to be fully established.

\paragraph{RANSAC} One of the oldest and most popular outlier detection methods for PCA is \emph{Random Sampling Consensus (RANSAC)} \citep{RANSAC}. The idea behind RANSAC is simple: alternate between randomly sampling a subset of cardinality $d$ from the dataset and computing a $d$-dimensional subspace from this subset, until a subspace $\hat{\cS}$ is found that maximizes the number of points in the entire dataset that approximately lie in $\hat{\cS}$ within some error. RANSAC is typically used when the ambient dimension $D$ is small (say $D \le 20$), yielding high quality subspace estimates regardless of the subspace relative dimension $d/D$. However, as $D$ increases, RANSAC becomes inefficient for large values of $d/D$, except when the outlier ratio is very small, as otherwise a prohibitive number of trials may be required in order to obtain outlier-free samples and thus furnish reliable models. Additionally, RANSAC requires as input an estimate for the dimension of the subspace as well as a thresholding parameter, which is used to distinguish outliers from inliers; naturally the performance of RANSAC is very sensitive to these two parameters. 

\paragraph{$\ell_{2,1}$-RPCA} Unlike RANSAC, modern methods for outlier detection in PCA are primarily based on convex optimization. One of the earliest and most important such methods is the $\ell_{2,1}$-RPCA method of \cite{Xu:TIT12}, which is in turn inspired by the Robust Principal Component Analysis (RPCA) algorithm of \cite{Candes:ACM11}. $\ell_{2,1}$-RPCA  computes a $(\ell_*+\ell_{2,1})$-norm decomposition\footnote{Here $\ell_*$ denotes the nuclear norm, which is the sum of the singular values of the matrix. Also, $\ell_{2,1}$ is defined as the sum of the Euclidean norms of the columns of a matrix.} of the data matrix, instead of the $(\ell_*+\ell_{1})$-decomposition in \cite{Candes:ACM11}. More specifically, 
$\ell_{2,1}$-RPCA solves the optimization problem
\begin{align}
\min_{\bL,\bE: \, \btX=\bL+\bE} \, \, \, \big \|\bL\big \|_* + \lambda \, \big \|\bE\big \|_{2,1}, \label{eq:RPCA21}
\end{align} which attempts to decompose the data matrix $\btX=[\bX \, \bO] \boldsymbol{\Gamma}$ into the sum of a low-rank matrix $\bL$, and a matrix $\bE$ that has only a few non-zero columns. The idea is that $\bL$ is associated with the inliers, having the form $\bL = [\bX \, \0_{D \times M}] \boldsymbol{\Gamma}$, and $\bE$ is associated with the outliers, having the form $\bE = [\0_{D \times N} \, \bO] \boldsymbol{\Gamma}$. The optimization problem \eqref{eq:RPCA21} is convex and admits theoretical guarantees and efficient algorithms based on the alternating direction method of multipliers (ADMM) \citep{Gabay:CMA76}. However, it is expected to succeed only when the intrinsic dimension $d$ of the inliers is small enough (otherwise $[\bX \, \0_{D \times M}]$ will not be low-rank), and the outlier ratio is not too large (otherwise $[\0_{D \times N} \, \bO]$ will not be column-sparse). Finally, notice that $\ell_{2,1}$-RPCA does not require as input
the subspace dimension $d$, because it does not directly compute an estimate for the subspace. Rather, the subspace can be obtained subsequently by applying classic PCA on $\bL$, and now one does need an estimate for $d$.

\paragraph{SE-RPCA} Separating outliers from low-rank inlier points can also be achieved by exploiting the \emph{self-expressiveness} (SE) property of the data matrix, a notion popularized by the work of \cite{Elhamifar:CVPR11,Elhamifar:TPAMI13} in the area of subspace clustering \citep{Vidal:SPM11-SC}. Specifically, if a column $\tilde\x$ of $\btX$ is an inlier, then it can be expressed as a linear combination of $d$ other inliers in $\btX$, while if $\tilde\x$ is an outlier, then in principle it requires $D$ columns of $\btX$. The coefficient matrix $\bC$ can be obtained as the solution to the convex optimization problem
\begin{align}
\min_{\bC} \, \, \, \|\bC\|_1 \, \, \, \text{s.t.} \, \, \, \btX = \btX \bC, \, \, \, \Diag(\bC)=\0,
\end{align} 
where the extra constraint prevents the trivial solution $\bC = I$. Given $\bC$, and under the hypothesis that $d/D$ is small, a column of $\btX$ is declared as an outlier, if the $\ell_1$ norm of the corresponding column of $\bC$ is large; see \cite{Soltanolkotabi:AS12} for an explicit formula. SE-RPCA admits theoretical guarantees \citep{Soltanolkotabi:AS12} and efficient ADMM implementations \citep{Elhamifar:TPAMI13}. Moreover, the recent work of \cite{You:CVPR17} has demonstrated that the information contained in the self-expressive matrix $\bC$ can be further exploited to identify the outliers by means of a random walk on the directed affinity graph defined by $\bC$, thus yielding superior results than the simple thresholding of the norms of the columns of $\bC$. In contrast to $\ell_{2,1}$-RPCA, which in principle fails in the presence of a very large number of outliers, SE-RPCA is still expected to perform well, since the existence of sparse \emph{subspace-preserving} self-expressive patterns does not depend on the number of outliers present. Also, similarly to $\ell_{2,1}$-RPCA, SE-RPCA does not directly require an estimate for the subspace dimension $d$. Nevertheless, knowledge of $d$ is necessary if one wants to furnish an actual subspace estimate, which entails removing the outliers (a judiciously chosen threshold would also be necessary here) and applying PCA.

\paragraph{REAPER} Another robust subspace learning method that admits an interesting theoretical analysis is \emph{REAPER} \citep{Lerman:FCM15}, which is conceptually associated with the optimization problem
\begin{align}
\min_{\bP} \sum_{j=1}^{L} \big \|(\bI_D - \bP)  \xtb_j \big \|_2 \, \, \, \text{s.t.} \, \, \, \bP \, \, \, \text{is an orthogonal projection ~~ and} \, \, \,  \Trace(\bP) = d. \label{eq:ReaperNonConvex}
\end{align} 
Here the vector $\xtb_j$ denotes the $j$-th column of $\btX$ and the matrix $\bP$ denotes the orthogonal projection onto a $d$-dimensional linear subspace $\cS$. Notice that $\bP$ can be thought of as the product $\bP = \bU \bU^\transpose$, where the columns of $\bU \in \Re^{D \times d}$ form an orthonormal basis for $\cS$. Since the problem in \eqref{eq:ReaperNonConvex} is non-convex, \cite{Lerman:FCM15} relaxed it to the convex semi-definite program
\begin{align}
\min_{\bP} \sum_{j=1}^{L} \big \|(\bI_D - \boldsymbol{\bP})  \xtb_j \big \|_2 \, \, \, \text{s.t.} \, \, \, \0 \le \boldsymbol{\bP} \le \bI_D, \, \, \,  \Trace\left(\boldsymbol{\bP} \right) = d, \label{eq:ReaperConvex}
\end{align} 
and obtained an approximate solution $\bP^*$ to \eqref{eq:ReaperNonConvex} as the rank-$d$ orthogonal projector that is closest to the global solution of \eqref{eq:ReaperConvex}, as measured by the nuclear norm. It was shown by \cite{Lerman:FCM15} that the orthoprojector $\bP^*$ obtained in this way is within a neighborhood of the orthoprojector corresponding to the true underlying inlier subspace. In practice, the semi-definite program \eqref{eq:ReaperConvex} may become prohibitively expensive to solve even for moderate values of the ambient dimension $D$. As a consequence, the authors proposed an \emph{Iteratively Reweighted Least Squares (IRLS)} scheme to obtain a numerical solution of \eqref{eq:ReaperConvex}, whose objective value was shown to converge to a neighborhood of the optimal objective value of problem \eqref{eq:ReaperConvex}. 

One advantage of REAPER with respect to $\ell_{2,1}$-RPCA and SE-RPCA, is that its theoretical conditions allow for the subspace to have arbitrarily large relative dimension, providing that the outlier ratio is sufficiently small. It is interesting to note here that this is precisely the condition under which RANSAC \citep{Huber-1981} can handle large relative dimensions; the main difference though between RANSAC and REAPER is that the latter employs convex optimization, and for a fixed relative dimension and computational budget REAPER can tolerate considerably higher outlier ratios than RANSAC (see Fig. \ref{figure:separation}, \S \ref{section:Experiments}). 

\paragraph{COHERENCE PURSUIT (CoP)} The recent work of \cite{Rahmani:arXiv17} analyzes a simple yet efficient algorithm for detecting the inlier space from pairwise point coherences, hence called \emph{Coherence Pursuit (CoP)}. The main insight behind CoP is that, under the hypothesis that the inliers lie in a low-dimensional subspace, inlier points tend to have significantly higher coherence with the rest of the points in the dataset, than outlier points. Hence the columns of the pairwise coherence matrix $\btX^\top \btX$ that have large norm are expected to correspond to inliers. Indeed, CoP orders the columns of $\btX^\top \btX$ in descending values for a given norm and selects sufficiently many of them from the top, until their span yields a $d$-dimensional subspace. Similarly to SE-RPCA, the performance of CoP is not expected to be significantly degraded as the number of outliers increases, as long as each outlier remains sufficiently incoherent with the rest of the dataset. As demonstrated by \cite{Rahmani:arXiv17}, CoP has a competitive performance and admits an extensive theoretical analysis.

\paragraph{L1-PCA$^*$} Finally, we mention the method L1-PCA$^*$ of \cite{Brooks:CSDA13}, since it works with the orthogonal complement of the subspace, similarly to the proposed method of the present paper. Nevertheless, L1-PCA$^*$ is slightly unusual in that it learns $\ell_1$ hyperplanes, i.e., hyperplanes that minimize the $\ell_1$ distance to the points, as opposed to the Euclidean distance used by methods such as PCA and REAPER. Overall, no theoretical guarantees seem to be known for L1-PCA$^*$, as far as the subspace learning problem is concerned. In addition, L1-PCA$^*$ requires solving $\mathcal{O}(D^2)$ linear programs, where $D$ is the ambient dimension, which makes it computationally expensive.

\section{Problem Formulation}
 \label{section:ProblemFormulation}
 
 
In this section we formulate the problem addressed in this paper. We describe our data model (\S \ref{subsection:DataModel}), and motivate the problem at a conceptual (\S \ref{subsection:ConceptualFormulation}) and computational level (\S \ref{subsection:ComputationalFormulation}).

\subsection{Data model} \label{subsection:DataModel}
We employ a deterministic noise-free data model, under which the given data is
\begin{align}
\btX = [\bX \, \, \, \bO] \boldsymbol{\Gamma} = [
\xtb_1, \dots, \xtb_{L}
] \in \Re^{D \times L},
\end{align} 
where the $N$ inliers $\bX = [\x_1, \dots, \x_N] \in \Re^{D \times N}$ lie in the intersection of the unit sphere $\Sp^{D-1}$ with an unknown proper subspace ${\cS}$ of $\Re^D$ of unknown dimension $1 \le d \le D-1$, and the $M$ outliers $\bO = [\o_1, \dots, \o_M] \in \Re^{D \times M}$ lie on the sphere $\Sp^{D-1}$. The unknown permutation $\boldsymbol{\Gamma}$ indicates that we do not know which point is an inlier and which point is an outlier. Finally, we assume that the points $\btX$ are in \emph{general position}, in the sense that there are no relations among the columns of $\btX$ except for those implied by the inclusions $\bX \subset \cS$ and $\btX \subset \Re^D$. In particular, every $D$-tuple of columns of $\btX$ such that at most $d$ points come from $\bX$ is linearly independent. Notice that as a consequence every $d$-tuple of inliers and every $D$-tuple of outliers are linearly independent, and also $\bX \cap \bO = \emptyset$. Finally, to avoid degenerate situations we will assume that $N \ge d+1$ and $M \ge D-d$.\footnote{If the number of outliers is less than $D-d$, then the entire dataset is degenerate because it lies in a proper hyperplane of the ambient space, hence we can reduce the coordinate representation of the data and eventually satisfy the stated condition.} 

\subsection{Conceptual formulation} \label{subsection:ConceptualFormulation}

Given $\btX$, we consider the problem of partitioning its columns into those that lie in $\cS$ and those that don't. Since we have made no assumption about the dimension of $\cS$, this problem is however not well posed because $\cS$ can be anything from a line to a $(D-1)$-dimensional hyperplane, and hence $\bX$ lies inside every subspace that contains $\cS$, which in turn may contain some elements of $\bO$. Instead, it is meaningful to search for a linear subspace of $\Re^D$ that contains all of the inliers and perhaps a few outliers. Since we do not know the intrinsic dimension $d$ of the inliers, a natural choice is to search for a hyperplane of $\Re^D$ that contains all the inliers.

\begin{prb} \label{prp:FriendlyHyperplane}
	Given the dataset $\btX = \left[
	\bX \, \, \,  \bO
	\right] \boldsymbol{\Gamma}$, find a hyperplane $\H$ that contains all the inliers $\bX$. 
\end{prb} 

Notice that hyperplanes that contain all the inliers always exist: any non-zero vector $\b$ in the orthogonal complement $\cS^\perp$ of the linear subspace $\cS$ associated with the inliers defines a hyperplane (with normal vector $\b$) that contains all inliers $\bX$. Having such a hyperplane $\H_1$ at our disposal, we can partition our dataset as $\btX = \btX_1 \cup \btX_2$, where $\btX_1$ are the points of $\btX$ that lie in $\H_1$ and $\btX_2$ are the remaining points. Then by definition of $\H_1$, we know that $\btX_2$ will consist purely of outliers, in which case we can safely replace our original dataset $\btX$ with $\btX_1$ and reconsider the problem of robust PCA on $\btX_1$. We emphasize that $\btX_1$ will contain all the inliers $\bX$ together with at most $D-d-1$ outliers,\footnote{This comes from the assumption of general position.} a number which may be dramatically smaller than the original number of outliers. Then one may apply existing methods such as
\cite{Xu:TIT12}, \cite{Soltanolkotabi:AS12} or \cite{RANSAC} to finish the task of identifying the remaining outliers, as the following example demonstrates.

\begin{example} Suppose we have $N=1000$ inliers lying in general position in a linear subspace of $\Re^{100}$ of dimension $d=90$. Suppose that the dataset is corrupted by $M=1000$ outliers lying in general position in $\Re^{100}$. Let $\H$ be a hyperplane that contains all $1000$ inliers. Since the dimensionality of the inliers is $90$ and the dimensionality of the hyperplane is $99$, there are only $99-90=9$ linearly independent directions left for the hyperplane to fit, i.e., $\H$ will contain at most $9$ outliers (it can not contain more outliers since this would violate the general position hypothesis). If we remove the points of the dataset that do not lie in $\H$, then we are left with $1000$ inliers and at most~$9$~outliers. A simple application of RANSAC is expected to identify the remaining outliers in only a few trials.
\end{example}

Alternatively, if the true dimension $d$ is known, one may keep working with the entire dataset $\btX$ (i.e., no point removal takes place) and search for a second hyperplane $\H_2$ that contains all the inliers, such that its normal vector $\b_2$ is linearly independent (e.g., orthogonal) from the normal vector $\b_1$ of $\H_1$. Then $\H_1 \cap \H_2$ is a linear subspace of dimension $D-2$ that contains all the inliers $\bX$, and as a consequence $\Span(\bX) = \cS \subset \H_1 \cap \H_2$. Then a third hyperplane $\H_3 \supset \bX$ may be sought for, such that its normal vector $\b_3$ is not in $\Span(\b_1,\b_2)$, and so on. Repeating this process $c=\codim {\cS} = D-d$ times, until $c$ linearly independent hyperplanes\footnote{By the hyperplanes being linearly independent we mean that their normal vectors are linearly independent.} $\H_1,\dots,\H_c$ have been found, each containing $\bX$, we arrive at a situation where $\bigcap_{k=1}^c \H_k$ is a subspace of dimension $d=\dim{\cS}$ that contains $\cS$ and thus it must be the case that $\bigcap_{k=1}^c \H_k = {\cS}$. Hence we may declare a point to be an inlier if and only if the point lies in the intersection of these $c$ hyperplanes.

\subsection{Hyperplane pursuit by $\ell_1$ minimization} \label{subsection:ComputationalFormulation}
In this section we propose an optimization framework for the computation of a hyperplane that solves Problem \ref{prp:FriendlyHyperplane}, i.e., a hyperplane that contains all the inliers. To proceed, we need a definition.

\begin{dfn}
	A hyperplane $\H$ of $\Re^D$ is called maximal with respect to the dataset $\btX$, if it contains a maximal number of data points in $\btX$, i.e., if for any other hyperplane $\H^\dagger$ of $\Re^D$ we have that $\card(\btX \cap \H) \ge \card(\btX \cap \H^\dagger)$.
\end{dfn} In principle, hyperplanes that are maximal with respect to $\btX$, always solve Problem \ref{prp:FriendlyHyperplane}, as the next proposition shows (see \S \ref{subsection:ProofMaximalFriendlyPlanes} for the proof).

\begin{prp} \label{prp:MaximalFriendlyPlanes}
	Suppose that $N \ge d+1$ and $M \ge D-d$, and let $\H$ be a hyperplane that is maximal with respect to the dataset $\btX$. Then $\H$ contains all the inliers $\bX$. 
\end{prp}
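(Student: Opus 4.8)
The plan is to turn the word ``maximal'' into a counting inequality and to compare the number of data points captured by two kinds of hyperplanes: those whose normal lies in $\cS^\perp$, and those whose normal does not. First I would record the dictionary that drives everything. Since the $N \ge d+1$ inliers are in general position inside $\cS$, any $d$ of them are linearly independent (extend such a $d$-tuple by $D-d$ outliers, which exist as $M \ge D-d$, to a $D$-tuple with at most $d$ inliers, hence independent), so $\Span(\bX) = \cS$. Consequently a hyperplane $\H = \Span(\b)^\perp$ contains all of $\bX$ if and only if $\cS \subseteq \H$ if and only if $\b \in \cS^\perp$. The proposition thus reduces to showing that a maximal hyperplane cannot have $\b \notin \cS^\perp$.

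Next I would exhibit one hyperplane to set the bar. Pick any $d$ inliers together with any $D-d-1$ outliers; this is a $(D-1)$-tuple with exactly $d$ (hence at most $d$) inliers, so by general position — again after extending by one more outlier, available since $M \ge D-d$ — these $D-1$ vectors are linearly independent and span a hyperplane $\H_0$. Because $\H_0$ contains $d$ independent inliers it contains $\cS$, hence all $N$ inliers, whence $\card(\btX \cap \H_0) \ge N + (D-d-1) \ge (d+1)+(D-d-1) = D$.

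Then I would cap the competition by proving that any hyperplane $\H$ with normal $\b \notin \cS^\perp$ satisfies $\card(\btX \cap \H) \le D-1$. Indeed $\cS \cap \H$ is $(d-1)$-dimensional, so by general position of the inliers inside $\cS$ it contains at most $d-1$ inliers; and if $\H$ held $D$ points, any $D$ of them would be linearly dependent (they all lie in the $(D-1)$-dimensional subspace $\H$), whereas general position forces every linearly dependent $D$-tuple to contain at least $d+1$ inliers — impossible when only $d-1$ inliers are available in $\H$. Comparing the two estimates and using $D > D-1$, a maximal hyperplane must satisfy $\b \in \cS^\perp$ and therefore contain all inliers.

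I expect the main obstacle to be resisting the naive bound ``an inlier-containing hyperplane has at least $N$ points,'' which is useless here since $N$ may be as small as $d+1$, possibly below the $D-1$ ceiling of the competing hyperplanes. The real content is to notice that an inlier-containing hyperplane also has room to absorb exactly $D-d-1$ outliers, lifting its count to at least $D$ and thereby clearing that ceiling; the only auxiliary technical point is the routine extension argument that propagates the general-position hypothesis from $D$-tuples down to the $(D-1)$-tuples and $d$-tuples used above.
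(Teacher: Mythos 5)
Your proof is correct and follows essentially the same route as the paper's: exhibit the hyperplane spanned by $d$ inliers and $D-d-1$ outliers to show a maximal hyperplane must capture at least $D$ points, and observe that any hyperplane not containing $\bX$ captures at most $D-1$. The only difference is that you spell out in detail the ``at most $D-1$'' bound (via the $(d-1)$-dimensional intersection $\cS \cap \H$ and the dependent-$D$-tuple argument), which the paper asserts directly as a consequence of general position.
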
 In view of Proposition \ref{prp:MaximalFriendlyPlanes}, we may restrict our search for hyperplanes that contain all the inliers $\bX$ to the subset of hyperplanes that are maximal with respect to the dataset $\btX$. 
The advantage of this approach is immediate: the set of hyperplanes that are maximal with respect to $\btX$ is in principle computable, since it is precisely the set of solutions of the following optimization problem 
\begin{align}
\min_{\b} \, \big \|\btX^\transpose \b \big \|_0 \, \, \, \text{s.t.} \, \, \, \b \neq 0. \label{eq:ell0}
\end{align} 
The idea behind \eqref{eq:ell0} is that a hyperplane $\H = \Span(\b)^\perp$ contains a maximal number of columns of $\btX$ if and only if its normal vector $\b$ has a maximal \emph{cosparsity} level with respect to the matrix $\btX^\transpose$, i.e., the number of non-zero entries of $\btX^\transpose \b$ is minimal.
Since \eqref{eq:ell0} is a combinatorial problem admitting no efficient solution, we consider its natural relaxation
\begin{align}
\min_{\b} \, \big \|\btX^\transpose \b \big\|_1 \, \, \, \text{s.t.} \, \, \, \big \|\b\big \|_2 = 1, 
\label{eq:ell1}
\end{align}  
which in our context we will be referring to as \emph{Dual Principal Component Pursuit} (DPCP). A major question that arises, to be answered in Theorem \ref{thm:DiscreteNonConvex}, is under what conditions every global solution of \eqref{eq:ell1} is orthogonal to the inlier subspace $\Span(\bX)$. A second major question, raised by the non-convexity of the constraint $\b \in \Sp^{D-1}$, is how to efficiently solve
\eqref{eq:ell1} with theoretical guarantees.

We emphasize here that the optimization problem \eqref{eq:ell1} is far from new; interestingly, its earliest appearance in the literature that we are aware of is in \cite{Spath:Numerische87}, where the authors proposed to solve it by means of the recursion of convex problems given by\footnote{Being unaware of the work of \cite{Spath:Numerische87}, we independently proposed the same recursion in \citep{Tsakiris:DPCPICCV15}.}
\begin{align}
\bn_{k+1} := \argmin_{\b^\transpose \hat{\bn}_k=1} \big \|\btX^\transpose \b \big\|_1. 
\label{eq:ConvexRelaxations}
\end{align} 
Notice that at each iteration of \eqref{eq:ConvexRelaxations} the problem that is solved is computationally equivalent to a linear program; this makes the recursion \eqref{eq:ConvexRelaxations} a very appealing candidate for solving 
the non-convex \eqref{eq:ell1}. Even though \cite{Spath:Numerische87} proved the very interesting result that \eqref{eq:ConvexRelaxations} converges to a critical point of \eqref{eq:ell1} in a finite number of steps (see Appendix \ref{appendix:Spath}), there is no reason to believe that in general \eqref{eq:ConvexRelaxations} converges to a global minimum of \eqref{eq:ell1}.

 Other works in which optimization problem \eqref{eq:ell1} appears are \cite{spielman2013exact,Qu:NIPS14,Sun:ICML15,Sun:SampTA15,Sun:CompleteIarXiv15,Sun:CompleteIIarXiv15}. More specifically, \cite{spielman2013exact} propose to solve \eqref{eq:ell1} by replacing the quadratic constraint $\b^\transpose \b=1$ with a linear constraint $\b^\transpose \w =1$ for some vector $\w$. In \cite{Qu:NIPS14,Sun:CompleteIarXiv15} \eqref{eq:ell1} is approximately solved by alternating minimization, while a Riemannian trust-region approach is employed in \cite{Sun:CompleteIIarXiv15}. Finally, we note that problem \eqref{eq:ell1} is closely related to the non-convex problem \eqref{eq:ReaperNonConvex} associated with REAPER. To see this, suppose that the REAPER orthoprojector $\boldsymbol{\Pi}$ appearing in \eqref{eq:ReaperNonConvex}, represents the orthogonal projection  to a hyperplane $\U$ with unit-$\ell_2$ normal vector $\b$. In such a case $\bI_D - \boldsymbol{\Pi} = \b \b^\transpose$ and it readily follows that problem \eqref{eq:ReaperNonConvex} becomes identical to problem \eqref{eq:ell1}. 


\section{Dual Principal Component Pursuit Theory} 
\label{section:TheoreticalContributions}
In this section we establish our analysis framework and discuss our main theoretical results regarding the global optimum of the non-convex problem \eqref{eq:ell1} as well as the recursion of convex relaxations in \eqref{eq:ConvexRelaxations}. We begin our theoretical investigation in \S \ref{subsection:Continuous} by establishing a connection between the \emph{discrete} problems \eqref{eq:ell1} and \eqref{eq:ConvexRelaxations} and certain underlying \emph{continuous} problems. The continuous problems do not depend on a finite set of inliers and outliers, rather on uniform distributions on the respective inlier and outlier spaces, and as such, are easier to analyze. The analysis of Theorems \ref{thm:ContinuousNonConvex} and \ref{thm:ConvexRelaxationsContinuous} reveals that the optimal solutions of the continuous analogue of \eqref{eq:ell1} are orthogonal to the inlier space, and that the solutions of the continuous recursion corresponding to \eqref{eq:ConvexRelaxations} converge to a normal vector to the inlier space, respectively.
This suggests that under certain conditions on the distribution of the data, the same must be true for the \emph{discrete} problem \eqref{eq:ell1} and the \emph{discrete} recursion \eqref{eq:ConvexRelaxations}, where the adjective \emph{discrete} refers to the fact that these problems depend on a finite set of points. Our analysis of the discrete problems is inspired by the analysis of their continuous counterparts and the link between the two is formally captured through certain \emph{discrepancy} bounds that we introduce in \S \ref{subsection:DiscrepancyBounds}. In turn, these allow us to prove conditions under which we can characterize the global optimal of problem \eqref{eq:ell1} as well as the convergence of recursion \eqref{eq:ConvexRelaxations}; this is done in \S \ref{subsection:ConditionsDiscrete} and the main results are Theorems \ref{thm:DiscreteNonConvex} and \ref{thm:DiscreteConvexRelaxations}, which are analogues of Theorems \ref{thm:ContinuousNonConvex} and \ref{thm:ConvexRelaxationsContinuous}.
These theorems suggest that both \eqref{eq:ell1} and \eqref{eq:ConvexRelaxations} are natural formulations for computing the orthogonal complement of a linear subspace in the presence of outliers. The proofs of all theorems as well as intermediate results are deferred to \S \ref{section:Proofs}.

\subsection{Formulation and theoretical analysis of the underlying continuous problems} \label{subsection:Continuous}  

In this section we show that the problems of interest \eqref{eq:ell1} and
\eqref{eq:ConvexRelaxations} can be viewed as discrete versions of certain continuous problems, which are easier to analyze. To begin with, consider given
outliers $\bO = [\o_1,\dots,\o_M] \subset \Sp^{D-1}$ and inliers $\bX=[\x_1,\dots,\x_N] \subset \cS \cap \Sp^{D-1}$, and recall the notation $\btX = [\bX \, \bO] \boldsymbol{\Gamma}$, where $\boldsymbol{\Gamma}$ is an unknown permutation. Next, for any $\b \in \Sp^{D-1}$ define
the function $f_{\b}: \Sp^{D-1} \rightarrow \Re$ by $f_{\b}(\z) = \left| \b^\transpose \z \right|$.
Define also \emph{discrete} measures $\mu_{\bO}$ and $\mu_{\bX}$ on $\Sp^{D-1}$ associated with the outliers and inliers respectively, as
\begin{align}
\mu_{\bO}(\z) = \frac{1}{M} \sum_{j=1}^M \delta(\z - \o_j) \, \, \, \text{and} \, \, \,  \mu_{\bX}(\z)=  \frac{1}{N} \sum_{j=1}^N \delta(\z - \x_j), \label{eq:DiscreteMeasures}
\end{align} 
where $\delta(\cdot)$ is the Dirac function on $\Sp^{D-1}$, satisfying
\begin{align} 
\int_{\z \in \Sp^{D-1}} g(\z) \delta(\z - \z_0) d \mu_{\Sp^{D-1}} = g(\z_0),
\end{align} for every $g: \Sp^{D-1} \rightarrow \Re$ and every $\z_0 \in \Sp^{D-1}$;   $\mu_{\Sp^{D-1}}$ is the uniform measure on $\Sp^{D-1}$.
 
With these definitions, we have that the objective
function $\big \| \btX^\transpose \b \big\|_1$ appearing in \eqref{eq:ell1} and \eqref{eq:ConvexRelaxations} is the sum of the weighted expectations of the function
$f_{\b}$ under the measures $\mu_{\bO}$ and $\mu_{\bX}$, i.e.,
\begin{align}
\big\| \btX^\transpose \b \big\|_1 &= \big\| \bO^\transpose \b \big\|_1+
\big\| \bX^\transpose \b \big\|_1 = \sum_{j=1}^M \big| \b^\transpose \o_j \big| +  \sum_{j=1}^N \big| \b^\transpose \x_j \big| \\
&=\sum_{j=1}^ M \int_{\z \in \Sp^{D-1}} \big| \b^\transpose \z \big| \delta(\z - \o_j) d \mu_{\Sp^{D-1}} + \sum_{j=1}^ N \int_{\z \in \Sp^{D-1}} \big| \b^\transpose \z \big| \delta(\z - \x_j) d \mu_{\Sp^{D-1}} \\
&= \int_{\z \in \Sp^{D-1}} \big | \b^\transpose \z \big| \sum_{j=1}^ M \delta(\z - \o_j)  d \mu_{\Sp^{D-1}} + 
\int_{\z \in \Sp^{D-1}} \big | \b^\transpose \z \big | \sum_{j=1}^ N \delta(\z - \x_j)  d \mu_{\Sp^{D-1}}  \\
&=  M \, \mathbb{E}_{\mu_{\bO}}(f_{\b}) + N \, \mathbb{E}_{\mu_{\bX}}(f_{\b}).
\end{align} 
Hence, the optimization problem \eqref{eq:ell1}, which we repeat here for convenience,
\begin{align}
\min_{\b} \,  \big\| \btX^\transpose \b \big\|_1 \, \, \, \text{s.t.} \, \, \, \b^\transpose \b = 1 \label{eq:ell1_repeated},
\end{align} is equivalent to the problem 
\begin{align}
\min_{\b} \, \left[M \,  \mathbb{E}_{\mu_{\bO}}(f_{\b}) + N \, \mathbb{E}_{\mu_{\bX}}(f_{\b}) \right]
 \, \, \, \text{s.t.} \, \, \, \b^\transpose \b = 1. \label{eq:ell1_DiscreteMeasure}
\end{align} Similarly, the recursion \eqref{eq:ConvexRelaxations}, repeated here for convenience,
\begin{align}
\bn_{k+1} = \argmin_{\b} \,  \big \| \btX^\transpose \b \big\|_1 \, \, \, \text{s.t.} \, \, \, \b^\transpose \hat{\bn}_k = 1, \label{eq:ConvexRelaxations_repeated}
\end{align} 
is equivalent to the recursion
\begin{align}
\bn_{k+1} = \argmin_{\b} \,  \left [ M \, \mathbb{E}_{\mu_{\bO}}(f_{\b}) + N \, \mathbb{E}_{\mu_{\bX}}(f_{\b}) \right]
 \, \, \, \text{s.t.} \, \, \, \b^\transpose \hat{\bn}_k = 1. \label{eq:ConvexRelaxations_DiscreteMeasure}
\end{align}
Now, the discrete measures $\mu_{\bO}, \mu_{\bX}$ of \eqref{eq:DiscreteMeasures},
are discretizations of the continuous measures $\mu_{\Sp^{D-1}}$, and $\mu_{\Sp^{D-1} \cap \cS}$ respectively, where the latter is the uniform measure on $\Sp^{D-1} \cap \cS$.
Hence, for the purpose of 
understanding the properties of the global minimizer of \eqref{eq:ell1_DiscreteMeasure} and the limiting point of \eqref{eq:ConvexRelaxations_DiscreteMeasure}, it is meaningful to 
replace in \eqref{eq:ell1_DiscreteMeasure} and \eqref{eq:ConvexRelaxations_DiscreteMeasure} the discrete measures $\mu_{\bO}$ and $\mu_{\bX}$ by their continuous counterparts $\mu_{\Sp^{D-1}}$ and $\mu_{\Sp^{D-1} \cap \cS}$, and study the resulting \emph{continuous} problems
\begin{align}
& \min_{\b} \, \left[M \, \mathbb{E}_{\mu_{\Sp^{D-1}}}(f_{\b}) + N \, \mathbb{E}_{\mu_{\Sp^{D-1} \cap \cS}}(f_{\b}) \right] \, \, \, \text{s.t.} \, \, \, \b^\transpose \b = 1, \label{eq:ell1_ContinuousMeasure} \\
& \bn_{k+1} = \argmin_{\b} \, \left[M \, \mathbb{E}_{\mu_{\Sp^{D-1}}}(f_{\b}) + N \, \mathbb{E}_{\mu_{\Sp^{D-1} \cap \cS}}(f_{\b}) \right]
 \, \, \, \text{s.t.} \, \, \, \b^\transpose \hat{\bn}_k = 1. \label{eq:ConvexRelaxations_ContinuousMeasure}
\end{align} 
It is important to note that if these two continuous problems have the geometric properties of interest, i.e., if every global solution of \eqref{eq:ell1_ContinuousMeasure} is a vector orthogonal to the inlier subspace, and similarly, if the sequence of vectors $\left\{ \bn_k \right\}$ produced by \eqref{eq:ConvexRelaxations_ContinuousMeasure} converges to a vector $\bn_{k^*}$ orthogonal to the inlier subspace, then this \emph{correctness} of the 
continuous problems can be viewed as a first theoretical verification of the correctness of the \emph{discrete} formulations \eqref{eq:ell1} and \eqref{eq:ConvexRelaxations}. The objective of the rest of this section is to establish that this is precisely the case. 

Before discussing our main two results in this direction, we note that 
the continuous objective function appearing in \eqref{eq:ell1_ContinuousMeasure} and \eqref{eq:ConvexRelaxations_ContinuousMeasure} can be re-written in a more suggestive form.
To see what that is, define $c_D$ as the average height of the unit hemisphere of $\Re^D$, directly computed as
\begin{align}
c_D &:= \int_{\z \in \Sp^{D-1}} |z_1| d\mu_{\Sp^{D-1}}= \frac{(D-2)!!}{(D-1)!!} \cdot 
\left\{
\begin{array}{ll}
\frac{2}{\pi} &  \text{if} \, \, \,    D \, \, \,  \mbox{even}, \\
1 & \text{if} \, \, \,  D \, \, \,  \text{odd},
\end{array}
\right. \label{eq:cD}
\end{align} where $z_1$ is the first coordinate of the vector $\z$, and the double factorial is defined as
\begin{align}
k!! :=\left\{
\begin{array}{ll}
k(k-2)(k-4)\cdots 4 \cdot 2 &  \text{if} \, \, \,    k \, \, \,  \mbox{even}, \\
k(k-2)(k-4)\cdots 3 \cdot 1 & \text{if} \, \, \,  k \, \, \,  \text{odd}.
\end{array}
\right.
\end{align} 
Then we have the following result, whose proof can be found in \S \ref{subsection:ProofContinuousForm}.

\begin{prp} \label{prp:ContinuousForm} 
The objective function of the continuous problem \eqref{eq:ell1_ContinuousMeasure} can be rewritten as:
\begin{align}
M \, \mathbb{E}_{\mu_{\Sp^{D-1}}}(f_{\b}) + N \, \mathbb{E}_{\mu_{\Sp^{D-1} \cap \cS}}(f_{\b}) = \big \|\b\big \|_2 \left(M c_D + N c_d \cos(\phi) \right), \label{eq:ContinuousObjective_DPCP}
\end{align} 
where $\phi$ is the principal angle between $\b$ and the subspace $\cS$.
\end{prp}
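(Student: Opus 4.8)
The plan is to evaluate the two expectations separately, exploiting the homogeneity of $f_{\b}$ in $\b$ and the rotational symmetry of the two underlying measures, and to reduce each to a one-dimensional integral that produces the constant $c_D$ (respectively $c_d$). Since $f_{\b}(\z) = |\b^\transpose \z| = \|\b\|_2 \, |\hat{\b}^\transpose \z| = \|\b\|_2 \, f_{\hat{\b}}(\z)$, the factor $\|\b\|_2$ pulls out of both expectations immediately, so it suffices to treat unit-norm directions and restore $\|\b\|_2$ at the end.

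First I would handle the outlier term $M\,\mathbb{E}_{\mu_{\Sp^{D-1}}}(f_{\b})$. Because $\mu_{\Sp^{D-1}}$ is invariant under the orthogonal group, the quantity $\mathbb{E}_{\z}|\u^\transpose \z|$ is the same for every unit vector $\u$; choosing $\u = \e_1$ it equals the average of $|z_1|$ over $\Sp^{D-1}$, which is exactly the average height of the unit hemisphere, i.e. $c_D$. I would make this explicit by passing to polar coordinates, writing $z_1 = \cos\theta$ with the polar angle $\theta$ distributed with density proportional to $\sin^{D-2}\theta$, so that
\[
\mathbb{E}_{\mu_{\Sp^{D-1}}}\!\big(|z_1|\big) = \frac{\int_0^{\pi} |\cos\theta|\,\sin^{D-2}\theta \, d\theta}{\int_0^{\pi} \sin^{D-2}\theta \, d\theta}.
\]
The numerator integrates in closed form to $2/(D-1)$ and the denominator is a Wallis integral; simplifying the resulting ratio of Gamma functions into double factorials reproduces the formula \eqref{eq:cD} for $c_D$. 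Hence $M\,\mathbb{E}_{\mu_{\Sp^{D-1}}}(f_{\b}) = \|\b\|_2 \, M c_D$.

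Next I would handle the inlier term $N\,\mathbb{E}_{\mu_{\Sp^{D-1}\cap\cS}}(f_{\b})$. The key observation is that the measure is supported on $\cS$, and for every $\z \in \cS$ we have $\b^\transpose \z = \pi_{\cS}(\b)^\transpose \z$, since the orthogonal component of $\b$ (living in $\cS^\perp$) is annihilated by $\z$. Thus the integral only sees the projection $\pi_{\cS}(\b)$, and, identifying $\cS \cong \Re^d$ and $\Sp^{D-1}\cap\cS \cong \Sp^{d-1}$, the same rotational-symmetry computation as above — now carried out in dimension $d$ — yields $\mathbb{E}_{\mu_{\Sp^{D-1}\cap\cS}}(f_{\b}) = \|\pi_{\cS}(\b)\|_2 \, c_d$. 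It then remains to rewrite $\|\pi_{\cS}(\b)\|_2$ in terms of $\phi$: by the definition of the principal angle of $\b$ from $\cS$ as $\angle \b, \pi_{\cS}(\b)$, the right triangle with legs $\pi_{\cS}(\b)$ and $\b - \pi_{\cS}(\b)$ gives $\|\pi_{\cS}(\b)\|_2 = \|\b\|_2 \cos\phi$. Adding the two terms then produces $\|\b\|_2\big(M c_D + N c_d \cos\phi\big)$, as claimed.

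The genuinely computational step — and the only real obstacle — is evaluating the one-dimensional integral and matching the ratio of Wallis integrals to the double-factorial expression in \eqref{eq:cD}; everything else is symmetry plus the elementary projection identity. One subtlety worth checking is the boundary cases $\b \in \cS^\perp$ (where $\pi_{\cS}(\b)=\0$ and $\cos\phi = 0$) and $\b \in \cS$ (where $\phi = 0$), but both are handled automatically, since the projection identity and the relation $\|\pi_{\cS}(\b)\|_2 = \|\b\|_2\cos\phi$ remain valid there.
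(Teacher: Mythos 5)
Your proposal is correct and follows essentially the same route as the paper's proof: pull out $\|\b\|_2$ by homogeneity, use rotational invariance of $\mu_{\Sp^{D-1}}$ to reduce the outlier term to the average of $|z_1|$, and for the inlier term replace $\b$ by $\pi_{\cS}(\b)$ (since the measure is supported on $\cS$), identify $\Sp^{D-1}\cap\cS$ with $\Sp^{d-1}$, and use $\|\pi_{\cS}(\b)\|_2=\|\b\|_2\cos\phi$. The only difference is that you additionally verify via the Wallis integral that the spherical average of $|z_1|$ matches the double-factorial formula \eqref{eq:cD}, a computation the paper leaves implicit in its definition of $c_D$ as the average height of the unit hemisphere; your evaluation of that ratio is correct in both parity cases.
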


As a consequence of this result, when $\b \in \Sp^{D-1}$, the first term (the outlier term) of the objective function becomes a constant ($Mc_D$) and hence the outliers do not affect the optimal solution of \eqref{eq:ell1_ContinuousMeasure}. Moreover, the second term (the inlier term) of the objective function depends only on the cosine of the principal angle between $\b$ and the subspace, which is minimized when $\b$ is orthogonal to the subspace ($\phi = \pi/2$). This leads to the following result about the continuous problem, whose proof can be found in \S \ref{subsection:Proof-thm-ContinuousNonConvex}.
 

\begin{thm} \label{thm:ContinuousNonConvex}
Any global solution to problem \eqref{eq:ell1_ContinuousMeasure}
must be orthogonal to $\cS$.
\end{thm}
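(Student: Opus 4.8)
The plan is to reduce the statement to an elementary one-variable minimization by invoking Proposition \ref{prp:ContinuousForm}. Since the feasible set is the sphere $\b^\transpose \b = 1$, we have $\big\|\b\big\|_2 = 1$, and the closed-form expression \eqref{eq:ContinuousObjective_DPCP} collapses to
\begin{align*}
J(\b) = M c_D + N c_d \cos(\phi),
\end{align*}
where $\phi \in \left[0 \, \, 90^\circ\right]$ is the principal angle of $\b$ from $\cS$. The term $M c_D$ is a fixed constant independent of $\b$ (so, as already noted after Proposition \ref{prp:ContinuousForm}, the outliers do not influence the minimizer), and minimizing $J$ over the sphere is therefore equivalent to minimizing the single term $N c_d \cos(\phi)$.

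The next step is to record that the coefficient $N c_d$ is strictly positive. Indeed $N \ge d+1 > 0$ by the data model, and $c_d > 0$ because the double-factorial ratio appearing in \eqref{eq:cD} is positive for every $d \ge 1$ (it is, up to the constant prefactor, the average height of a hemisphere, which is manifestly positive). Consequently $J$ is a strictly increasing affine function of $\cos(\phi)$, and since $\cos$ is strictly decreasing on $\left[0 \, \, 90^\circ\right]$, minimizing $J$ is the same as driving $\phi$ to its largest feasible value, i.e. making $\cos(\phi)$ as small as possible.

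The final step is to observe that the value $\cos(\phi) = 0$ is actually attained on the feasible set, so the minimum is genuine and not merely an infimum. Because $\cS$ is a proper subspace ($d \le D-1$), its orthogonal complement $\cS^\perp$ is nontrivial and contains unit vectors; for any such $\b \in \cS^\perp \cap \Sp^{D-1}$ we have $\pi_{\cS}(\b) = \0$, whence $\phi = 90^\circ$ and $\cos(\phi) = 0$. Conversely, any $\b$ not orthogonal to $\cS$ satisfies $\pi_{\cS}(\b) \neq \0$, hence $\cos(\phi) > 0$, and since $N c_d > 0$ this gives a strictly larger objective value $J(\b) > M c_D$. Therefore the global minimum of $J$ on the sphere equals $M c_D$ and is attained precisely on $\cS^\perp \cap \Sp^{D-1}$, which shows that every global solution of \eqref{eq:ell1_ContinuousMeasure} is orthogonal to $\cS$.

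There is essentially no hard step once Proposition \ref{prp:ContinuousForm} is available; the only points requiring a moment's care are the strict positivity of $N c_d$ (which rules out any nonorthogonal minimizer) and the existence of feasible orthogonal vectors (which guarantees the value $\cos(\phi)=0$ is achieved). Both follow immediately from the standing assumptions $1 \le d \le D-1$ and $N \ge d+1$.
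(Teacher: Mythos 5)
Your proposal is correct and follows exactly the paper's own route: reduce via Proposition \ref{prp:ContinuousForm} to minimizing $M c_D + N c_d \cos(\phi)$ over the sphere and observe that the minimum $M c_D$ is attained precisely when $\phi = \pi/2$. You simply spell out the two small points (strict positivity of $N c_d$ and attainability of $\phi = \pi/2$ since $\cS$ is proper) that the paper leaves implicit.
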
 
Observe that this result is true irrespective of the weight $M$ of the outlier term or the weight $N$ of the inlier term in the continuous objective function \eqref{eq:ContinuousObjective_DPCP}. 
Similarly, the next result, whose proof can be
found in \S \ref{subsection:Proof-thm-ConvexRelaxationsContinuous}, shows that the solutions to the continuous recursion in \eqref{eq:ConvexRelaxations_ContinuousMeasure} converge to a vector orthogonal to the inlier subspace in a finite number of steps, regardless of the outlier and inlier weights $M,N$.

\begin{thm} \label{thm:ConvexRelaxationsContinuous}
	Consider the sequence $\left\{\bn_k\right\}_{k \ge 0}$
	generated by recursion \eqref{eq:ConvexRelaxations_ContinuousMeasure}, with
	$\hat{\bn}_0 \in \Sp^{D-1}$. Let $\phi_0$
	be the principal angle of $\bn_0$ from $\cS$, and define
$\alpha:=Nc_d / M c_D$. Then, as long as $\phi_0>0$, the sequence $\left\{\bn_k\right\}_{k \ge 0}$ converges to a unit $\ell_2$-norm element of $\cS^\perp$ in a finite number $k^*$ of iterations, where $k^*=0$ if $\phi_0 = \pi/2$,  $k^* =1$ if  $\tan(\phi_0) \ge 1/\alpha$, and $k^* \le \ceil[\bigg]{\frac{\tan^{-1}(1/\alpha)-\phi_0}{\sin^{-1}(\alpha \sin(\phi_0))}} +1$ otherwise. 
\end{thm}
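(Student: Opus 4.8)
The plan is to collapse the entire vector recursion into a scalar recursion on $\phi_k$, the principal angle of $\bn_k$ from $\cS$, and then analyze the resulting one-dimensional sequence. First I would rewrite the objective transparently: since the principal angle of $\b$ from $\cS$ satisfies $\cos\phi = \big\|\pi_{\cS}(\b)\big\|_2/\big\|\b\big\|_2$, Proposition \ref{prp:ContinuousForm} shows the continuous objective equals $M c_D \big\|\b\big\|_2 + N c_d \big\|\pi_{\cS}(\b)\big\|_2$, so the $k$-th subproblem of \eqref{eq:ConvexRelaxations_ContinuousMeasure} is
\[
\min_{\b} \; M c_D \big\|\b\big\|_2 + N c_d \big\|\pi_{\cS}(\b)\big\|_2 \quad \text{s.t.} \quad \b^\transpose \hat{\bn}_k = 1.
\]
Assuming inductively that $\phi_k \in (0,\pi/2)$ (true at $k=0$ by hypothesis, and preserved below), I would write $\hat{\bn}_k = \cos\phi_k\,\u_k + \sin\phi_k\,\v_k$ with unit vectors $\u_k := \widehat{\pi_{\cS}(\bn_k)}\in\cS$ and $\v_k := \widehat{\pi_{\cS^\perp}(\bn_k)}\in\cS^\perp$.

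The first key step is to reduce the subproblem to the plane $\Span(\u_k,\v_k)$. Writing $\b = p\,\u_k + q\,\v_k + \w$ with $\w\perp\u_k,\v_k$ and splitting $\w$ into its $\cS$- and $\cS^\perp$-parts, one sees $\big\|\pi_{\cS}(\b)\big\|_2^2 = p^2 + \|\w_{\cS}\|_2^2$ and $\big\|\b\big\|_2^2 = p^2+q^2+\|\w\|_2^2$, while the constraint reads $p\cos\phi_k + q\sin\phi_k = 1$ and is unaffected by $\w$. Hence setting $\w=0$ strictly decreases the objective, forcing the (unique) minimizer into the plane, where the problem becomes minimization of $M c_D\sqrt{p^2+q^2} + N c_d|p|$ on the line $p\cos\phi_k+q\sin\phi_k=1$. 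Parametrizing the line through its foot of perpendicular as $p = \cos\phi_k - t\sin\phi_k$, $q=\sin\phi_k + t\cos\phi_k$ gives $\sqrt{p^2+q^2}=\sqrt{1+t^2}$, and after dividing by $Mc_D$ I would minimize the strictly convex $h(t) = \sqrt{1+t^2} + \alpha\,|\cos\phi_k - t\sin\phi_k|$. The conceptual crux is that the in-plane minimizer makes angle $\phi_{k+1}=\phi_k+\psi_k$ with $\cS$ for $\tan\psi_k = t$; since $t/\sqrt{1+t^2}=\sin\psi_k$, the stationarity condition $h'(t)=0$ in the region $p\ge 0$ becomes exactly $\sin\psi_k = \alpha\sin\phi_k$.

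Next I would separate two regimes by the sign of the optimal $p$. A short feasibility check shows the interior stationary point has $p\ge 0$ precisely when $\tan\phi_k < 1/\alpha$, giving $\phi_{k+1}=\phi_k + \sin^{-1}(\alpha\sin\phi_k) > \phi_k$; when $\tan\phi_k\ge 1/\alpha$ the function $h$ is monotone on the feasible range and the minimum sits at the boundary $p=0$, i.e.\ $\phi_{k+1}=\pi/2$ and $\bn_{k+1}\in\cS^\perp$. This immediately yields $k^*=1$ when $\tan\phi_0\ge 1/\alpha$, and $k^*=0$ when $\phi_0=\pi/2$. I would also verify $\phi=\pi/2$ is a fixed point: if $\hat{\bn}_k\in\cS^\perp$ the constraint forces $\big\|\b\big\|_2\ge 1$ with equality only at $\b=\hat{\bn}_k$, which is then the unique minimizer, so the iterates remain a unit-$\ell_2$-norm element of $\cS^\perp$.

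For the remaining case $\tan\phi_0 < 1/\alpha$, the update $\phi_{k+1}=\phi_k + \sin^{-1}(\alpha\sin\phi_k)$ is strictly increasing, and because $\sin\phi_k$ increases the increments $\psi_k=\sin^{-1}(\alpha\sin\phi_k)$ are nondecreasing and bounded below by $\psi_0=\sin^{-1}(\alpha\sin\phi_0)$, so $\phi_k \ge \phi_0 + k\psi_0$. Thus within at most $\lceil(\tan^{-1}(1/\alpha)-\phi_0)/\sin^{-1}(\alpha\sin\phi_0)\rceil$ steps an iterate satisfies $\tan\phi_k\ge 1/\alpha$, and one further step sends the angle to $\pi/2$, giving the stated bound on $k^*$. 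The main obstacle I anticipate is the careful case analysis of the scalar problem $h(t)$, namely correctly pinning down the threshold $\tan\phi_k = 1/\alpha$ that separates the interior-minimizer regime from the boundary regime in which orthogonality is reached in a single step, and checking feasibility of the interior stationary point; the reduction to the plane and the clean angular identity $\sin\psi_k=\alpha\sin\phi_k$ are the heart of the argument, after which the convergence count is elementary monotone-sequence bookkeeping.
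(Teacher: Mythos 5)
Your proposal is correct and reaches the paper's conclusion by the same overall strategy: reduce each subproblem to a two-dimensional plane containing $\hat{\bn}_k$, convert it to a one-dimensional problem in the angle $\psi_k$, derive the stationarity condition $\sin\psi_k=\alpha\sin\phi_k$ with threshold $\tan\phi_k\ge 1/\alpha$ separating the interior-minimizer regime from the one-step-to-orthogonality regime, and then count iterations via $\phi_k\ge\phi_0+k\sin^{-1}(\alpha\sin\phi_0)$. The one genuine difference is how the planar reduction is justified. The paper first proves $\phi_{k+1}\ge\phi_k$ by an objective-comparison argument and then invokes the spherical law of cosines: for $\b\notin\Span(\hat{\bn}_k,\hat{\bn}_k^\perp)$ the spherical angle $\Psi_k$ is strictly less than $\pi$, so $\cos(\angle\b,\hat{\h}_k)>\cos(\phi_k+\psi_k)$ and the rotated in-plane competitor $\hat{\bn}_k^\dagger/\cos(\psi_k)$ strictly beats $\b$. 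You instead write $\b=p\,\u_k+q\,\v_k+\w$ with $\w\perp\u_k,\v_k$, observe that the constraint is unaffected by $\w$ while both $\big\|\b\big\|_2$ and $\big\|\pi_{\cS}(\b)\big\|_2$ strictly decrease when $\w$ is dropped, and conclude the minimizer lies in the plane. Your route is more elementary (no spherical trigonometry) and also makes the strict convexity of the resulting scalar function $h(t)$ immediate, which cleanly certifies that the interior stationary point is the global minimizer; the paper instead checks a second-derivative condition. Both arguments deliver the same recursion $\phi_{k+1}=\phi_k+\sin^{-1}(\alpha\sin\phi_k)$ and the same iteration count, and your handling of the boundary cases ($\phi_0=\pi/2$ as a fixed point, $\tan\phi_0\ge 1/\alpha$ giving $k^*=1$) matches the paper's.
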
 Notice the remarkable fact that according to Theorem \ref{thm:ConvexRelaxationsContinuous}, the continuous recursion \eqref{eq:ConvexRelaxations_ContinuousMeasure} converges to a vector orthogonal
to the inlier subspace $\cS$ in a \emph{finite} number of steps. Moreover, if the relation
\begin{align}
\tan(\phi_0) \ge 1/\alpha = \frac{M}{N} \frac{c_D}{c_d}, 
\label{eq:AlphaPhi}
\end{align} 
holds true, then this convergence occurs in a single step. One way to interpret \eqref{eq:AlphaPhi} is to notice that as long as the angle $\phi_0$ of the initial estimate $\hat{\bn}_0$ from the inlier subspace is positive, and for any arbitrary but fixed number of outliers $M$, there is always a sufficiently large number $N$ of inliers, such that \eqref{eq:AlphaPhi} is satisfied and thus convergence occurs in one step. Likewise, condition \eqref{eq:AlphaPhi} can also be satisfied if $d/D$ is sufficiently small (so that $c_D/c_d$ is small). Conversely, for any fixed number of inliers $N$ and outliers $M$, there is always a sufficiently large angle $\phi_0$ such that \eqref{eq:AlphaPhi} is true, and thus \eqref{eq:ConvexRelaxations_ContinuousMeasure} again converges in a single step. 
More generally, even when \eqref{eq:AlphaPhi} is not true, the larger $\phi_0, N$ are, the smaller the quantity
\begin{align}
\ceil[\bigg]{\frac{\tan^{-1}(1/\alpha)-\phi_0}{\sin^{-1}(\alpha \sin(\phi_0))}}
\end{align} is, and thus according to Theorem \ref{thm:ContinuousNonConvex} the faster 
\eqref{eq:ConvexRelaxations_ContinuousMeasure} converges. 
	
\subsection{Discrepancy bounds between the continuous and discrete problems} \label{subsection:DiscrepancyBounds}

The heart of our analysis framework is to bound the deviation of some underlying geometric quantities, which we call the \emph{average outlier} and the \emph{average inlier with respect to} $\b$, from their continuous counterparts. To begin with,
recall our discrete objective function
\begin{align}
\J_{\text{discrete}}(\b) = \big \| \btX^\transpose \b \big \|_1 = \big \| \bO^\transpose \b \big \|_1 + \big \| \bX^\transpose \b \big \|_1 \label{eq:SingleDPCP_Jdiscrete}
\end{align} 
and its continuous counterpart
\begin{align}
\J_{\text{continuous}}(\b)=\big \|\b\big \|_2 \left(M c_D + N c_d \cos(\phi) \right).\label{eq:SingleDPCP_Jcontinuous}
\end{align} 
Now, notice that the term of the discrete objective that depends on the outliers $\bO$ can be written as
\begin{align}
\big \| \bO^\transpose \b \big \|_1 = \sum_{j=1}^M | \o_j^\transpose \b | = \sum_{j=1}^M \b^\transpose \Sign(\o_j^\transpose \b) \o_j = M \, \b^\transpose \,  \o_{\b},
\end{align} 
where $\Sign(\cdot)$ is the sign function and  $\o_b$ is the \emph{average outlier with respect to $\b$}, defined as
\begin{align}
\o_{\b}:= \frac{1}{M} \sum_{j=1}^M \Sign(\b^\transpose \o_j)\o_j . 
\label{eq:AverageOutlier}
\end{align} 
Defining a vector valued function $\f_{\b}: \Sp^{D-1} \rightarrow \Re^D$ by 
$\z \in \Sp^{D-1} \stackrel{\f_{\b}}{\longmapsto} \Sign(\b^\transpose \z) \z$, we notice that
\begin{align}
\o_{\b} = \frac{1}{M} \sum_{j=1}^M \f_{\b}(\o_j) 
= \frac{1}{M} \sum_{j=1}^M \int_{\z \in \Sp^{D-1}} f_{\b}(\z) \delta ( \z - \o_j) d\mu_{S^{D-1}} 
= \int_{\z \in \Sp^{D-1}} f_{\b}(\z) d\mu_{\bO}(\z)
,
\end{align} 
where $\mu_{\bO}(\z)$ is defined in \eqref{eq:DiscreteMeasures}, and so $\o_{\b}$ is a discrete approximation to the continuous integral $\int_{\z \in \Sp^{D-1}} \f_{\b}(\z) d \mu_{\Sp^{D-1}}$, whose value is given by the next Lemma (see \S \ref{subsection:Proof-lem-VectorIntegral} for the proof).

\begin{lem} \label{lem:VectorIntegral}
Recall the definition of $c_D$ in \eqref{eq:cD}. For any $\b \in \Sp^{D-1}$ we have
 \begin{align}
 \int_{\z \in \Sp^{D-1}} \f_{\b}(\z) d \mu_{\Sp^{D-1}}=\int_{\z \in \Sp^{D-1}} \Sign(\b^\transpose \z)\z d \mu_{\Sp^{D-1}} = c_D \, \b.
 \end{align} 
\end{lem}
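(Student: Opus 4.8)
The plan is to exploit the rotational symmetry of the uniform measure about the axis $\b$ to reduce the vector integral to a single scalar, and then to identify that scalar with $c_D$. Write $I := \int_{\z \in \Sp^{D-1}} \Sign(\b^\transpose \z) \z \, d\mu_{\Sp^{D-1}}$ for the vector to be computed. First I would show that $I$ is parallel to $\b$. Fix any unit vector $\v$ orthogonal to $\b$ and let $R$ be the orthogonal reflection that sends $\v \mapsto -\v$ and fixes the orthogonal complement of $\v$; since $\b \perp \v$, this reflection satisfies $R\b = \b$, and because $R$ is symmetric and orthogonal we have $\b^\transpose R = \b^\transpose$ and $\v^\transpose R = -\v^\transpose$. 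As $\mu_{\Sp^{D-1}}$ is invariant under every orthogonal transformation, the change of variables $\z \mapsto R\z$ gives
\begin{align}
\v^\transpose I = \int_{\z \in \Sp^{D-1}} \Sign(\b^\transpose R\z)\,(\v^\transpose R\z)\, d\mu_{\Sp^{D-1}} = -\int_{\z \in \Sp^{D-1}} \Sign(\b^\transpose \z)\,(\v^\transpose \z)\, d\mu_{\Sp^{D-1}} = -\v^\transpose I,
\end{align}
so $\v^\transpose I = 0$. Since this holds for every $\v \perp \b$, the vector $I$ has no component orthogonal to $\b$, hence $I = \lambda \b$ for some scalar $\lambda \in \Re$.

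Next I would pin down $\lambda$ by taking the inner product with $\b$. Using $\b^\transpose \b = 1$ and $\Sign(\b^\transpose\z)(\b^\transpose\z) = |\b^\transpose\z| = f_{\b}(\z)$, I get
\begin{align}
\lambda = \b^\transpose I = \int_{\z \in \Sp^{D-1}} \big| \b^\transpose \z \big| \, d\mu_{\Sp^{D-1}} = \mathbb{E}_{\mu_{\Sp^{D-1}}}(f_{\b}).
\end{align}
By rotational invariance of the uniform measure this quantity does not depend on $\b$, so it equals $\int_{\Sp^{D-1}} |z_1|\, d\mu_{\Sp^{D-1}}$, which is exactly the average value of the height coordinate over the sphere, i.e. the average height of the unit hemisphere that defines $c_D$. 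Thus conceptually $\lambda = c_D$ is immediate from the geometric description of $c_D$.

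It remains to confirm that this scalar agrees with the explicit closed form in \eqref{eq:cD}, which is the only computational part. I would reduce the spherical average to a one-dimensional integral in the polar angle $\theta \in [0,\pi]$, whose induced density is proportional to $\sin^{D-2}\theta$, yielding
\begin{align}
\lambda = \frac{\int_0^\pi |\cos\theta|\,\sin^{D-2}\theta \, d\theta}{\int_0^\pi \sin^{D-2}\theta \, d\theta} = \frac{2/(D-1)}{\int_0^\pi \sin^{D-2}\theta \, d\theta},
\end{align}
where the numerator is evaluated by the elementary substitution $u = \sin\theta$. Writing the normalizing integral through the Beta/Gamma function and then converting $\Gamma$-values to double factorials produces the ratio $(D-2)!!/(D-1)!!$ together with the extra factor $2/\pi$ when $D$ is even and $1$ when $D$ is odd, matching \eqref{eq:cD} exactly. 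The only real idea is the reflection-symmetry argument establishing $I \parallel \b$; the main (but routine) obstacle is the bookkeeping in the last step, namely the even/odd case split and the passage from Gamma functions to double factorials, which I would handle by the standard identities $\Gamma(n+1)=n!$, $\Gamma(1/2)=\sqrt\pi$, and the recurrence $\Gamma(x+1)=x\Gamma(x)$.
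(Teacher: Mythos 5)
Your proposal is correct and follows essentially the same route as the paper: the paper rotates $\b$ to $\e_1$ and observes that the off-axis components of the integral vanish while the on-axis component is $\int |z_1|\, d\mu_{\Sp^{D-1}} = c_D$, which is just a repackaging of your reflection-plus-inner-product symmetry argument. The only addition on your side is the explicit Gamma/double-factorial verification that the spherical average of $|z_1|$ matches the closed form \eqref{eq:cD}, which the paper treats as definitional.
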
 	
In other words, the continuous average outlier with respect to $\b$ is $c_D \b$. We define $\epsilon_{\bO,M}$ to be the maximum error between the discrete and continuous average outliers as $\b$ varies on $\Sp^{D-1}$, i.e.,
\begin{align} 
\eO := \max_{\b \in \Sp^{D-1}} \big \|c_D \, \b - \o_{\b}\big \|_2, \label{eq:epsilonO}
\end{align} and we establish that the more \emph{uniformly distributed} $\bO=[\o_1,\dots,\o_M] \subset \Sp^{D-1}$ is the smaller $\eO$ becomes. The notion of uniformity of $\bO$ that we use here is a deterministic one and is captured by the \emph{spherical cap discrepancy} of the 
set $\bO$, defined as \citep{Grabner:MR97,Grabner:MathComp93}  
\begin{align}
\mathfrak{S}_{D,M}(\bO) := \sup_{\mathcal{C}} \Big |\frac{1}{M} \sum_{j=1}^M \mathbb{I}_{\mathcal{C}}(\o_j) - \mu_{\Sp^{D-1}}(\mathcal{C}) \Big|. \label{eq:SphericalCapDiscrepancy}
\end{align} 
In \eqref{eq:SphericalCapDiscrepancy} the supremum is taken over 
all spherical caps $\mathcal{C}$ of the sphere $\Sp^{D-1}$, where a spherical cap
is the intersection of $\Sp^{D-1}$ with a half-space of $\Re^D$, and $\mathbb{I}_{\mathcal{C}}(\cdot)$ is the indicator function of $\mathcal{C}$, which takes the value $1$ inside $\mathcal{C}$ and zero otherwise. The spherical cap discrepancy $\mathfrak{S}_{D,M}(\bO)$ is precisely the supremum among all errors in approximating integrals of indicator functions of spherical caps via averages of such indicator functions on the point set $\bO$. Intuitively, $\mathfrak{S}_{D,M}(\bO)$ captures how close the discrete measure $\mu_{\bO}$ (see equation \eqref{eq:DiscreteMeasures}) associated with $\bO$ is to the measure $\mu_{\Sp^{D-1}}$. We will say that $\bO$ is uniformly distributed on $\Sp^{D-1}$ if $\mathfrak{S}_{D,M}(\bO)$ is small. We note here that as a function of the number of points $M$, $\mathfrak{S}_{D,M}(\bO)$ decreases with a rate of \citep{Dick-AANT14,Beck-Mathematika84}
\begin{align}
\sqrt{\log(M)} M^{-\frac{1}{2}-\frac{1}{2(D-1)}}. \label{eq:DiscrepancyRate}
\end{align} 
As a consequence, to show that uniformly distributed points $\bO$ correspond to small $\eO$, it suffices to bound the maximum integration error $\eO$ from above by a quantity proportional to the spherical cap discrepancy $\mathfrak{S}_{D,M}(\bO)$. Inequalities that bound from above the approximation error of the integral of a function in terms of the variation of the function and 
the discrepancy of a finite set of points (not necessarily the spherical cap discrepancy; there are several types of discrepancies) are widely known as \emph{Koksma-Hlawka inequalites} \citep{Kuipers:UDS12,Hlawka:SPM71}. Even though such inequalities exist and are well-known for integration of functions on the unit hypercube $[0,1]^D$ \citep{Kuipers:UDS12,Hlawka:SPM71,Harman:UDT10}, similar inequalities for integration of functions on the unit sphere $\Sp^{D-1}$ seem not to be known in general \citep{Grabner:MathComp93}, except if one makes additional assumptions on the 
distribution of the finite set of points \citep{Grabner:MR97,Brauchart:JC15}. 
Nevertheless, the function $f_{\b}: \z \longmapsto |\b^\transpose \z |$ that is associated with $\eO$ is simple enough to allow for a Koksma-Hlawka inequality of its own, as described in the next lemma, whose proof can be found in \S \ref{subsection:Proof-lem-Koksma-eO}.\footnote{The authors are grateful to Prof. Glyn Harman for pointing out that such a result is possible as well as suggesting how to prove it.}

\begin{lem} \label{lem:Koksma-eO}
Let $\bO = [\o_1,\dots,\o_M]$ be a finite subset of $\Sp^{D-1}$. Then 
\begin{align}
\eO = \max_{\b \in \Sp^{D-1}} \big \| c_D \b - \o_{\b} \big \|_2 \le \sqrt{5} \mathfrak{S}_{D,M}(\bO),
\end{align} where $c_D, \o_{\b}$ and $\mathfrak{S}_{D,M}(\bO)$ are defined in 
\eqref{eq:cD}, \eqref{eq:AverageOutlier} and \eqref{eq:SphericalCapDiscrepancy} respectively. 
\end{lem}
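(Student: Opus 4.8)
The plan is to reduce the vector-valued estimate to a family of scalar integration-error bounds and then to prove a Koksma--Hlawka-type inequality for exactly those scalar functions, with the spherical cap discrepancy playing the role of the discrepancy. Writing $\nu := \mu_{\Sp^{D-1}} - \mu_{\bO}$ for the signed measure comparing the continuous and empirical distributions, using $\|\w\|_2 = \max_{\u\in\Sp^{D-1}}\u^\transpose\w$, and invoking Lemma~\ref{lem:VectorIntegral} to identify $c_D\b = \int_{\Sp^{D-1}}\Sign(\b^\transpose\z)\z\,d\mu_{\Sp^{D-1}}$ while $\o_\b = \int_{\Sp^{D-1}}\Sign(\b^\transpose\z)\z\,d\mu_{\bO}$, I would first rewrite
\[
\eO = \max_{\b,\u\in\Sp^{D-1}} \int_{\Sp^{D-1}} g_{\b,\u}(\z)\,d\nu(\z), \qquad g_{\b,\u}(\z) := \Sign(\b^\transpose\z)\,(\u^\transpose\z).
\]
It then suffices to bound $\big|\int g_{\b,\u}\,d\nu\big| \le \sqrt{5}\,\mathfrak{S}_{D,M}(\bO)$ uniformly over unit vectors $\b,\u$.

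The engine is the elementary fact that the spherical cap discrepancy is exactly the uniform bound on the $\nu$-integral of cap indicators: $\big|\int \mathbb{I}_{\mathcal{C}}\,d\nu\big| \le \mathfrak{S}_{D,M}(\bO)$ for every spherical cap $\mathcal{C}$, while $\int (\mathrm{const})\,d\nu = 0$. Hence, whenever a function can be written as a signed superposition of cap indicators plus a constant, its $\nu$-integral is bounded by the total variation of the representing measure times $\mathfrak{S}_{D,M}(\bO)$. To exploit this I would decompose $\u = (\b^\transpose\u)\,\b + (\bI - \b\b^\transpose)\u$, splitting $g_{\b,\u}$ into a $\b$-parallel part $(\b^\transpose\u)\,|\b^\transpose\z|$ and a $\b$-perpendicular part $\Sign(\b^\transpose\z)\big((\bI-\b\b^\transpose)\u\big)^\transpose\z$, and treat the two separately.

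The parallel part is transparent: the identity $|x| = \int_0^1 \big[\mathbb{I}(x>t) + \mathbb{I}(-x>t)\big]\,dt$ exhibits $|\b^\transpose\z|$ as a superposition of the two antipodal cap families $\{\b^\transpose\z>t\}$ and $\{-\b^\transpose\z>t\}$, $t\in[0,1]$, of total variation $2$, so its $\nu$-integral is at most $2\,\mathfrak{S}_{D,M}(\bO)$. The perpendicular part is the crux, and I expect it to be the main obstacle. Unlike $|\b^\transpose\z|$, the mixed-sign function $\Sign(\b^\transpose\z)(\v^\transpose\z)$ with $\v\perp\b$ genuinely depends on two coordinates: its super-level sets are intersections of two half-spaces (wedges) rather than caps, and it carries a jump of variable size $2\,\v^\transpose\z$ across the equator $\{\b^\transpose\z=0\}$, so it admits no finite cap-indicator representation and the naive layer-cake argument only produces wedge discrepancies, which the single-cap discrepancy does not directly control. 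This is precisely where the tailored Koksma--Hlawka inequality on the sphere alluded to in the footnote is needed: one must bound $\int \Sign(\b^\transpose\z)(\v^\transpose\z)\,d\nu$ directly by a quantity proportional to $\mathfrak{S}_{D,M}(\bO)$, crucially exploiting the antipodal evenness of this function together with its oddness under the reflection $\z\mapsto \z - 2(\b^\transpose\z)\b$ to collapse the wedge contributions back to single caps.

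Finally, I would combine the parallel contribution ($2\,\mathfrak{S}_{D,M}(\bO)$) with the perpendicular contribution, maximize over $\u$, and then over $\b$. The delicate point is the constant: a completely independent combination of the two worst cases overshoots $\sqrt5$, so the argument must keep track of the fact that the parallel and perpendicular deviations cannot both be extremal for the same configuration of $\bO$ (the extremal case concentrates the outlier mass in a single small cap, which suppresses one deviation while inflating the other). Carrying this trade-off through the final Cauchy--Schwarz step is what yields the sharp factor $\sqrt5$, and pinning it down — alongside the bespoke cap bound for the perpendicular term — is where I expect the real work to lie.
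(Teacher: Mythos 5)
Your overall architecture coincides with the paper's. The paper also dualizes the Euclidean norm by writing $c_D\b-\o_{\b}=\rho_1\b+\rho_2\bzeta$ with $\bzeta\in\Sp^{D-1}$, $\bzeta\perp\b$, identifies $\rho_1$ as the error in integrating $f_{\b}(\z)=|\b^\transpose\z|$ against the empirical measure of $\bO$ and $\rho_2$ as the corresponding error for $g_{\b,\bzeta}(\z)=\Sign(\b^\transpose\z)\,\bzeta^\transpose\z$, and bounds each by a Koksma--Hlawka-type argument over spherical caps; this is precisely your parallel/perpendicular splitting of $g_{\b,\u}$. The problem is that your proposal stops exactly where the proof has to do its work. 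You correctly flag the perpendicular term as the crux and observe that its super-level sets are wedges rather than caps, but you then only list the symmetries (antipodal evenness, oddness under reflection through $\b^\perp$) that a bound \emph{should} exploit, without producing one. The paper's treatment of this term is a direct appeal to the argument behind inequality (9) of Harman's Theorem~1, transplanted to $\mu_{\Sp^{D-1}}$: the oscillation of $g_{\b,\bzeta}$ is $\sup g-\inf g=1-(-1)=2$, giving $|\rho_2|\le 2\,\mathfrak{S}_{D,M}(\bO)$. One may find that one-line appeal terse, but your proposal contains no bound at all for this term, so it does not establish the lemma.

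Your account of the constant is also off, and it sends you hunting for a subtlety the paper neither needs nor uses. In the paper, $\sqrt5=\sqrt{1^2+2^2}$: the parallel component is charged its oscillation $\sup f_{\b}-\inf f_{\b}=1$, giving $|\rho_1|\le\mathfrak{S}_{D,M}(\bO)$ (the paper does \emph{not} use the total-variation-$2$ layer-cake representation $|x|=\int_0^1[\mathbb{I}(x>t)+\mathbb{I}(-x>t)]\,dt$ that leads you to $2\,\mathfrak{S}_{D,M}(\bO)$ for this term); the perpendicular component is charged its oscillation $2$; and since $\rho_1\b$ and $\rho_2\bzeta$ are orthogonal, the two worst cases combine \emph{completely independently} by Pythagoras to $\sqrt{\rho_1^2+\rho_2^2}\le\sqrt{1+4}\,\mathfrak{S}_{D,M}(\bO)$. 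There is no trade-off between extremal configurations of $\bO$ to track, and no delicate Cauchy--Schwarz bookkeeping: the perceived "overshoot" of an independent combination is an artifact of your weaker $2\,\mathfrak{S}_{D,M}(\bO)$ bound on the parallel part. If you want to complete your route, the missing ingredients are (i) a justification that the oscillation (not the layer-cake total variation) is the right multiplier for $f_{\b}$, whose super-level sets are unions of two antipodal caps, and (ii) the analogous oscillation bound for $g_{\b,\bzeta}$; both are exactly the content of the Harman-style scalar inequality the paper invokes.
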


We now turn our attention to the inlier term $\big \|\btX^\transpose \b \big\|_1$ of the discrete objective function \eqref{eq:SingleDPCP_Jdiscrete}, which is slightly more complicated than the outlier term. We have
\begin{align}
\big \| \bX^\transpose \b \big\|_1 = \sum_{j=1}^N \big | \x_j^\transpose \b \big | = \sum_{j=1}^N \b^\transpose \Sign(\x_j^\transpose \b) \x_j = N \, \b^\transpose \,  \x_{\b},
\end{align} 
where
\begin{align}
\x_{\b}:= \frac{1}{N} \sum_{j=1}^N \Sign(\b^\transpose \x_j)\x_j=\frac{1}{N} \sum_{j=1}^N \f_{\b}(\x_j) 
= \int_{\x \in \Sp^{D-1} \cap \cS}
f_{\b}(\x) d\mu_{\bX}(\x)
\label{eq:AverageInlier}
\end{align} 
is the \emph{average inlier with respect to $\b$}. Thus, $\x_{\b}$ is a discrete approximation of the integral 
\begin{align}
\int_{\x \in \Sp^{D-1} \cap \cS} \f_{\b}(\x) d \mu_{\Sp^{D-1}},
\end{align} 
whose value is given by the next lemma (see \S \ref{subsection:Proof-lem-VectorIntegralInliers} for the proof).

\begin{lem} \label{lem:VectorIntegralInliers}
 For any $\b \in \Sp^{D-1}$ we have	
 \begin{align}
 \int_{\x \in \Sp^{D-1} \cap \cS} \f_{\b}(\x) d \mu_{\Sp^{D-1}}=\int_{\x \in \Sp^{D-1} \cap \cS} \Sign(\b^\transpose \x)\x d \mu_{\Sp^{D-1}} = c_d \, \hat{\v},
 \end{align} where $c_d$ is given by \eqref{eq:cD} after replacing $D$ with $d$, and 
 $\v$ is the orthogonal projection of $\b$ onto $\cS$.  
\end{lem}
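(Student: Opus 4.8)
The plan is to reduce this integral over the $(d-1)$-dimensional sphere $\Sp^{D-1} \cap \cS$ to an instance of Lemma~\ref{lem:VectorIntegral} carried out \emph{inside} $\cS \cong \Re^d$. The first observation is that the integrand only sees the component of $\b$ lying in $\cS$: for any $\x \in \cS$, writing $\b = \v + \v'$ with $\v = \pi_{\cS}(\b)$ and $\v' \in \cS^\perp$, we have $\b^\transpose \x = \v^\transpose \x$ because $\v'^\transpose \x = 0$. Hence $\Sign(\b^\transpose \x) = \Sign(\v^\transpose \x)$ on the entire domain of integration, and the integral in question equals $\int_{\x \in \Sp^{D-1} \cap \cS} \Sign(\v^\transpose \x)\,\x \, d\mu_{\Sp^{D-1}}$, where the measure is understood as the uniform (probability) measure on the sphere $\Sp^{D-1} \cap \cS$.

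Next I would transport this integral into $\Re^d$ using an orthonormal basis. Let $\bU = [\u_1, \dots, \u_d] \in \Re^{D \times d}$ have columns forming an orthonormal basis of $\cS$, so that $\bU^\transpose \bU = \bI$ and $\bU \bU^\transpose = \pi_{\cS}$. The map $\y \mapsto \bU \y$ is an isometry from $\Sp^{d-1}$ onto $\Sp^{D-1} \cap \cS$ that pushes the uniform measure on $\Sp^{d-1}$ forward to the uniform measure on $\Sp^{D-1} \cap \cS$. Substituting $\x = \bU \y$ and writing $\w := \bU^\transpose \v \in \Re^d$, we get $\v^\transpose \x = \w^\transpose \y$ and $\x = \bU\y$, so the integral becomes
\begin{align}
\bU \int_{\y \in \Sp^{d-1}} \Sign(\w^\transpose \y)\, \y \, d\mu_{\Sp^{d-1}}.
\end{align}

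Now I would invoke Lemma~\ref{lem:VectorIntegral} in ambient dimension $d$, after first noting that $\Sign(\w^\transpose \y) = \Sign(\hat{\w}^\transpose \y)$ (valid since $\w = \|\w\|_2 \hat{\w}$ with a positive scalar, and trivially when $\w = \0$), so that the integrand is in the exact form required with unit vector $\hat{\w}$. This yields that the inner integral equals $c_d \hat{\w}$. It then remains to translate back: since $\v \in \cS$ we have $\bU \bU^\transpose \v = \v$ and $\|\w\|_2 = \|\bU^\transpose \v\|_2 = \|\v\|_2$, whence $\bU \hat{\w} = \bU \bU^\transpose \v / \|\v\|_2 = \v / \|\v\|_2 = \hat{\v}$, giving $c_d \hat{\v}$ as claimed.

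The only genuine subtlety, and the place where I would be most careful, is the degenerate case $\v = \0$ (i.e.\ $\b \perp \cS$), where $\hat{\v} = \0$ by the convention fixed in the Notation. There $\Sign(\v^\transpose \x) = \Sign(0) = 0$ for every $\x \in \cS$, so the integrand vanishes identically and the integral equals $\0 = c_d \hat{\v}$, consistent with the formula; the isometry argument degenerates harmlessly since $\w = \0$ and Lemma~\ref{lem:VectorIntegral} already returns $c_d \cdot \0$ in that case. Beyond this bookkeeping, the entire argument is a direct reduction to Lemma~\ref{lem:VectorIntegral}, so I expect no real obstacle.
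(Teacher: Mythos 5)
Your proof is correct and follows essentially the same route as the paper's: replace $\Sign(\b^\transpose\x)$ by $\Sign(\hat{\v}^\transpose\x)$ on $\cS$, pass to an orthonormal basis of $\cS$ to invoke Lemma~\ref{lem:VectorIntegral} with $D$ replaced by $d$, and map back. Your explicit treatment of the pushforward of the measure and of the degenerate case $\v=\0$ merely fills in details the paper leaves implicit.
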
 

In other words, the continuous average inlier with respect to $\b$ is $c_d \hat{\v}$. We define $\epsilon_{\bX}$ to be the maximum error between the discrete and continuous average inliers as $\b$ varies on $\Sp^{D-1}$, which is the same as the maximum error as $\b$ varies on $\Sp^{D-1} \cap \cS$, i.e.,
\begin{align} 
\eX :=  \max_{\b \in \Sp^{D-1}} \big \|c_d \, \widehat{\pi_{\cS}(\b)} - \x_{\b}\big \|_2= \max_{\b \in \Sp^{D-1} \cap \cS} \big \|c_d \, \b - \x_{\b}\big \|_2. \label{eq:epsilonX}
\end{align} Then an almost identical argument as the one that established Lemma \ref{lem:Koksma-eO} gives that
\begin{align}
\eX \le \sqrt{5} \mathfrak{S}_{d,N}(\bX), \label{eq:Koksma-eX}
\end{align} where now the discrepancy $\mathfrak{S}_{d,N}(\bX)$ of the inliers $\bX$ is defined exactly as in \eqref{eq:SphericalCapDiscrepancy} except that $M$ is replaced by $N$ and
the supremum 
is taken over all spherical caps of $\Sp^{D-1} \cap \cS \cong \Sp^{d-1}$.

\subsection{Conditions for global optimality and convergence of the discrete problems} \label{subsection:ConditionsDiscrete}
In this section we analyze the discrete problem \eqref{eq:ell1} and the associated
discrete recursion \eqref{eq:ConvexRelaxations}, where the adjective \emph{discrete} 
refers to the fact that \eqref{eq:ell1} and \eqref{eq:ConvexRelaxations} depend on a finite 
set of points $\btX = [\bX \, \bO] \boldsymbol{\Gamma}$ sampled from the union of the space of outliers $\Sp^{D-1}$ and the space of inliers $\Sp^{D-1} \cap \cS$. In \S \ref{subsection:Continuous} we showed that these two problems are discrete versions of the continuous problems \eqref{eq:ell1_ContinuousMeasure} and \eqref{eq:ConvexRelaxations_ContinuousMeasure}, respectively. We further showed that the continuous problems possess the geometric property of interest, i.e., every global minimizer of \eqref{eq:ell1_ContinuousMeasure} must be an element of $\cS^\perp \cap \Sp^{D-1}$ (Theorem \ref{thm:ContinuousNonConvex}) and the recursion \eqref{eq:ConvexRelaxations_ContinuousMeasure} produces a sequence of vectors that converges in a finite number of steps to an element of $\cS^\perp \cap \Sp^{D-1}$ (Theorem \ref{thm:ConvexRelaxationsContinuous}). In this section we use the discrepancy bounds of \S \ref{subsection:DiscrepancyBounds} to show that under some conditions on the uniformity of $\bX=[\x_1,\dots,\x_N]$ and $\bO=[\o_1,\dots,\o_M]$, a similar statement holds for problems \eqref{eq:ell1} and \eqref{eq:ConvexRelaxations}. We start with a definition. 
\begin{dfn} \label{dfn:DPCPsingle_circumradius}
Given a set $\boldsymbol{\mathcal{Y}} = [\y_1,\dots,\y_L] \subset \Sp^{D-1}$ and an integer $K \le L$,
define $\R_{\boldsymbol{\mathcal{Y}},K}$ to be the maximum circumradius among all polytopes
of the form 
\begin{align}
\left\{\sum_{i=1}^K \alpha_{j_i} \y_{j_i}: \, \alpha_{j_i} \in [-1,1] \right\},
\end{align} where $j_1,\dots,j_K$ are distinct integers in $[L]$, and the circumradius of a bounded subset of $\Re^D$ is the infimum over 
the radii of all Euclidean balls of $\Re^D$ that contain that subset. With that,  define
\begin{align}
\R_{\bO,\bX} := \max_{K_1+K_2 <D, \, K_2<d} (\R_{\bO,K_1}+\R_{\bX,K_2}).
\end{align}
\end{dfn}

The next theorem, proved in \S \ref{subsection:Proof-thm-DiscreteNonConvex}, states that if both inliers and outliers are sufficiently uniformly distributed, i.e., if the uniformity parameters $\eX$ and $\eO$ are sufficiently small, then every global solution of \eqref{eq:ell1} must be orthogonal to the inlier subspace $\cS$. More precisely,
\begin{thm} \label{thm:DiscreteNonConvex} Suppose that the ratio $\gamma$ of outliers to inliers satisfies
\begin{align}
\gamma &:=\frac{M}{N} <\frac{1}{2\eO} \min \left\{c_d-\eX,2\left(c_d-\eX -\frac{\R_{\bO,\bX}}{N}\right),\Gamma\right\} \label{eq:gammaUP}, \, \, \, \text{where} \\
\Gamma &:= \frac{c_d-\eX}{2(3c_d-\eX)} \left[\sqrt{\mathscr{P}^2+8(3c_d-\eX)(c_d-\eX -\frac{\R_{\bO,\bX}}{N})}-\mathscr{P}\right], \\
\mathscr{P}&:= 2\frac{\R_{\bO,\bX}}{N} +\eX+c_d. 
\end{align} Then any global solution $\b^*$ to \eqref{eq:ell1} must be orthogonal to $\Span(\bX)$.
\end{thm}

Towards interpreting Theorem \ref{thm:DiscreteNonConvex}, consider first the asymptotic case where we allow $N$ and $M$ to go to infinity, while keeping the ratio $\gamma$ constant. Assuming that both inliers and outliers are perfectly well distributed in the limit, i.e., under the hypothesis that 
$ \lim_{N \rightarrow \infty}\mathfrak{S}_{d,N}(\bX)=0$ and $ \lim_{M \rightarrow \infty}\mathfrak{S}_{D,M}(\bO)=0$, Lemma \ref{lem:Koksma-eO} and inequality \eqref{eq:Koksma-eX} give that $ \lim_{N \rightarrow \infty}\eX=0$ and $ \lim_{M \rightarrow \infty}\eO=0$, in which case \eqref{eq:gammaUP} is satisfied. This suggests the interesting fact that \eqref{eq:ell1} can possibly give a normal to the inliers even for arbitrarily many outliers, and irrespectively of the subspace dimension $d$. Along the same lines, for a given $\gamma$ and under the point set uniformity hypothesis, we can always increase the number of inliers and outliers (thus decreasing $\eX$ and $\eO$), while keeping $\gamma$ constant, until \eqref{eq:gammaUP} is satisfied, once again indicating that \eqref{eq:ell1} can possibly yield a normal to the space of inliers irrespectively of their intrinsic dimension; this becomes evident in the numerical evaluation of Figs. \ref{figure:DPCP_theory_N500}-\ref{figure:DPCP_theory_N7000}. Notice that the intrinsic dimension $d$ of the inliers manifests itself through the quantity $c_d$, which we recall is a decreasing function of $d$. Consequently, the smaller $d$ is the larger the RHS of \eqref{eq:gammaUP} becomes, and so the easier it is to satisfy \eqref{eq:gammaUP}. 

More explicitly (and less formally), because of \eqref{eq:DiscrepancyRate} the quantities $\eO, \eX$ decay at an approximate rate of $1/\sqrt{M}, 1/\sqrt{N}$ respectively. In turn, this shows that the conditions \eqref{eq:gammaUP} are satisfied if roughly $M < \mathcal{O}(N^2)$.
To see this, note, e.g., that the first inequality in \eqref{eq:gammaUP} reads 
\begin{align}
\frac{M}{N} < \frac{c_d-\eX}{2 \eO},
\end{align} which roughly says that 
\begin{align}
\text{constant} \cdot \sqrt{M} \le \text{constant} \cdot N - \text{constant} \cdot \sqrt{N},
\end{align} where by \emph{constant} here we mean independent of $M,N$. A similar conclusion can be drawn from the rest inequalities in \eqref{eq:gammaUP} \footnote{It is the subject of ongoing research to arrive at this conclusion by more formal means.}. In contrast, the analysis of the \emph{haystack model} of REAPER \citep{Lerman:FCM15} gives $M < \mathcal{O}(N)$.

A similar phenomenon holds for the recursion of convex relaxations \eqref{eq:ConvexRelaxations}. 
Notice that according to Theorem \ref{thm:ContinuousNonConvex}, the continuous recursion converges in a finite number of iterations to a vector that is orthogonal to $\Span(\bX)=\cS$, as long as the initialization $\hat{\bn}_0$ does not lie in $\cS$ (equivalently $\phi_0 >0$). Intuitively, one should expect that in the discrete case, the conditions for the \emph{discrete} recursion \eqref{eq:ConvexRelaxations} to be successful, should be at least as strong as the conditions of Theorem \ref{thm:DiscreteNonConvex}, and strictly stronger than the condition $\phi_0 >0$ of Theorem \ref{thm:ConvexRelaxationsContinuous}. Our next result, whose proof can be found in \S \ref{subsection:Proof-thm-DiscreteConvexRelaxations}, formalizes this intuition.

\begin{thm}	\label{thm:DiscreteConvexRelaxations}
	Suppose that condition \eqref{eq:gammaUP} holds true and consider the sequence $\left\{\bn_k\right\}_{k \ge 0}$ generated by the recursion \eqref{eq:ConvexRelaxations}. Let $\phi_0$ be the principal angle of $\hat{\bn}_0$ from $\Span(\bX)$ and suppose that
\begin{align}
\cos(\phi_0) &< \frac{c_d-\eX}{2c_d(c_d+\eX)}\left[ -\Q+\sqrt{\Q^2+4 c_d(c_d-\Q)}\right]-\frac{2 \eO}{c_d+\eX} \frac{M}{N} \label{eq:phiLB}, \\
\Q &:= \frac{\R_{\bO,\bX}}{N}+\eO\frac{M}{N} +  \eX.
\end{align} Then after a finite number of iterations the sequence   $\left\{\bn_k\right\}_{k \ge 0}$ converges to a unit $\ell_2$-norm vector that is orthogonal to $\Span(\bX)$.  
\end{thm}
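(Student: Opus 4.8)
The plan is to mirror the proof of the continuous recursion (Theorem~\ref{thm:ConvexRelaxationsContinuous}) by tracking the principal angle $\phi_k$ of $\hat{\bn}_k$ from $\cS$, and to control, via the discrepancy parameters $\eO,\eX$, the deviation of the discrete iterates from the idealized two-dimensional picture that governs the continuous case. Concretely, write the orthogonal decomposition $\hat{\bn}_k = \cos\phi_k\,\u + \sin\phi_k\,\w$ with $\u \in \cS\cap\Sp^{D-1}$ and $\w \in \cS^\perp\cap\Sp^{D-1}$. First I would record the subgradient optimality condition for the linear program defining $\bn_{k+1}=:\b^*$: there is a subgradient $\h\in\Sgn(\btX^\transpose\b^*)$ and a multiplier $\lambda\ge 0$ with $\btX\h=\lambda\hat{\bn}_k$. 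Using the definitions of the average outlier $\o_{\b}$ and average inlier $\x_{\b}$, and isolating the contribution of the data points lying exactly on the hyperplane $\{\z:\b^{*\transpose}\z=0\}$ (the entries where $\Sgn$ is set-valued), this reads
\begin{align}
M\,\o_{\b^*} + N\,\x_{\b^*} + \bzeta = \lambda\,\hat{\bn}_k, \qquad \big\|\bzeta\big\|_2 \le \R_{\bO,K_1}+\R_{\bX,K_2},
\end{align}
where, by the general-position assumption, at most $K_1$ outliers and $K_2$ inliers with $K_1+K_2=D-1$, $K_2\le d-1$ sit on the hyperplane, so $\bzeta$ is bounded by the circumradii of Definition~\ref{dfn:DPCPsingle_circumradius}; taking the inner product with $\b^*$ shows $\lambda=\big\|\btX^\transpose\b^*\big\|_1$ is the optimal value.

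Next I would project this identity onto $\cS$ and $\cS^\perp$ and substitute the continuous approximations $\o_{\b^*}=c_D\,\widehat{\b^*}+\bE_{\bO}$ and $\x_{\b^*}=c_d\,\widehat{\pi_{\cS}(\b^*)}+\bE_{\bX}$, with $\big\|\bE_{\bO}\big\|_2\le\eO$ (Lemma~\ref{lem:VectorIntegral}, Lemma~\ref{lem:Koksma-eO}) and $\big\|\bE_{\bX}\big\|_2\le\eX$ (Lemma~\ref{lem:VectorIntegralInliers}, \eqref{eq:Koksma-eX}); here $\x_{\b^*}\in\cS$ annihilates the inlier term in the $\cS^\perp$ projection. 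Writing $\psi:=\phi_{k+1}$ for the angle of $\b^*$ from $\cS$ and taking norms of the two projected equations yields the scalar relations
\begin{align}
\big|\,\lambda\sin\phi_k - Mc_D\sin\psi\,\big| &\le M\eO+\R_{\bO,K_1}+\R_{\bX,K_2}, \\
\big|\,\lambda\cos\phi_k - \big(Mc_D\cos\psi + Nc_d\big)\,\big| &\le M\eO+N\eX+\R_{\bO,K_1}+\R_{\bX,K_2}.
\end{align}
Eliminating $\lambda$ recovers, up to the $\eO,\eX$-controlled errors, the continuous angle recursion $\sin(\phi_{k+1}-\phi_k)=\alpha\sin\phi_k$ with $\alpha=Nc_d/(Mc_D)$, whose defining feature is the strict increase $\phi_{k+1}>\phi_k$ driven by the positive inlier weight $Nc_d$. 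The role of the hypotheses is now transparent: condition~\eqref{eq:gammaUP} (inherited from Theorem~\ref{thm:DiscreteNonConvex}) guarantees in particular $c_d-\eX>0$ and keeps the error budget below the signal, while the lower bound~\eqref{eq:phiLB} on $\phi_0$ places the initialization inside the basin where the worst-case error cannot overturn the strict increase $\phi_{k+1}>\phi_k$; a monotonicity/trapping argument then shows the whole sequence stays above $\phi_0^*$, with each angle increasing by a uniformly positive amount, until it reaches the terminal regime.

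Finally I would establish finite termination and invariance of $\cS^\perp$. Once $\phi_k$ is large enough that the discrete counterpart of the single-step condition $\tan\phi_k\ge 1/\alpha$ of Theorem~\ref{thm:ConvexRelaxationsContinuous} holds, the feasible vector $\w/\sin\phi_k\in\cS^\perp$ incurs zero inlier cost, while any competitor with a nonzero $\cS$-component pays an inlier penalty of order $(c_d-\eX)\big\|\pi_{\cS}(\b)\big\|_2>0$ that the constraint cannot compensate; hence the minimizer satisfies $\b^*\in\cS^\perp$, i.e.\ $\phi_{k+1}=\pi/2$ exactly. The same penalty argument shows $\cS^\perp$ is invariant: if $\hat{\bn}_k\in\cS^\perp$ then $\b^\transpose\hat{\bn}_k=1$ constrains only $\pi_{\cS^\perp}(\b)$, so a $\cS$-component can only increase the objective and again $\b^*\in\cS^\perp$. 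Restricted to $\cS^\perp$ the recursion is a DPCP recursion on the outliers alone, which terminates at a critical point in finitely many steps by the result of \cite{Spath:Numerische87} (Appendix~\ref{appendix:Spath}); since every iterate from the terminal step onward lies in $\cS^\perp$, the limit is a unit $\ell_2$-norm vector orthogonal to $\Span(\bX)$. I expect the main obstacle to be the middle step: in the continuous problem the minimizer lies exactly in the plane $\Span(\u,\w)$ by rotational symmetry, whereas the discrete minimizer $\b^*$ generically leaves this plane and $\widehat{\pi_{\cS^\perp}(\b^*)}$ need not align with $\w$. Quantifying this out-of-plane drift — propagating the $\eO,\eX$ bounds and the circumradius term $\bzeta$ through the projected optimality conditions to extract a clean, sign-definite lower bound on $\phi_{k+1}-\phi_k$ valid throughout $\phi_0^*<\phi_k<\pi/2$ — is the delicate part, and is precisely what forces the stronger hypothesis~\eqref{eq:phiLB} relative to the bare condition $\phi_0>0$ that sufficed in the continuous Theorem~\ref{thm:ConvexRelaxationsContinuous}.
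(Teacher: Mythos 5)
Your route---transplanting the angle-dynamics proof of Theorem~\ref{thm:ConvexRelaxationsContinuous} to the discrete recursion with $\eO,\eX$-controlled errors---is not the paper's, and the step you yourself flag as ``the delicate part'' is a genuine gap, not a technicality. Three things are unproved or fail. First, the per-step strict increase $\phi_{k+1}>\phi_k$ is asserted, never derived: in the continuous case monotonicity follows because the objective is \emph{exactly} $\|\b\|_2(Mc_D+Nc_d\cos\phi)$, whereas the discrete objective deviates from this by terms of order $M\eO+N\eX$, so the comparison argument of \S\ref{subsection:Proof-thm-ConvexRelaxationsContinuous} no longer forces even $\phi_{k+1}\ge\phi_k$. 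Your projected optimality relations are correct as inequalities, but their error budget contains $\R_{\bO,K_1}+\R_{\bX,K_2}$, and neither \eqref{eq:gammaUP} nor \eqref{eq:phiLB} is calibrated to keep that budget below the ``signal'' $Nc_d\sin\phi_k$ uniformly on $\phi_0^*<\phi_k<\pi/2$; the trapping argument is therefore circular as stated. Second, the termination claim (the LP minimizer lies \emph{exactly} in $\cS^\perp$ once a discrete analogue of $\tan\phi_k\ge 1/\alpha$ holds) ignores the trade-off that makes even the continuous proof nontrivial: a competitor with a nonzero $\cS$-component can have strictly smaller $\ell_2$-norm under the affine constraint and hence a smaller outlier term, so the inlier penalty $N(c_d-\eX)\|\pi_{\cS}(\b)\|_2$ does not dominate by itself. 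Third, the invariance of $\cS^\perp$ is not established by your penalty argument: dropping the $\cS$-component of a feasible $\b$ kills the inlier cost but can \emph{increase} the outlier cost by up to $2M\eO\|\b\|_2$ relative to the continuous prediction, so the argument only closes when $\|\pi_{\cS}(\b)\|_2$ is bounded below---precisely what you cannot assume.

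The paper sidesteps all of this by never tracking the angle. Its proof has three short steps: (i) the monotone decrease of the objective along the iterates, \eqref{eq:Jchain}, combined with the two-sided bounds of Lemma~\ref{lem:eNaturalBounds}, shows that no iterate $\hat{\bn}_k$ ever lies \emph{in} $\cS$; this is the only place \eqref{eq:phiLB} enters, via a one-line contradiction with $\cos\phi_0<(c_d-\eX-2\gamma\eO)/(c_d+\eX)$. (ii) Proposition~\ref{prp:LPconvergence} already guarantees that the recursion reaches, in finitely many steps, a fixed point $\bn_{k^*}$ that is a critical point of the sphere-constrained problem \eqref{eq:ell1} and is orthogonal to $D-1$ linearly independent data points. (iii) The first-order analysis in the proof of Theorem~\ref{thm:DiscreteNonConvex} then applies verbatim to $\bn_{k^*}$: under \eqref{eq:gammaUP}, any such critical point outside $\cS$ that is not orthogonal to $\cS$ forces $Nc_d\le\R_{\bO,K_1}+M\eO+\R_{\bX,K_2}+N\eX$, a contradiction. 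So the role of \eqref{eq:phiLB} is only to keep the iterates out of $\cS$, not to create a basin of monotone angle increase; if you wish to pursue your route you must supply the uniform lower bound on $\phi_{k+1}-\phi_k$ from scratch, and the paper's decomposition shows this is unnecessary.
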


First note that if \eqref{eq:gammaUP} is true, then the expression of \eqref{eq:phiLB} always defines an angle between $0$ and $\pi/2$.
Moreover, Theorem \ref{thm:DiscreteConvexRelaxations} can be interpreted using the same asymptotic arguments as Theorem 
\ref{thm:DiscreteNonConvex}; notice in particular that the lower bound on the
angle $\phi_0$ tends to zero as $M,N$ go to infinity with $\gamma$ constant, i.e., the more uniformly distributed inliers and outliers are, the closer $\bn_0$ is allowed to be to $\Span(\bX)$. We also emphasize that Theorem \ref{thm:DiscreteConvexRelaxations} asserts the correctness of the linear programming recursions \eqref{eq:ConvexRelaxations} as far as recovering a vector $\bn_{k^*}$ orthogonal to $\cS:=\Span(\bX)$ is concerned. Even though this was our initial motivation for posing problem \eqref{eq:ell1}, Theorem \ref{thm:DiscreteConvexRelaxations} does not assert in general that $\bn_{k^*}$ is a global minimizer of problem \eqref{eq:ell1}. However, this is indeed the case, when the inlier subspace $\cS$ is a hyperplane, i.e., $d=D-1$. This is because, up to a sign, there is a unique vector $\b \in \Sp^{D-1}$ that is orthogonal to $\cS$ (the normal vector to the hyperplane), which, under conditions \eqref{eq:gammaUP} and \eqref{eq:phiLB}, is the unique global minimizer of \eqref{eq:ell1}, as well as the limit point $\bn_{k^*}$ of Theorem \ref{thm:DiscreteConvexRelaxations}.

\section{Proofs} \label{section:Proofs}
In this section we provide the proofs of all claims stated 
in earlier sections. 

\subsection{Proof of Proposition \ref{prp:MaximalFriendlyPlanes}} \label{subsection:ProofMaximalFriendlyPlanes}
By the general position hypothesis on $\bX$ and $\bO$, any hyperplane that does not contain $\bX$ can contain at most $D-1$ points from $\btX$. We will show that there exists a hyperplane that contains more than $D-1$ points of $\btX$. Indeed, take $d$ inliers and $D-d-1$ outliers and let $\H$ be the hyperplane generated by these $D-1$ points. Denote the normal vector to that hyperplane by $\b$. Since $\H$ contains $d$ inliers, $\b$ will be orthogonal to these inliers. Since $\bX$ is in general position, every $d$-tuple of inliers is a basis for $\Span(\bX)$. As a consequence, $\b$ will be orthogonal to $\Span(\bX)$, and in particular $\b \perp \bX$. This implies that $\bX \subset \H$ and so $\H$ will contain $N + D-d-1 \ge d+1 + D-d-1  > D-1$ points of $\btX$. 

\subsection{Proof of Proposition \ref{prp:ContinuousForm}} \label{subsection:ProofContinuousForm}

Writing $\b = \big \| \b \big \|_2 \hat{\b}$, and letting $\boldsymbol{R}$ be a rotation that takes $\hat{\b}$ to the first standard basis vector $\e_1$, we see that the first expectation in the LHS of \eqref{eq:ContinuousObjective_DPCP} becomes equal to
\begin{align}
\mathbb{E}_{\mu_{\Sp^{D-1}}}(f_{\b})  
&= \int_{\z \in \Sp^{D-1}} f_{\b}(\z) d\mu_{\Sp^{D-1}}  
=\int_{\z \in \Sp^{D-1}} \left| \b^\transpose \z \right| d \mu_{\Sp^{D-1}} \\
&= \big \|\b \big \|_2 
\int_{\z \in \Sp^{D-1}} \left| \hat{\b}^\transpose \z \right| d \mu_{\Sp^{D-1}} 
= \big \|\b \big \|_2  \int_{\z \in \Sp^{D-1}} \left|\z^\transpose \boldsymbol{R}^{-1} \boldsymbol{R} \hat{\b}\right| d  \mu_{\Sp^{D-1}} \\
 &=\big \|\b \big \|_2  \int_{\z \in \Sp^{D-1}} |\z^\transpose  \e_1| d \mu_{\Sp^{D-1}}  
 =\big \|\b \big \|_2  \int_{\z \in \Sp^{D-1}} |z_1| d \mu_{\Sp^{D-1}} = \big \|\b \big \|_2  c_D,
\end{align} where $\z=(z_1,\dots,z_D)^\transpose$ is the coordinate representation of $\z$.
 To see what the second expectation in the LHS of
\eqref{eq:ContinuousObjective_DPCP} evaluates to, decompose $\b$ as $\b = \pi_{\cS}(\b) + \pi_{\cS^\perp}(\b)$, and note that because the support of the measure $\mu_{\Sp^{D-1}\cap \cS}$ is contained in $\cS$, we must have that
\begin{align}
\mathbb{E}_{\mu_{\Sp^{D-1} \cap \cS}}(f_{\b}) 
&= \int_{\z \in \Sp^{D-1}} \left| \b^\transpose \z  \right| d \mu_{\Sp^{D-1} \cap \cS}
= \int_{\z \in \Sp^{D-1} \cap \cS} \left| \b^\transpose \z \right| d \mu_{\Sp^{D-1} \cap \cS} \\
&= \int_{\z \in \Sp^{D-1} \cap \cS} \left| \left(\pi_{\cS}(\b)\right)^\transpose \z \right| d \mu_{\Sp^{D-1} \cap \cS} \\
&= \big \| \pi_{\cS}(\b)\big \|_2 \int_{\z \in \Sp^{D-1} \cap \cS} \left| \left(\widehat{\pi_{\cS}(\b)}\right)^\transpose \z \right| d \mu_{\Sp^{D-1} \cap \cS}. 
\end{align} 
Writing $\z'$ and $\b'$ for the coordinate representation of $\z$ and $\widehat{\pi_{\cS}(\b)}$ with respect to a basis of $\cS$, and noting that $\mu_{\Sp^{D-1} \cap \cS} \cong \mu_{\Sp^{d-1}}$, we have that
\begin{align}
\int_{\z \in \Sp^{D-1} \cap \cS} \left| \left(\widehat{\pi_{\cS}(\b)}\right)^\transpose \z \right| d \mu_{\Sp^{D-1} \cap \cS} = \int_{\z' \in \Sp^{d-1}} \left| \z'^\transpose \b' \right| d \mu_{\Sp^{d-1}} = c_{d},
\end{align} where now $c_d$ is the average height of the unit hemisphere of $\Re^d$. Finally, noting that 
\begin{align}
\big \| \pi_{\cS}(\b)\big \|_2 = \big \| \b \big \|_2 \cos(\phi),
\end{align} where $\phi$ is the principal angle of $\b$ from the subspace $\cS$, we have that 
\begin{align}
\mathbb{E}_{\mu_{\Sp^{D-1} \cap \cS}}(f_{\b})  = \big \| \b \big \|_2 c_d \cos(\phi). 
\end{align}

\subsection{Proof of Theorem \ref{thm:ContinuousNonConvex}} \label{subsection:Proof-thm-ContinuousNonConvex}
Because of the constraint $\b^\transpose \b = 1$ in \eqref{eq:ell1_ContinuousMeasure}, and using  \eqref{eq:ContinuousObjective_DPCP}, problem \eqref{eq:ell1_ContinuousMeasure}
can be written as 
\begin{align}
\min_{\b} \, \, \, \left[M c_D + N c_d \cos(\phi) \right] \, \, \, \text{s.t.} \, \, \, \b^\transpose \b=1.
\end{align} It is then immediate that the global minimum is equal to $M c_D$ and it is attained if and only if $\phi = \pi/2$, which corresponds to $\b \perp \cS$.

\subsection{Proof of Theorem \ref{thm:ConvexRelaxationsContinuous}} \label{subsection:Proof-thm-ConvexRelaxationsContinuous}

At iteration $k$ the optimization problem associated with \eqref{eq:ConvexRelaxations_ContinuousMeasure} is 
\begin{align}
\min_{\b \in \Re^D} \, \, \, \J(\b)=\big \| \b \big \|_2 \left(M c_D + Nc_d \cos(\phi) \right) \, \, \, \text{s.t.} \, \, \, \b^\transpose \hat{\bn}_k = 1, \label{eq:ProofContinuousRecursion_b_phi}
\end{align} where $\phi$ is the principal angle of $\b$ from the subspace $\cS$. 

Let $\phi_k$ be the principal angle of $\hat{\bn}_{k}$ from $\cS$, and let $\bn_{k+1}$ be a global minimizer of \eqref{eq:ProofContinuousRecursion_b_phi}, with principal angle from $\cS$ equal to $\phi_{k+1}$. We show that $\phi_{k+1} \ge \phi_k$. To see this, note that the decrease in the objective function at iteration $k$ is	
\begin{align}
		\J(\hat{\bn}_{k})-\J(\bn_{k+1}) :=& M\, c_D \, \big \|\hat{\bn}_{k}\big \|_2 + N\, c_d \, \big \|\hat{\bn}_{k}\big \|_2 \, \cos(\phi_k) \nonumber \\
		&- M\, c_D \, \big \|\bn_{k+1}\big \|_2 - N\, c_d \, \big \|\bn_{k+1}\big \|_2 \, \cos(\phi_{k+1}).	\label{eq:DPCP-cont-Daniel}			
	\end{align} Since $\bn_{k+1}^\transpose \hat{\bn}_k=1$, we must have that 
	$\big \|\bn_{k+1}\big \|_2 \ge 1=\big \|\hat{\bn}_k \big \|_2$. Now if $\phi_{k+1} < \phi_k$, then $\cos(\phi_{k+1}) > \cos(\phi_k)$. But then \eqref{eq:DPCP-cont-Daniel} implies that $\J(\bn_{k+1})> \J(\hat{\bn}_{k})$, which
	is a contradiction on the optimality of $\bn_{k+1}$. Hence it must be the case that $\phi_{k+1} \ge \phi_k$, and so the sequence $\left\{\phi_k \right\}_k$ is non-decreasing.
In particular, since $\phi_0>0$ by hypothesis, we must also have $\phi_k >0$, i.e., $\hat{\bn}_k \not\in \cS, \, \forall k \ge 0$.

Letting $\psi_k$ be the angle of $\b$ from $\hat{\bn}_k$, the constraint 
$\b^\transpose \hat{\bn}_k = 1$ gives $0 \le \psi_k < \pi/2$ and
$\big \| \b \big \|_2 = 1 / \cos(\psi_k)$, and so we can write the optimization problem \eqref{eq:ProofContinuousRecursion_b_phi} equivalently as
\begin{align}
\min_{\b \in \Re^D} \, \, \, \frac{M c_D + Nc_d \cos(\phi)}{\cos(\psi_k)}
\, \, \, \text{s.t.} \, \, \, \b^\transpose \hat{\bn}_k = 1. \label{eq:ProofContinuousRecursion_b_phi_psi}
\end{align} If $\hat{\bn}_k$ is orthogonal to $\cS$, i.e., $\phi_k = \pi/2$, then 
$\J(\hat{\bn}_k) = M c_D \le \J(\b), \, \, \forall \b: \, \b^\transpose \hat{\bn}_k = 1$, with 
equality only if $\b  = \hat{\bn}_k$. As a consequence, $\bn_{k'} = \hat{\bn}_k, \, \forall k'> k$, and in particular if $\phi_0=\pi/2$, then $k^*=0$. 

So suppose that $\phi_k < \pi/2$ and let $\hat{\bn}_k^\perp$ be the normalized orthogonal projection of $\hat{\bn}_k$ onto $\cS^\perp$. We will prove that every global minimizer of 
problem \eqref{eq:ProofContinuousRecursion_b_phi_psi} must lie 
in the two-dimensional plane $\H:=\Span(\hat{\bn}_k,\hat{\bn}_k^\perp)$. To see this, let $\b$ have norm $1 / \cos(\psi_k)$ for some $\psi_k<\pi/2$. If $\psi_k > \pi/2 - \phi_k$, then such a $\b$ can not be a global minimizer of \eqref{eq:ProofContinuousRecursion_b_phi_psi}, as the feasible vector 
$\hat{\bn}_k^\perp / \sin(\phi_k) \in \H$ already gives a smaller objective, since 
\begin{align}
\J(\hat{\bn}_k^\perp / \sin(\phi_k)) = \frac{M c_D}{\sin(\phi_k)}  = 
\frac{M c_D}{\cos(\pi/2 - \phi_k)} < \frac{M c_D + N c_d \cos(\phi)}{\cos(\psi_k)} = \J(\b).
\end{align} Thus, without loss of generality, we may restrict to the case where $ \psi_k \le \pi/2-\phi_k$. Denote by $\hat{\h}_k$ the normalized projection of $\hat{\bn}_k$ onto $\cS$ and by $\hat{\bn}^\dagger $ the vector that is obtained from $\hat{\bn}_k$ by rotating it towards $\hat{\bn}_k^\perp$ by $\psi_k$. Note that both $\hat{\h}_k$ and $\hat{\bn}_k^\dagger$ lie in $\H$. Letting $\Psi_k \in [0,\pi]$ be the spherical angle 
between the spherical arc formed by $\hat{\bn}_k, \hat{\b}$ and the spherical arc formed by $\hat{\bn}_k,\hat{\h}_k$, the spherical law of cosines gives
\begin{align}
\cos(\angle \b, \hat{\h}_k) = \cos(\phi_k) \cos(\psi_k) + \sin(\phi_k) \sin(\psi_k) \cos(\Psi_k).
\end{align} Now, $\Psi_k$ is equal to $\pi$ if and only if $\hat{\bn}_k,\hat{\h}_k,\b$ are coplanar, i.e., if and only if $\b \in \H$. Suppose that $\b \not\in \H$. Then $\Psi_k < \pi$, and so $\cos(\Psi_k) > -1$, which implies that
\begin{align}
\cos(\angle \b, \hat{\h}_k) > \cos(\phi_k) \cos(\psi_k) - \sin(\phi_k) \sin(\psi_k) = \cos(\phi_k+\psi_k).
\end{align} This in turn implies that the principal angle $\phi$ of $\b$ from $\cS$ is strictly smaller than $\phi_k+\psi_k$, and so
\begin{align}
\J(\b) = \frac{Mc_D + N c_d \cos(\phi)}{\cos(\psi_k)} >
 \frac{Mc_D + N c_d \cos(\phi_k+\psi_k)}{\cos(\psi_k)} = \J(\hat{\bn}_k^\dagger / \cos(\psi_k)), \end{align} i.e., the feasible vector $\hat{\bn}_k^\dagger / \cos(\psi_k) \in \H$ gives strictly smaller objective than $\b$. 
 
To summarize, for the case where $\phi_k<\pi/2$, we have shown that any global minimizer $\b$ of \eqref{eq:ProofContinuousRecursion_b_phi_psi} must i) have angle $\psi_k$ from $\hat{\bn}_k$ less or equal to $\pi/2-\phi_k$, and ii) it must lie in $\Span(\hat{\bn}_k,\hat{\bn}_k^\perp)$. Hence, we can rewrite \eqref{eq:ProofContinuousRecursion_b_phi_psi} in the equivalent form
\begin{align}
\min_{\psi \in [-\pi/2+\phi_k,\pi/2-\phi_k]} \, \, \, \J_k(\psi):=\frac{M c_D + Nc_d \cos(\phi_k+\psi)}{\cos(\psi_k)}
\label{eq:ProofContinuousRecursion_phi_psi},
\end{align} where now $\psi_k$ takes positive values as $\b$ approaches $\hat{\bn}_k^\perp$ and negative values as it approaches $\hat{\h}_k$. The function $\J_k$ is continuous and differentiable in the interval $[-\pi/2+\phi_k,\pi/2-\phi_k]$, with derivative given by
\begin{align}
\frac{\partial \J_k}{\partial \psi} = \frac{Mc_D \sin(\psi)-Nc_d \sin(\phi_k)}{\cos^2(\psi)}.
\end{align} Setting the derivative to zero gives
\begin{align}
\sin(\psi) = \alpha \sin(\phi_k). \label{eq:ProofContinuousRecursionDer0}
\end{align} If $\alpha \sin(\phi_k) \ge \sin(\pi/2-\phi_k) = \cos(\phi_k)$, or equivalently 
$\tan(\phi_k) \ge 1/\alpha$, then $\J_k$ is strictly decreasing in the interval
$[-\pi/2+\phi_k,\pi/2-\phi_k]$, and so it must attain its minimum precisely at $\psi = \pi/2 -\phi_k$, which corresponds to the choice $\bn_{k+1}=\hat{\bn}_k^\perp / \sin(\phi_k)$. 
Then by an earlier argument we must have that $\hat{\bn}_{k'} \perp \cS, \, \forall k' \ge k+1$. If, on the other hand, $\tan(\phi_k) < 1/\alpha$, then 
the equation \eqref{eq:ProofContinuousRecursionDer0} defines an angle 
\begin{align}
\psi^*_k := \sin^{-1}(\alpha \sin(\phi_k)) \in (0, \pi/2-\phi_k), \label{eq:ProofContinuousRecursionPsi}
\end{align} at which $\J_k$ must attain its global minimum, since 
\begin{align}
\frac{\partial^2 \J_k}{\partial \psi ^2}\left(\psi^*_k\right) =  \frac{1}{\cos(\psi_k^*)} >0. 
\end{align} As a consequence, if $\tan(\phi_k) < 1/\alpha$, then 
\begin{align}
\phi_{k+1} = \phi_k + \sin^{-1}(\alpha \sin(\phi_k)) < \pi/2.
\end{align} We then see inductively that as long as $\tan(\phi_k) < 1/\alpha$, $\phi_k$ increases by a quantity which is bounded from below by  $\sin^{-1}(\alpha \sin(\phi_0))$. Thus, $\phi_k$ will keep increasing until it becomes greater than the solution to the 
equation $\tan(\phi) = 1 / \alpha$, at which point the global minimizer will be the vector
$\bn_{k+1} = \hat{\bn}_k^\perp / \sin(\phi_k)$, and so $\hat{\bn}_{k'} = \hat{\bn}_{k+1}, \, \forall k' \ge k+1$. 
Finally, under the hypothesis that $\phi_k < \tan^{-1}(1/\alpha)$, we have 
\begin{align}
\phi_k = \phi_0 + \sum_{j=0}^{k-1} \sin^{-1}(\alpha \sin(\phi_j)) \ge \phi_0 + k \sin^{-1}(\alpha \sin(\phi_0)),
\end{align} from where it follows that the maximal number of iterations needed for $\phi_k$ to become
larger than $\tan^{-1}(1/\alpha)$ is $\ceil[\bigg]{\frac{\tan^{-1}(1/\alpha)-\phi_0}{\sin^{-1}(\alpha \sin(\phi_0))}} $, at which point at most one more iteration will be needed
to achieve orthogonality to $\cS$. 

\subsection{Proof of Lemma \ref{lem:VectorIntegral}} \label{subsection:Proof-lem-VectorIntegral}

Letting $\boldsymbol{R}$ be a rotation that takes $\b$ to the first canonical vector $\e_1$, i.e., $\boldsymbol{R} \b = \e_1$, we have that 
\begin{align}
\int_{\z \in \Sp^{D-1}} \Sign(\b^\transpose \z)\z d \mu_{\Sp^{D-1}} &= \int_{\z \in \Sp^{D-1}} \Sign(\b^\transpose \boldsymbol{R}^\transpose \boldsymbol{R} \z)\z d \mu_{\Sp^{D-1}} \\
&=\int_{\z \in \Sp^{D-1}} \Sign(\e_1^\transpose \z)\boldsymbol{R}^\transpose \z d \mu_{\Sp^{D-1}} \\
& = \boldsymbol{R}^\transpose \int_{\z \in \Sp^{D-1}} \Sign(z_1) \z d \mu_{\Sp^{D-1}}, \label{eq:VectorIntegralIntermediate}
\end{align} where $z_1$ is the first cartesian coordinate of $\z$. Recalling the definition of $c_D$ in equation \eqref{eq:cD}, we see that
\begin{align}
\int_{\z \in \Sp^{D-1}} \Sign(z_1) z_1 d \mu_{\Sp^{D-1}} = \int_{\z \in \Sp^{D-1}} \left|z_1\right| d \mu_{\Sp^{D-1}} = c_D.
\end{align} Moreover, for any $i>1$, we have  
\begin{align}
\int_{\z \in \Sp^{D-1}} \Sign(z_1) z_i d \mu_{\Sp^{D-1}} = 0.
\end{align} Consequently, the integral in \eqref{eq:VectorIntegralIntermediate} becomes 
\begin{align}
\int_{\z \in \Sp^{D-1}} \Sign(\b^\transpose \z)\z d \mu_{\Sp^{D-1}} &= \boldsymbol{R}^\transpose \int_{\z \in \Sp^{D-1}} \Sign(z_1) \z d \mu_{\Sp^{D-1}} 
= \boldsymbol{R}^\transpose \left(c_D \e_1 \right) = c_D \b.
\end{align}	

\subsection{Proof of Lemma \ref{lem:Koksma-eO}} \label{subsection:Proof-lem-Koksma-eO}

For any $\b \in \Sp^{D-1}$ we can write
\begin{align}
c_D \b - \o_{\b} = \rho_1 \b + \rho_2 \bzeta, \label{eq:VectorIntegrationError}
\end{align} for some vector $\bzeta \in \Sp^{D-1}$ orthogonal to $\b$, and so it is enough to show that $\sqrt{\rho_1^2 + \rho_2^2} \le \sqrt{5} \mathfrak{S}_{D,M}(\bO)$.
Let us first bound from above $\left|\rho_1 \right|$ in terms of
$\mathfrak{S}_{D,M}(\bO)$. Towards that end, observe that
\begin{align}
\rho_1 &= \b^\transpose (c_D \b - \o_{\b}) =  c_D - \frac{1}{M}\sum_{j=1}^M \big | \b^\transpose \o_j \big | 
= \int_{\z \in \Sp^{D-1}} f_{\b}(\z) d \mu_{\Sp^{D-1}} - \frac{1}{M} \sum_{j=1}^M f_{\b}(\o_j),
\end{align} 
where the equality follows from the definition of $c_D$ in \eqref{eq:cD} and recalling that $f_{\b}(\z) = \left| \b^\transpose \z \right|$. In other words, $\rho_1$ is the error in approximating the integral of $f_{\b}$ on $\Sp^{D-1}$ by the average of $f_{\b}$ on the point set $\bO$. 

Now, notice that each \emph{super-level set} $\left\{\z \in \Sp^{D-1}: f_{\b}(\z) \ge \alpha \right\}$ for $\alpha \in [0,1]$, is the union of two spherical caps, and also that 
\begin{align}
\sup_{\z \in \Sp^{D-1}} f_{\b}(\z)  - \inf_{\z \in \Sp^{D-1}} f_{\b}(\z) =1 - 0= 1.
\end{align} We these in mind, repeating the entire argument of the proof of Theorem 1 in 
\citep{Harman:UDT10} that lead to inequality $(9)$ in \citep{Harman:UDT10}, but now
for a measurable function with respect to $\mu_{\Sp^{D-1}}$ (that would be $f_{\b}$), leads directly to 
\begin{align}
\left| \rho_1 \right| \le  \mathfrak{S}_{D,M}(\bO). \label{eq:Proof_rho1}
\end{align} For $\rho_2$ we have that 
\begin{align}
&\rho_2 = \bzeta^\transpose \left(c_D \b \right)- \bzeta ^\transpose \o_{\b} \\
&= \int_{\z \in \Sp^{D-1}} \Sign \left( \b ^\transpose \z \right) 
\bzeta ^\transpose \z d \mu_{\Sp^{D-1}}  - \frac{1}{M} \sum_{j=1}^M \Sign \left( \b ^\transpose \o_j \right) 
\bzeta^\transpose \o_j \\
& = \int_{\z \in \Sp^{D-1}}  g_{\b,\bzeta}( \z) d \mu_{\Sp^{D-1}}  - \frac{1}{M} \sum_{j=1}^M g_{\b,\bzeta}(\o_j), 
\end{align} where $g_{\b,\bzeta}: \Sp^{D-1} \rightarrow \Re$ is defined as 
$g_{\b,\bzeta}(\z) = \Sign \left( \b ^\transpose \z \right) \bzeta^\transpose \z$. Then a similar argument as for $\rho_1$, with the difference that now
\begin{align}
\sup_{\z \in \Sp^{D-1}} g_{\b,\bzeta}(\z)  - \inf_{\z \in \Sp^{D-1}} g_{\b,\bzeta}(\z) =1 - (-1)= 2,
\end{align} leads to 
\begin{align}
\left| \rho_2 \right| \le  2\mathfrak{S}_{D,M}(\bO). \label{eq:Proof_rho2}
\end{align} In view of \eqref{eq:Proof_rho1}, inequality \eqref{eq:Proof_rho2} establishes that
$\sqrt{\rho_1^2 + \rho_2^2} \le \sqrt{5} \mathfrak{S}_{D,M}(\bO)$, which concludes the proof of the lemma.

\subsection{Proof of Lemma \ref{lem:VectorIntegralInliers}} \label{subsection:Proof-lem-VectorIntegralInliers}

Since $\x$ lies in $\cS$, we have $\f_{\b}(\x) = \f_{\v}(\x) = \f_{\hat{\v}}(\x)$, so that 
\begin{align}
\int_{\x \in \Sp^{D-1} \cap \cS} \Sign(\b^\transpose \x)\x d \mu_{\Sp^{D-1}} = \int_{\x \in \Sp^{D-1} \cap \cS} \Sign(\hat{\v}^\transpose \x)\x d \mu_{\Sp^{D-1}}.
\end{align} Now express $\x$ and $\hat{\v}$ on a basis of $\cS$, use Lemma \ref{lem:VectorIntegral} replacing $D$ with $d$, and then switch back to the standard basis of $\Re^D$.

\subsection{Proof of Theorem \ref{thm:DiscreteNonConvex}} \label{subsection:Proof-thm-DiscreteNonConvex}

To prove the theorem we need the following lemma.
\begin{lem} \label{lem:eNaturalBounds}
For any $\b \in \Sp^{D-1}$ we have that
\begin{align}
& M (c_D +  \eO) \ge \big \| \bO^\transpose \b \big \|_1 \ge M (c_D -  \eO)\\
& N(c_d + \eX) \, \cos(\phi) \ge \big \| \bX^\transpose \b \big \|_1 \ge N(c_d - \eX) \ \cos(\phi).
\end{align}
\end{lem}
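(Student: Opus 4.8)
The plan is to exploit the identities $\big\|\bO^\transpose \b\big\|_1 = M\,\b^\transpose \o_{\b}$ and $\big\|\bX^\transpose \b\big\|_1 = N\,\b^\transpose \x_{\b}$ already derived in \S\ref{subsection:DiscrepancyBounds}, and to control the two inner products by comparing the discrete average outlier and inlier against their continuous values $c_D\b$ and $c_d\hat{\v}$ supplied by Lemmas \ref{lem:VectorIntegral} and \ref{lem:VectorIntegralInliers}. The only analytic tool needed throughout is the Cauchy--Schwarz inequality, combined with the definitions of $\eO$ and $\eX$.

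For the outlier bound, I would first write $\b^\transpose \o_{\b} = c_D\,\b^\transpose\b - \b^\transpose(c_D\b - \o_{\b}) = c_D - \b^\transpose(c_D\b-\o_{\b})$, using $\b\in\Sp^{D-1}$. Then Cauchy--Schwarz and the definition \eqref{eq:epsilonO} give $\big|\b^\transpose(c_D\b-\o_{\b})\big| \le \big\|c_D\b-\o_{\b}\big\|_2 \le \eO$, so that $c_D-\eO \le \b^\transpose\o_{\b}\le c_D+\eO$; multiplying by $M$ yields the first pair of inequalities.

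The inlier bound requires one extra reduction, because the relevant continuous value is $c_d\hat{\v}$ rather than $c_d\b$, where $\v=\pi_{\cS}(\b)$. Since $\phi<\pi/2$ we have $\v\neq\0$, so $\hat{\v}$ is well-defined. As each $\x_j\in\cS$, both the inner product $\b^\transpose\x_j=\v^\transpose\x_j$ and the sign $\Sign(\b^\transpose\x_j)=\Sign(\hat{\v}^\transpose\x_j)$ depend only on the projection, whence $\x_{\b}=\x_{\hat{\v}}$ and this vector lies in $\cS$. Consequently $\b^\transpose\x_{\b}=\v^\transpose\x_{\hat{\v}}=\big\|\v\big\|_2\,\hat{\v}^\transpose\x_{\hat{\v}}=\cos(\phi)\,\hat{\v}^\transpose\x_{\hat{\v}}$, using $\big\|\v\big\|_2=\cos(\phi)$. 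Repeating the outlier argument inside $\cS$ --- namely $\hat{\v}^\transpose\x_{\hat{\v}} = c_d - \hat{\v}^\transpose(c_d\hat{\v}-\x_{\hat{\v}})$ with $\hat{\v}\in\Sp^{D-1}\cap\cS$, invoking the second form of the definition \eqref{eq:epsilonX} --- gives $c_d-\eX\le\hat{\v}^\transpose\x_{\hat{\v}}\le c_d+\eX$. Multiplying by the nonnegative factor $N\cos(\phi)$ delivers the second pair of inequalities.

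The only subtlety, and the step I would treat most carefully, is the reduction $\x_{\b}=\x_{\hat{\v}}\in\cS$ for the inlier term: it is precisely what lets the continuous value $c_d\hat{\v}$ enter the estimate, and it is where the hypothesis $\phi<\pi/2$ (equivalently $\b\notin\cS^\perp$) is used to guarantee that $\hat{\v}$ exists. Everything else is routine bookkeeping with Cauchy--Schwarz and the two discrepancy definitions.
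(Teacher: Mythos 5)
Your proof is correct and follows essentially the same route as the paper: both rest on the identities $\big\|\bO^\transpose\b\big\|_1=M\,\b^\transpose\o_{\b}$ and $\big\|\bX^\transpose\b\big\|_1=N\,\b^\transpose\x_{\b}$, the observation that $\x_{\b}=\x_{\hat{\v}}$ lies in $\cS$ and deviates from $c_d\hat{\v}$ by at most $\eX$, and Cauchy--Schwarz against the definitions of $\eO$ and $\eX$. The only cosmetic difference is that you extract the factor $\cos(\phi)=\big\|\pi_{\cS}(\b)\big\|_2$ up front and run the estimate intrinsically in $\cS$, whereas the paper writes $\x_{\b}=c_d\hat{\v}+\bxi$ with $\bxi\in\cS$ and recovers the same $\cos(\phi)$ from the bound $\left|\b^\transpose\bxi\right|\le\eX\cos(\phi)$; the two bookkeepings are equivalent.
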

\begin{proof}
We only prove the second inequality as the first is even simpler. Let $\v \neq \0$ be the orthogonal projection of $\b$ onto $\cS$.  By definition of $\eX$, there exists a vector $\bxi \in \cS$ of $\ell_2$ norm less or equal to $\eX$, such that 
\begin{align}
 \x_{\v} = \x_{\b} = \frac{1}{N} \sum_{j=1}^N \Sign(\b^\transpose \x_j)\x_j = c_d \hat{\v} + \bxi.
\end{align} Taking inner product of both sides with $\b$ gives
\begin{align}
\frac{1}{N} \big \| \bX^\transpose \b \big \|_1 = c_d \, \cos(\phi) + \b^\transpose \bxi.
\end{align} Now, the result follows by noting that $\left| \b^\transpose \bxi \right| \le \eX \cos(\phi)$, since the principal angle of $\b$ from $\Span(\bxi)$ can not be less then $\phi$.
\end{proof} 
Now, let $\b^*$ be an optimal solution of \eqref{eq:ell1}. Then $\b^*$ must satisfy the first order optimality relation
\begin{align}
\0 \in \lambda \b^* + \btX \Sgn(\btX^\transpose \b^*) 
\label{eq:OptimalityCondition},
\end{align} where $\lambda$ is a scalar Lagrange multiplier parameter, and
$\Sgn$ is the sub-differential of the $\ell_1$ norm. 
For the sake of contradiction, suppose that $\b^* \not\perp \cS$. If $\b^* \in \cS$, then using Lemma \ref{lem:eNaturalBounds} we have
\begin{align}
& M \, c_D + M \, \epsilon_{\bO} \ge \min_{\b \perp \cS, \b^\transpose \b=1} \big \|\bO^\transpose \b \big \|_1 \ge
\big \|\bO^\transpose \b^* \big \|_1 + \big \|\bX^\transpose \b^* \big \|_1 \nonumber \\
& \ge M \, c_D - M \, \epsilon_{\bO} + N \, c_d - N \, \epsilon_{\bX},
\end{align} which violates the first inequality of hypothesis \eqref{eq:gammaUP}. Hence, we can assume that 
$\b^* \not\in \cS$. 
 
By the general position hypothesis as well as Proposition \ref{prp:NonConvexMaximalInterpolation}, $\b^*$ will be orthogonal to precisely $D-1$ points, among which 
$K_1$ points belong to $\bO$, say, without loss of generality, $\o_1,\dots,\o_{K_1}$, and $0 \le K_2\le d-1$ points belong to $\bX$, say $\x_1,\dots,\x_{K_2}$. Then there must exist real numbers $-1 \le \alpha_j,\beta_j \le 1$, such that
\small
\begin{align}
\lambda \b^*+\sum_{j=1}^{K_1} \alpha_j \o_j 
+\sum_{j=K_1+1}^M \Sign(\o_j^\transpose \b^*) \o_j +\sum_{j=1}^{K_2} \beta_j \x_j + \sum_{j=K_2+1}^N \Sign(\x_j^\transpose \b^*) \x_j  = \0.
\end{align} \normalsize Since $\Sign(\o_j^\transpose \b^*)=0, \forall j \le K_1$ and similarly $\Sign(\x_j^\transpose \b^*)=0, \forall j \le K_2$, we can equivalently write
\begin{align}
\lambda \b^*+\sum_{j=1}^{K_1} \alpha_j \o_j 
+\sum_{j=1}^M \Sign(\o_j^\transpose \b^*) \o_j +\sum_{j=1}^{K_2} \beta_j \x_j + \sum_{j=1}^N \Sign(\x_j^\transpose \b^*) \x_j  = \0 \label{eq:OptimalityExpanded}
\end{align} or more compactly  
\begin{align}
\lambda \b^*+\bxi_{\bO}
+M \, \o_{\b^*} +\bxi_{\bX}+ N \, \x_{\hat{\v}^*}  = \0 \label{eq:Opt},
\end{align} where $\hat{\v}^*$ is the normalized projection of $\b^*$ onto $\cS$ (nonzero since $\b^* \not\perp \cS$ by hypothesis), and 
\begin{align}
\o_{\b^*} &:= \frac{1}{M} \sum_{j=1}^M \Sign(\o_j^\transpose \b^*) \o_j, &  \x_{\hat{\v}^*} &:= \frac{1}{N} \sum_{j=1}^N \Sign(\x_j^\transpose \hat{\v}^*) \x_j, \\
\bxi_{\bO} &:=\sum_{j=1}^{K_1} \alpha_j \o_j,  & \bxi_{\bX} &:=\sum_{j=1}^{K_2} \beta_j \x_j.
\end{align} From the definitions of $\eO$ and $\eX$ in \eqref{eq:epsilonO} and \eqref{eq:epsilonX} respectively, we have that  
\begin{align}
\o_{\b^*} &= c_D \, \b^* + \boldeta_{\bO},\, \, \,   ||\boldeta_{\bO}||_2 \le \eO \\
\x_{\hat{\v}^*} &= c_d \,\hat{\v}^*  + \boldeta_{\bX}, \, \, \,  ||\boldeta_{\bX}||_2 \le \eX,
\end{align} and so \eqref{eq:Opt} becomes
\begin{align}
\lambda \b^*+
\bxi_{\bO}
+M \,  c_D \, \b^* +M \, \boldeta_{\bO} +\bxi_{\bX}+ N \, c_d \,\hat{\v}^* + N\, \boldeta_{\bX}  = \0 \label{eq:OptFinal}.
\end{align} Since $\b^* \not\in \cS$, we have that $\b^*,\hat{\v}^*$ are linearly independent. Define the two-dimensional subspace $\U:= \Span\left(\b^*,\hat{\v}^*\right)$ and project
\eqref{eq:OptFinal} onto $\U$ to get
\begin{align}
&\lambda \b^*+\pi_\U(\bxi_{\bO})
+M \,  c_D \, \b^* +M \, \pi_\U(\boldeta_{\bO}) +\pi_\U(\bxi_{\bX})+ N \, c_d \,\hat{\v}^*  + N\, \pi_\U(\boldeta_{\bX})  = \0. \label{eq:OptProjected}
\end{align} Now, very vector $\u$ in the image of $\pi_{\U}$ 
can be written as a linear combination of $\b^*$ and $\hat{\v}^*$:
\begin{align}
\u = \left[\u\right]_{\b^*} \, \b^* +  \left[\u\right]_{\hat{\v}^*} \, \hat{\v}^*,\, \, \, \text{with} \, \, \, \left[\u\right]_{\b^*}, \, \left[\u\right]_{\hat{\v}^*} \in \Re.
\end{align} Taking inner product of $\u$ with $\b^*$ and $\hat{\v}^*$, we get respectively
\begin{align}
\u^\top \b^* &= \left[\u\right]_{\b^*} + \left[\u\right]_{\hat{\v}^*} \cos(\phi^*) \\
\u^\top \hat{\v}^* &= \left[\u\right]_{\b^*} \cos(\phi^*) + \left[\u\right]_{\hat{\v}^*}, 
\end{align} where $\phi^*$ is the angle between $\b^*$ and $\hat{\v}^*$, i.e., the angle of $\b^*$ from $\cS$. Solving with respect to $\left[\u\right]_{\hat{\v}^*}$, we obtain
\begin{align}
\left[\u\right]_{\hat{\v}^*} = \frac{\u^\top \hat{\v}^*-\u^\top \b^* \cos(\phi^*)}{1-\cos^2(\phi^*)},
\end{align} which in turn gives an upper bound on the magnitude of $\left[\u\right]_{\hat{\v}^*}$:
\begin{align}
\left|\left[\u\right]_{\hat{\v}^*}\right| \le \frac{1+\cos(\phi^*)}{1-\cos^2(\phi^*)} \, \left\|\u\right\|_2. \label{eq:ProofTheoremDiscreteUpperBoundCoefficient}
\end{align} Going back to \eqref{eq:OptProjected} and writing each vector as a linear combination of $\b^*$ and $\hat{\v}^*$, we obtain
\small
\begin{align}
&\lambda \b^* + \left[\pi_\U(\bxi_{\bO})\right]_{\b^*} \b^* + \left[\pi_\U(\bxi_{\bO})\right]_{\hat{\v}^*}\hat{\v}^*
+M \,  c_D \, \b^* 
+M \, \left[\pi_\U(\boldeta_{\bO})\right]_{\b^*}\b^*+ M \, \left[\pi_\U(\boldeta_{\bO})\right]_{\hat{\v}^*}\hat{\v}^* + \nonumber \\
&\left[\pi_\U(\bxi_{\bX})\right]_{\b^*}\b^*+\left[\pi_\U(\bxi_{\bX})\right]_{\hat{\v}^*}\hat{\v}^*+ N \, c_d \,\hat{\v}^* 
 + N\, \left[\pi_\U(\boldeta_{\bX})\right]_{\b^*}\b^*+N\, \left[\pi_\U(\boldeta_{\bX})\right]_{\hat{\v}^*}\hat{\v}^*  = \0.
\end{align} \normalsize Since $\U$ is a two-dimensional space, there exists a vector 
$\hat{\bzeta} \in \U$ that is orthogonal to $\b^*$ but not orthogonal to $\hat{\v}^*$. 
Projecting the above equation onto the line spanned by $\hat{\bzeta}$,
we obtain the one-dimensional equation
\small
\begin{align}
\left(\left[\pi_\U(\bxi_{\bO})\right]_{\hat{\v}^*} +M\left[\pi_\U(\boldeta_{\bO})\right]_{\hat{\v}^*}+
\left[\pi_\U(\bxi_{\bX})\right]_{\hat{\v}^*}+N \, c_d+N \, \left[\pi_\U(\boldeta_{\bX})\right]_{\hat{\v}^*} \right) \cdot  
\hat{\bzeta}^\transpose {\hat{\v}^*} = 0.
\end{align} \normalsize Since $\hat{\bzeta}$ is not orthogonal to $\hat{\v}^*$, the above equation implies that 
\begin{align}
\left[\pi_\U(\bxi_{\bO})\right]_{\hat{\v}^*} +M \, \left[\pi_\U(\boldeta_{\bO})\right]_{\hat{\v}^*}+
\left[\pi_\U(\bxi_{\bX})\right]_{\hat{\v}^*}+N \, c_d+N\, \left[\pi_\U(\boldeta_{\bX})\right]_{\hat{\v}^*}=0,
\end{align} which, in turn, implies that
\begin{align}
N \, c_d \le \left| \left[\pi_\U(\bxi_{\bO})\right]_{\hat{\v}^*}\right| +M \, \left|\left[\pi_\U(\boldeta_{\bO})\right]_{\hat{\v}^*}\right|+
\left|\left[\pi_\U(\bxi_{\bX})\right]_{\hat{\v}^*}\right|+N\, \left|\left[\pi_\U(\boldeta_{\bX})\right]_{\hat{\v}^*}\right|.
\end{align} Invoking the upper bound of \eqref{eq:ProofTheoremDiscreteUpperBoundCoefficient} together with 
\begin{align}
\big \| \bxi_{\bO} \big \|_2 \le \R_{\bO,K_1}, \, \big \| \bxi_{\bX} \big \|_2 \le \R_{\bX,K_2}, \, \big \| \boldeta_{\bO} \big \|_2 \le \eO, \, \big \| \boldeta_{\bX} \big \|_2 \le \eX,
\end{align} and the definition of $\R_{\bO,\bX}$ (Definition \ref{dfn:DPCPsingle_circumradius}), we get
\begin{align}
N \, c_d \le \frac{1+\cos(\phi^*)}{1-\cos^2(\phi^*)} \left(\R_{\bO,\bX}+M\, \eO + N \, \eX \right),
\end{align} or equivalently
\begin{align}
&N \, c_d \, \cos^2(\phi^*) +(\R_{\bO,\bX}+M\, \eO + N\, \eX) \, \cos(\phi^*) \nonumber \\
&+ (\R_{\bO,\bX}+M\, \eO + N\, \eX - N \, c_d) \ge 0.
\end{align} This is a quadratic polynomial in $\cos(\phi^*)$, whose constant term is negative by the second inequality of hypothesis \eqref{eq:gammaUP}, and thus has exactly one positive and one negative root. As consequence, this polynomial being non-negative together with the fact that $\cos(\phi^*) >0$, implies that $\cos(\phi^*)$ must be greater than the positive root of the polynomial, i.e., 
\begin{align}
\cos(\phi^*) \ge \frac{-\Q+\sqrt{\Q^2+4 c_d(c_d-\Q)}}{2c_d}, \, \, \, \Q:= \frac{\R_{\bO,\bX}}{N}+\eO\frac{M}{N} +  \eX. \label{eq:ProofTheoremDiscretePhiBound}
\end{align} On the other hand, by Lemma \ref{lem:eNaturalBounds} we have
\small
\begin{align}
M(c_D +\eO)\ge \min_{\hat{\b} \perp \cS }\left\| \btX^\top \hat{\b} \right\|_1 \ge \left\| \btX^\top \b^* \right\|_1 \ge M(c_D -\eO) + N(c_d -\eX) \cos(\phi^*), \label{eq:ProofNonConvexPhiSmallLargeObjective}
\end{align} \normalsize which implies that
\begin{align}
2 M \eO \ge N(c_d -\eX) \frac{-\Q+\sqrt{\Q^2+4 c_d(c_d-\Q)}}{2c_d}.
\end{align} This latter inequality is equivalent to the inequality
\begin{align}
& 2\eO^2(3c_d-\eX)\left(\frac{M}{N}\right)^2+\eO(c_d-\eX)\left(2\frac{\R_{\bO,\bX}}{N}+\eX+c_d\right)\frac{M}{N}\nonumber \\
&-(c_d-\eX)^2\left(c_d-\eX-\frac{\R_{\bO,\bX}}{N}\right) \ge 0,
\end{align} whose left-hand-side we view as quadratic polynomial in $M/N$. By the first two inequalities of hypothesis \eqref{eq:gammaUP}, the second term of this polynomial is positive, while the constant term is negative, and so this inequality is equivalent to $M/N$ being greater or equal than the unique positive root of that polynomial. But this contradicts the third inequality of hypothesis \eqref{eq:gammaUP}. Consequently, the initial hypothesis of the proof that $\b^* \not\perp \cS$ can not be true, and the theorem is proved.

\myparagraph{A Geometric View of the Proof of Theorem \ref{thm:DiscreteNonConvex}} 
Let us provide some geometric intuition that underlies the proof of Theorem \ref{thm:DiscreteNonConvex}. It is instructive to begin our discussion by considering the case
$d=1, D=2$, i.e. the inlier space is simply a line and the ambient space is a $2$-dimensional plane. 
Since all points have
unit $\ell_2$-norm, every column of $\bX$ will be of the form $\hat{\x}$ or $-\hat{\x}$ for a fixed
vector $\hat{\x} \in \cS^{1}$ that spans the inlier space $\cS$. In this setting, let us examine a global solution $\b^*$ of the optimization problem \eqref{eq:ell1}. We will start by assuming that such a $\b^*$ is not orthogonal to $\cS$, and intuitively arrive at the conclusion that this can not be the case as long as there are \emph{sufficiently many} inliers.

We will argue on an intuitive level that if $\b^* \not\perp \cS$, then the principal angle $\phi^*$ of $\b^*$ from $\cS$ needs to be small; this is captured precisely by \eqref{eq:ProofTheoremDiscretePhiBound} in the proof of the theorem. To see this, suppose $\b^* \not\perp \cS$; then
$\b^*$ will be non-orthogonal to every inlier, and by Proposition\ref{prp:NonConvexMaximalInterpolation} orthogonal to $D-1=1$ outlier, say $\o_1$. The optimality condition \eqref{eq:OptimalityCondition} specializes to 
\begin{align}
 \alpha_1 \o_1 
+\underbrace{\sum_{j=1}^M \Sign(\o_j^\transpose \b^*) \o_j}_{M\o_{\b^*}}  + \sum_{j=1}^N \Sign(\x_j^\transpose \b^*) \x_j + \lambda \b^* = \0,
\end{align} where $-1 \le \alpha_1 \le 1$. Notice that the third term is simply $N \, \Sign(\hat{\x}^\transpose \b^*) \hat{\x}$, and so
\begin{align}
\alpha_1 \o_1
+M \, \o_{\b^*} + \lambda \b^* =- N \, \Sign(\hat{\x}^\transpose \b^*) \hat{\x}. \label{eq:OptimalityLine}
\end{align} Now, what \eqref{eq:OptimalityLine} is saying is that the point $- N \, \Sign(\hat{\x}^\transpose \b^*)\hat{\x}$ must lie inside the 
set 
\begin{align}
\Conv(\pm \o_1)+\left\{M\o_{\b^*}\right\} + \Span(\b^*)=
\left\{ \alpha_1 \o_1
+M\o_{\b^*} + \lambda \b^* : \, |\alpha_1|\le 1, \lambda \in \Re \right\}, 
\end{align} where the $+$ operator on sets is the Minkowski sum. Notice that 
the set $\Conv(\pm \o_1)+M\o_{\b^*}$ is the translation of the 
line segment (polytope) $\Conv(\pm \o_1)$ by $M\o_{\b^*}$. Then \eqref{eq:OptimalityLine}
says that if we draw all affine lines that originate from every point of $\Conv(\pm \o_1)+M\o_{\b^*}$ and have direction $\b^*$, then one of these lines
must meet the point $- N \, \Sign(\hat{\x}^\transpose \b^*)\hat{\x}$. Let us illustrate this
for the case where $M=N=5$ and say it so happens that $\b^*$ has a rather large angle $\phi^*$ from $\cS$, say $\phi^*=45^\circ$. 
Recall that $\o_{\b^*}$ is concentrated around
$\c_D \, \b^*$ and for the case $D=2$ we have $c_D=\frac{2}{\pi}$.
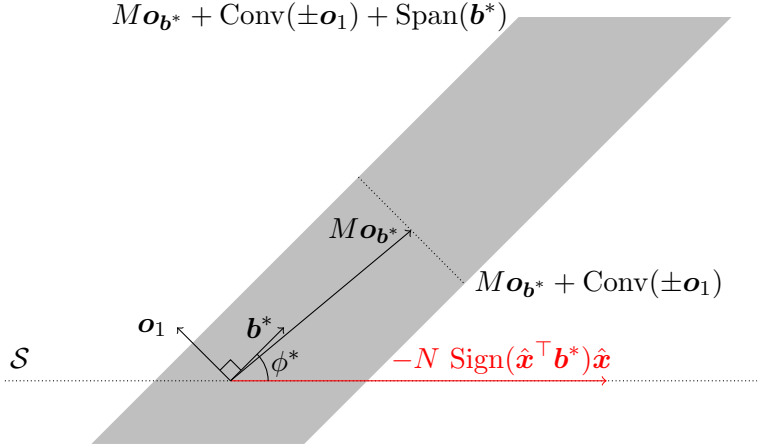
\begin{figure}[!h]
	\centering
	\begin{tikzpicture}
		\fill[fill=lightgray] (-1.85,-0.84,0)--(3.83,4.84,0)node[anchor=east]{$M\o_{\b^*}+\Conv(\pm\o_1)+\Span(\b^*)$}-- (6.66,4.84,0)--(3.11,1.29,0)--(0.98,-0.84,0)--(-1.85,-0.84,0);
	\draw[densely dotted] (-3,0,0) -- (7,0,0);	
	\node at (-2.8,0.3,0) {$\cS$};
	\draw[color=red][->] (0,0,0)  -- (5,0,0);	
	 \node[color=red][anchor=west] at (2,0.3,0) {$- N \, \Sign(\hat{\x}^\transpose \b^*)\hat{\x}$};
	\draw[densely dotted] (1.7,2.71,0) -- (3.11,1.29,0)node[anchor=west]{$M\o_{\b^*}+\Conv(\pm\o_1)$};
	\draw[->] (0,0,0) -- (2.4,2,0) node[anchor=east]{$M\o_{\b^*}$};
\draw[->] (0,0,0) -- (0.71,0.71,0) node[anchor=east]{$\b^*$};
\draw (0.5,0,0) arc (0:45:0.5); 
 \node[anchor=west] at (0.4,0.23,0) {$\phi^*$};
\draw[->] (0,0,0) -- (-0.71,0.71,0) node[anchor=east]{$\o_1$};
\draw[] (-0.14,0.14,0) -- (0,0.28,0) -- (0.14,0.14,0) ;
	\end{tikzpicture}
	\caption{Geometry of the optimality condition \eqref{eq:OptimalityCondition} and \eqref{eq:OptimalityLine} for the case $d=1,D=2,M=N=5$. The polytope $M\o_{\b^*}+\Conv(\pm\o_1)+\Span(\b^*)$ misses the point $- N \, \Sign(\hat{\x}^\transpose \b^*)\hat{\x}$ and so the optimality condition can not be true for both $\b^* \not\perp \cS = \Span(\hat{\x})$ and $\phi^*$ large.}
	\label{fig:OptimalLine45}
\end{figure} As illustrated in Figure \ref{fig:OptimalLine45}, because $\phi^*$ is large, the unbounded polytope $M\o_{\b^*}+\Conv(\pm\o_1)+\Span(\b^*)$ misses the point $- N \, \Sign(\hat{\x}^\transpose \b^*)\hat{\x}$ thus making the optimality equation \eqref{eq:OptimalityLine} infeasible. This indicates that critical vectors $\b^* \not\perp \cS$ having large angles from $\cS$ are unlikely to exist. 

On the other hand, critical points $\b^* \not\perp \cS$ may exist, but their angle $\phi^*$ from $\cS$ needs to be small, as illustrated in Figure \ref{fig:OptimalLine5}. 
\begin{figure}[!h]
	\centering
	\begin{tikzpicture}
		\fill[fill=lightgray,rotate=-35] (-1.85,-0.84,0)--(3.83,4.84,0)node[anchor=east]{$M\o_{\b^*}+\Conv(\pm\o_1)+\Span(\b^*)$}-- (6.66,4.84,0)--(3.11,1.29,0)--(0.98,-0.84,0)--(-1.85,-0.84,0);		
		\draw[densely dotted] (-3,0,0)  -- (9,0,0);
		\node[] at (-2.8,0.3,0) {$\cS$};
	\draw[color=red][->] (0,0,0)  -- (5,0,0) 
	node[anchor=south]
	{$- N \, \Sign(\hat{\x}^\transpose \b^*)\hat{\x}$};	
	\draw[densely dotted,rotate=-35] (1.7,2.71,0) -- (3.11,1.29,0) node[anchor=west]{$M\o_{\b^*}+\Conv(\pm\o_1)$};
	\draw[->,rotate=-35] (0,0,0) -- (2.4,2,0); 
	\node[] at (2.4,0.5,0) {$M\o_{\b^*}$};
\draw[->,rotate=-35] (0,0,0) -- (0.71,0.71,0) node[anchor=south]{$\b^*$};
\draw[->,rotate=-35] (0,0,0) -- (-0.71,0.71,0) node[anchor=east]{$\o_1$};
\draw[rotate=-35] (-0.14,0.14,0) -- (0,0.28,0) -- (0.14,0.14,0) ;
	\end{tikzpicture}
	\caption{Geometry of the optimality condition \eqref{eq:OptimalityCondition} and \eqref{eq:OptimalityLine} for the case $d=1,D=2,M=N=5$. A critical $\b^* \not\perp \cS$ exists, but its angle from $\cS$ is small, so that the polytope $M\o_{\b^*}+\Conv(\pm\o_1)+\Span(\b^*)$  can contain the point $- N \, \Sign(\hat{\x}^\transpose \b^*)\hat{\x}$. However, $\b^*$ can not be a global minimizer, since small angles from $\cS$ yield large objective values.}
	\label{fig:OptimalLine5}
\end{figure}
 However, such critical points can not be global minimizers, because small angles from $\cS$ yield large objective values; this is captured precisely by equation \eqref{eq:ProofNonConvexPhiSmallLargeObjective} in the proof of the theorem. Hence the only possibility that critical points $\b^* \not\perp \cS$ that are also global minimizers do exist is that the number of inliers is significantly less than the number of outliers, i.e. 
$N << M$, as illustrated in Figure \ref{fig:OptimalLineSmallN}. The precise notion of how many inliers should exist with respect to outliers is captured by condition \eqref{eq:gammaUP} of Theorem \ref{thm:DiscreteNonConvex}.
\begin{figure}[!h]
	\centering
	\begin{tikzpicture}
		\fill[fill=lightgray] (-1.85,-0.84,0)--(3.83,4.84,0)node[anchor=east]{$M\o_{\b^*}+\Conv(\pm\o_1)+\Span(\b^*)$}-- (6.66,4.84,0)--(3.11,1.29,0)--(0.98,-0.84,0)--(-1.85,-0.84,0);
	\draw[color=red][->] (0,0,0)  -- (1,0,0) node[anchor=west]{$- N \, \Sign(\hat{\x}^\transpose \b^*)\hat{\x}$};
	\draw[densely dotted] (1.7,2.71,0) -- (3.11,1.29,0)node[anchor=west]{$M\o_{\b^*}+\Conv(\pm\o_1)$};
	\draw[->] (0,0,0) -- (2.4,2,0) node[anchor=east]{$M\o_{\b^*}$};
\draw[->] (0,0,0) -- (0.71,0.71,0) node[anchor=east]{$\b^*$};
\draw[->] (0,0,0) -- (-0.71,0.71,0) node[anchor=east]{$\o_1$};
\draw[] (-0.14,0.14,0) -- (0,0.28,0) -- (0.14,0.14,0) ;
	\end{tikzpicture}
	\caption{Geometry of the optimality condition \eqref{eq:OptimalityCondition} and \eqref{eq:OptimalityLine} for the case $d=1,D=2,N<<M$. Critical points $\b^* \not\perp \cS$ do exist and moreover they can have large angle from $\cS$. This is because $N$ is small and so the polytope $M\o_{\b^*}+\Conv(\pm\o_1)+\Span(\b^*)$ contains the point $- N \, \Sign(\hat{\x}^\transpose \b^*)\hat{\x}$. Moreover, such critical points can be global minimizers. Condition \eqref{eq:gammaUP} of Theorem \ref{thm:DiscreteNonConvex} prevents such cases from occuring.}
	\label{fig:OptimalLineSmallN}
\end{figure} 

We should note here that the picture for the general setting is analogous to what we described above, albeit harder to visualize: with reference to equation \eqref{eq:OptimalityExpanded}, the optimality condition says that every feasible point $\b^* \not\perp \cS$ must have the following property: there exist
$0 \le K_2 \le d-1$ inliers $\x_1,\dots,\x_{K_2}$ and
 $0 \le K_1 \le D-1-K_2$ outliers $\o_1,\dots,\o_{K_1}$ to which $\b^*$ is orthogonal, and two points $\bxi_{\bO} \in \Conv(\pm \o_1 \pm \cdots \pm \o_{K_1})+\o_{\b^*}$ and $\bxi_{\bX} \in \Conv(\pm \x_1 \pm \cdots \pm \x_{K_2})+\x_{\b^*}$ that are joined by an affine line that is parallel to the line spanned by $\b^*$. In fact in our proof of Theorem \ref{thm:DiscreteNonConvex} we reduced this general case to the case $d=1,D=2$ described above: this reduction is precisely taking place in equation
\eqref{eq:OptProjected}, where we project the optimality equation onto the $2$-dimensional subspace $\U$. The arguments that follow this projection consist of nothing more than a technical treatment of the intuition given above.

\subsection{Proof of Theorem \ref{thm:DiscreteConvexRelaxations}} \label{subsection:Proof-thm-DiscreteConvexRelaxations}

First note that if \eqref{eq:gammaUP} is true, then the expression of \eqref{eq:phiLB} always defines an angle between $0$ and $\pi/2$.
We start by establishing that $\hat{\bn}_k$ does not lie in the inlier space $\cS$. For $k=0$ this is true by the hypothesis \eqref{eq:phiLB}. 
For the sake of contradiction suppose that $\hat{\bn}_k \in \cS$ for some $k>0$. Note that
		\begin{align}
		\big \|\btX^\transpose \hat{\bn}_0 \big \|_1 \ge \big \|\btX^\transpose \bn_1 \big \|_1 \ge \big \|\btX^\transpose \hat{\bn}_1 \big \|_1 \ge 
		\cdots \ge \big \|\btX^\transpose \hat{\bn}_k \big \|_1. \label{eq:Jchain}
		\end{align} Suppose first that $\hat{\bn}_0 \perp \cS$. Then \eqref{eq:Jchain} gives
		\begin{align}
		\big \|\bO^\transpose \hat{\bn}_0 \big \|_1 \ge \big \|\bO^\transpose \hat{\bn}_k \big \|_1 + \big \|\bX^\transpose \hat{\v}_k \big \|_1, \label{eq:PerpContradiction}
		\end{align} where $\hat{\v}_k$ is the normalized projection of $\hat{\bn}_k$ onto $\cS$ (and since $\hat{\bn}_k \in \cS$, these two are equal). Using Lemma \ref{lem:eNaturalBounds}, we take an upper bound of the LHS and a lower bound of the RHS of \eqref{eq:PerpContradiction}, and obtain
		\begin{align}
		M \, c_D + M \, \eO \ge M \, c_D - M \, \eO + N \, c_d - N \, \eX,		
		\end{align} or equivalently
		\begin{align}
		\frac{M}{N} \ge \frac{c_d-\eX}{2\, \eO},
		\end{align} which contradicts the first inequality of \eqref{eq:gammaUP}. Consequently,
		$\hat{\bn}_0 \not\perp \cS$. Then \eqref{eq:Jchain} implies that
		\begin{align}
		\big \|\bO^\transpose \hat{\bn}_0 \big \|_1 + \big \|\bX^\transpose \hat{\bn}_0 \big \|_1 \ge \big \|\bO^\transpose \hat{\bn}_k \big \|_1 + \big \|\bX^\transpose \hat{\bn}_k \big \|_1,
		\end{align} or equivalently
		\begin{align}
		\big \|\bO^\transpose \hat{\bn}_0 \big \|_1 + \cos(\phi_0)\big \|\bX^\transpose \hat{\v}_0 \big \|_1 \ge \big \|\bO^\transpose \hat{\bn}_k \big \|_1 + \big \|\bX^\transpose \hat{\v}_k \big \|_1, \label{eq:InequalityNotInlier}
		\end{align} where $\hat{\v}_{0}$ is the normalized projection of $\hat{\bn}_{0}$ onto $\cS$. Once again, using Lemma \ref{lem:eNaturalBounds} we obtain the following contradiction to \eqref{eq:phiLB}:
\begin{align}
M\, c_D + M \, \eO + (N\, c_d + N\, \eX)\cos(\phi_0) \ge M \, c_D - M \, \eO + N \, c_d - N \, \eX.
\end{align} Now let us complete the proof of the theorem. We know by Proposition \ref{prp:LPconvergence} that the sequence $\left\{\bn_k \right\}$ converges to a critical point $\bn_{k^*}$ of problem \eqref{eq:ell1} in a finite number of steps $k^*$, and we have already shown that $\bn_{k^*} \not\in \cS$. If $\bn_{k^*}$ is not orthogonal to $\cS$, an identical argument as in the proof of Theorem \ref{thm:DiscreteNonConvex} (with $\bn_{k^*}$ in place of $\b^*$) shows that the principal angle $\phi_{k^*}$ of $\bn_{k^*}$ from $\cS$ satisfies
\begin{align}
\cos(\phi_{k^*}) \ge \frac{-\Q+\sqrt{\Q^2+4 c_d(c_d-\Q)}}{2c_d}, \, \, \, \Q:= \frac{\R_{\bO,\bX}}{N}+\eO\frac{M}{N} +  \eX. \label{eq:ProofTheoremDiscreteConvexPhiBound}
\end{align} However, due to \eqref{eq:Jchain} and Lemma \ref{lem:eNaturalBounds}, we have that 
\begin{align}
M(c_D+\eO) +N(c_d+\eX)\cos(\phi_0) \ge M(c_D-\eO)+N(c_d-\eX) \cos(\phi_{k^*}),
\end{align} which, after substituting the lower bound \eqref{eq:ProofTheoremDiscreteConvexPhiBound}, contradicts \eqref{eq:phiLB}. Thus $\bn_{k^*} \perp \cS$.

\section{Dual Principal Component Pursuit Algorithms} \label{section:Algorithm}
We present algorithms based on the ideas discussed so far, for estimating the inlier linear subspace in the presence of outliers. Specifically, in \S
\ref{subsection:DPCP-LP} we describe the main algorithmic contribution of this
paper, which is based on the implementation of the recursion \eqref{eq:ConvexRelaxations} via linear programming. In \S \ref{subsection:DPCP-IRLS} we propose an alternative way of computing dual principal components based on Iteratively Reweighted Least-Squares, which, as will be seen in \S \ref{section:Experiments} performs almost as well as recursion \eqref{subsection:DPCP-LP}, yet it is significantly more efficient. Finally, in \S \ref{subsection:DPCP-d} we present a variation of the DPCP optimization problem \eqref{eq:ell1} suitable for noisy data and propose a heuristic method for solving it.

\subsection{DPCP via Linear Programming (DPCP-LP)}\label{subsection:DPCP-LP}
For the sake of an argument, suppose that there is no noise in the inliers, i.e., the inliers $\bX$ span a linear subspace $\cS$ of dimension $d$. Then Theorem \ref{thm:DiscreteConvexRelaxations} suggests a mechanism for obtaining an element $\b_1$ of ${\cS}^\perp$: run the recursion of linear programs \eqref{eq:ConvexRelaxations} until the sequence $\hat{\bn}_{k}$ converges and identify the limit point with $\b_1$. Due to computational constraints, in practice one usually terminates the recursion when the objective value $\big \|\btX^\transpose \hat{\bn}_k\big\|_1$ converges within some small $\varepsilon$, or a maximal number $T_{\max}$ of iterations is reached, and obtains a normal vector $\b_1$. Having computed a vector $\b_1$, there are two possibilities: either $\cS$ is a hyperplane of dimension $D-1$ or $\dim {\cS}< D-1$. In the first case we can identify our subspace model with the hyperplane defined by the normal $\b_1$. If on the other hand $\dim {\cS}< D-1$, we can proceed to find a second vector $\b_2 \perp \b_1$ that is approximately orthogonal to ${\cS}$, and so on, until we have computed an orthogonal basis for the orthogonal complement of $\cS$; this process naturally leads to Algorithm \ref{alg:DPCP-LP}, in which $c$ is an estimate for the codimension $D-d$ of the inlier subspace $\Span(\bX)$.

\begin{algorithm}[t!] \caption{Dual Principal Component Pursuit via Linear Programming}\label{alg:DPCP-LP} 
\begin{algorithmic}[1] 
\Procedure{DPCP-LP}{$\btX,c,\varepsilon,T_{\max}$}		
\State $\mathcal{B} \gets \emptyset$;
\For{ $ i=1 : c$}
\State $k \gets 0; \J \gets 0; \Delta \mathcal{J} \gets \infty$; 
\State $\hat{\bn}_0 \gets \w  \in  \argmin_{\big \|\b\big \|_2=1, \, \b \perp \mathcal{B}} \big \|\btX^\transpose \b \big \|_2$; 	
\While{$k < T_{\max}$ and $\Delta \mathcal{J} > \varepsilon \J$} 
\State $\J \gets \big \| \btX^\transpose \hat{\bn}_{k} \big \|_1$;	
\State $k \gets k+1$;
\State $\bn_k \gets \w \in \argmin_{\b^\transpose \hat{\bn}_{k-1}=1, \b \perp \mathcal{B}} \big \| \btX^\transpose \b \big \|_1$;
\State $\hat{\bn}_k \gets \bn_k / \big \|\bn_k \big \|_2$;
\State $\Delta \mathcal{J} \gets \left(\J-\big \| \btX^\transpose \hat{\bn}_{k} \big \|_1\right)$;	
\EndWhile
\State $\b_i \gets \hat{\bn}_{k}$;
\State $\mathcal{B} \gets \mathcal{B} \cup \left\{\b_i \right\}$;
\EndFor
\State \Return $\mathcal{B}$;
\EndProcedure 				
\end{algorithmic} 
\end{algorithm}

Notice how the algorithm initializes $\bn_0$: This is precisely the right singular vector of $\btX^\transpose$ that corresponds to the smallest singular value, after projection of $\btX$ onto $\Span(\b_1,\dots,\b_{i-1})^\perp$. As it will be demonstrated in \S \ref{section:Experiments}, this is a key choice, since it has the effect that the angle of $\bn_0$ from the inlier subspace is typically large, a desirable property for the success of recursion \eqref{eq:ConvexRelaxations} (see Theorem \ref{thm:DiscreteConvexRelaxations}). 
We refer to Algorithm \ref{alg:DPCP-LP} as DPCP-LP, to emphasize that the optimization problem associated with each iteration of the recursion \eqref{eq:ConvexRelaxations} is a linear program. In fact, at iteration $k$ the optimization problem is 
\begin{align}
\min_{\b} \, \, \, \big \|\btX^\transpose \b \big \|_1 \, \, \, \text{s.t.} \, \, \, \b^\transpose \hat{\bn}_{k-1} = 1,
\end{align} which can equivalently be written as a standard linear program, 
\begin{align}
\min_{\b, \u^+,\u^-} \, \, \, &
\begin{bmatrix} 
\1_{1 \times N}  & \1_{1 \times N}
\end{bmatrix} 
\begin{bmatrix} 
\u^+ \\ \u^- 
\end{bmatrix} \, \, \,   \label{eq:DPCP-LP-start} \\
\text{s.t.}\, \, \, \, \, \, \, &
\begin{bmatrix}
\boldsymbol{I}_{N} & -\boldsymbol{I}_{N} &  -\btX^\transpose  \\
\0_{1 \times N} & \0_{1 \times N} &  \hat{\bn}_k^\transpose 
\end{bmatrix} 
\begin{bmatrix} 
\u^+ \\ \u^- \\ \b
\end{bmatrix} = 
\begin{bmatrix}
\0_{N \times 1} \\
1
\end{bmatrix}, \, \, \, \u^+, \u^- \ge 0, \label{eq:DPCP-LP-end}
\end{align} and can be solved efficiently with an optimized general purpose linear programming solver, such as Gurobi \citep{gurobi}.

\subsection{DPCP via Iteratively Reweighted Least-Squares (DPCP-IRLS)} \label{subsection:DPCP-IRLS} 
Even though DPCP-LP (Algorithm \ref{alg:DPCP-LP}) comes with theoretical guarantees as per Theorem \ref{thm:DiscreteConvexRelaxations}, and moreover will be shown to have a rather remarkable performance (at least for synthetic data, see Fig. \ref{figure:separation}), it has the weakness that the linear programs (which are non-sparse) may become inefficient to solve in high dimensions and for a large number of data points. Moreover, even though DPCP-LP is theoretically applicable regardless of the subspace relative dimension $d/D$, its running time increases with the subspace codimension $c=D-d$, since the $c$ basis elements of $\cS^\perp$ are computed sequentially. This motivates us to generalize the DPCP problem \eqref{eq:ell1} to an optimization problem that targets the entire orthogonal basis of $\cS^\perp$:
\begin{align}
\min_{\B \in \Re^{D \times c}} \, \big \|\btX^\transpose \B \big\|_{1,2} \, \, \, \text{s.t.} \, \, \, \B^\transpose \B = \bI_c \label{eq:L12}.
\end{align} 

\begin{algorithm}[t!] \caption{Dual Principal Component Pursuit via Iteratively Reweighted Least Squares}\label{alg:DPCP-IRLS} \begin{algorithmic}[1] 
\Procedure{DPCP-IRLS}{$\btX,c,\varepsilon,T_{\max},\delta$}		
\State $k \gets 0; \J \gets 0; \Delta \mathcal{J} \gets \infty$; 
\State $\B_0 \gets \bW \in \argmin_{\B \in \Re^{D \times c},\, \B^\transpose \B = \bI_c} \, \, \big \|\btX^\transpose \B \big \|_F$;
\While{$k < T_{\max}$ and $\Delta \mathcal{J} > \varepsilon \J$} 
\State $\J \gets \big \|\btX^\transpose \B_{k} \big \|_{1,2}$; 
\State $k \gets k+1$;	
\State $\B_k \gets \argmin_{\B \in \Re^{D \times c}, \, \B^\transpose \B = \bI_c} \, \, \sum_{\tilde{\x} \in \btX}  \big \|\B^\transpose \tilde{\x}\big \|_2^2 / \max\left\{\delta, \big \|\B_{k-1}^\transpose \tilde{\x}\big \|_2 \right\} $;
\State $\Delta \mathcal{J} \gets \big \| \btX^\transpose \B_{k-1} \big \|_{1,2}-\big \| \btX^\transpose \B_{k} \big \|_{1,2}$;  	
\EndWhile				
\State \Return $\B_{k}$;
\EndProcedure 				
\end{algorithmic} 
\end{algorithm}

Notice that in \eqref{eq:L12}, the $\ell_{1,2}$ matrix norm $\big \|\btX^\transpose \B \big\|_{1,2}$ of $\btX^\transpose \B$ is defined as the sum of the Euclidean norms of the rows of $\btX^\transpose \B$, and as such, favors a solution $\B$ that results in a 
matrix $\btX^\transpose \B$ that is row-wise sparse (notice that for $c=1$ \eqref{eq:L12} reduces precisely to the DPCP problem \eqref{eq:ell1}). In fact, \cite{Lerman:FCM15} consider exactly the same problem \eqref{eq:L12}, and proceed to relax it to a semi-definite convex program, which they solve via an \emph{Iteratively Reweighted Least-Squares (IRLS)} scheme \citep{Candes:JFAA08,Daubechies:CPAM10,Chartrand:ICASSP08}; while similar IRLS schemes appear in \cite{Zhang:JMLR14} and \cite{Lerman:IAI17}. Instead, we propose to solve \eqref{eq:L12} directly via IRLS (and not a convex relaxation of it as \cite{Lerman:FCM15}): Given a $D \times c$ orthonormal matrix $\B_{k-1}$, we define for each point $\tilde{\x}_j$ a weight
\begin{align}
w_{j,k} := \frac{1}{\max \left\{\delta,\big \| \B_{k-1}^\transpose \tilde{\x}_j \big \|_2 \right\}},
\end{align} where $\delta>0$ is a small constant that prevents division by zero. Then we obtain $\B_{k}$ as the solution to the quadratic problem
\begin{align}
\min_{\B \in \Re^{D \times c}}  \sum_{j=1}^L w_{j,k}\big\|\B^\transpose \tilde{\x}_j \big\|_2^2 \, \, \, \text{s.t.} \, \, \, \B^\transpose \B = \bI_c,
\end{align} 
which is readily seen to be the $c$ right singular vectors corresponding to the $c$ smallest singular values of the weighted data matrix $\boldsymbol{W}_k \btX^\transpose$, where $\boldsymbol{W}_{k}$ is a diagonal matrix with $\sqrt{w_{j,k}}$ at position $(j,j)$. We refer to the resulting Algorithm \ref{alg:DPCP-IRLS} as DPCP-IRLS; a study of its theoretical properties is deferred to future work.

\subsection{Denoised DPCP (DPCP-d)} \label{subsection:DPCP-d}
Clearly, problem \eqref{eq:ell1}  (and \eqref{eq:L12}) is tailored
for noise-free inliers, since, when the inliers $\bX$ are contaminated by noise,
the vector $\btX^\transpose \b$ is no longer sparse, even if $\b$ is a true normal
to the inlier subspace. As a consequence, it is natural to propose the following \emph{DPCP-denoised} (DPCP-d) problem
\begin{align}
\min_{\b,\y: \, ||\b||_2=1} \, \left[\tau  \, \big \|\y\big \|_1 + \frac{1}{2} \big \|\y - \btX^\transpose \b \big \|_2^2  \right] \label{eq:NIPS14},
\end{align} where now the vector variable $\y \in \Re^{N+M}$ is to be interpreted 
as a denoised vesion of the vector $\btX^\transpose \b$.
Interestingly, both problems \eqref{eq:ell1} and \eqref{eq:NIPS14} appear in \cite{Qu:NIPS14}, in the quite different context of dictionary learning, where the authors propose to solve \eqref{eq:NIPS14} via alternating minimization, in order to obtain an approximate solution to \eqref{eq:ell1}. Given $\b$, the optimal $\y$ is given by $ \cS_{\tau} \big(\btX^\transpose \b \big)$, where $\cS_{\tau}$ is the soft-thresholding operator applied element-wise on the vector $\btX^\transpose \b$. Given $\y$ the optimal $\b$ is a solution to the quadratically constrained least-squares problem
\begin{align}
\min_{\b \in \Re^D} \, \, \, \big\|\y - \btX^\transpose \b \big\|_2^2 \, \, \, \text{s.t.} \, \, \, \big \|\b \big \|_2 = 1. \label{eq:LeastSquaresConstrained}
\end{align} 
\begin{algorithm}[t!] \caption{Denoised Dual Principal Component Pursuit 
 }\label{alg:DPCP-d} \begin{algorithmic}[1] 
\Procedure{DPCP-d}{$\btX,\varepsilon,T_{\max},\delta,\tau$}		
\State Compute a Cholesky factorization $\bL \bL^\transpose  = \btX \btX^\transpose+\delta \bI_{D}$;			
\State $k \gets 0; \y_0 \gets \0; \J \gets 0; \Delta \mathcal{J} \gets \infty$; 
\State $\b_0 \gets \argmin_{\b \in \Re^{D}: \, \big \|\b\big \|_2=1} \, \, \, \big \|\btX^\transpose \b \big \|_2$;			
\While{$k < T_{\max}$ and $\Delta \J > \varepsilon \J$} 
\State $\J \gets \tau \big \| \y_k \big \|_1 + \frac{1}{2}  \big \|\y_k - \btX^\transpose \b_k \big \|_2^2 $	
\State $\y_{k+1} \gets \cS_{\tau} \left(\btX^\transpose \b_k \right)$;
\State $\b_{k+1} \gets$ solution of $\bL \bL^\transpose \bxi = \btX \y_{k+1}$ by backward/forward propagation;
\State $k \gets k+1$;
\State $\b_k \gets \b_k / \big \| \b_k \big \|_2$;
\State $\Delta \J \gets \J - \left( \tau \big \| \y_k \big \|_1 + \frac{1}{2}  \big \|\y_k - \btX^\transpose \b_k \big \|_2^2\right)$;	 		
\EndWhile				
\State \Return $(\y_k,\b_k)$;
\EndProcedure 				
\end{algorithmic} 
\end{algorithm}
In the context of \cite{Qu:NIPS14}, the coefficient matrix of the least-squares problem ($\btX^\transpose$ in our notation) has orthonormal columns. As a consequence, the solution to \eqref{eq:LeastSquaresConstrained} is obtained in closed form by projecting the solution of the unconstrained least-squares problem 
$\min_{\b \in \Re^D} \, \, \, ||\y - \btX^\transpose \b ||_2$ onto the unit sphere. However, in our context
the assumption that $\btX^\transpose$ has orthonormal columns is in principle violated, so that the optimal $\b$ is no longer available in closed form. Even though
using Lagrange multipliers one ends up with a polynomial equation for the Lagrange multiplier, it is known that computing the optimal value of the multiplier is a numerically challenging problem \citep{Elden:NM02,Golub:Numerische91,Gander:Numerische80}.  For this reason we leave exact approaches for solving  \eqref{eq:LeastSquaresConstrained} to future investigations, and we instead propose
to obtain a suboptimal $\b$ as \cite{Qu:NIPS14} do, i.e., by projecting onto the unit sphere the solution of the unconstrained least-squares problem.
The resulting Algorithm \ref{alg:DPCP-d} is very efficient, since the least-squares problems that appear in the various iterations have the same coefficient matrix $\btX \btX^\transpose$, a factorization of which can be precomputed.\footnote{The parameter $\delta$ in Algorithm \ref{alg:DPCP-d} is a small positive number, typically $10^{-6}$, which helps avoiding solving ill-conditioned linear systems.} Moreover, Algorithm \ref{alg:DPCP-d} can trivially be extended to compute multiple normal vectors, just as in Algorithm \ref{alg:DPCP-LP}.

\section{Experiments} \label{section:Experiments}
In this section we evaluate the proposed algorithms experimentally. In \S \ref{subsectionExperiments:Theory-Check} we investigate numerically the theoretical regime of success of recursion \eqref{eq:ConvexRelaxations} predicted by Theorems \ref{thm:DiscreteNonConvex} and \ref{thm:DiscreteConvexRelaxations}. We also show that even when these sufficient conditions are violated, \eqref{eq:ConvexRelaxations} can still converge to a normal vector to the subspace if initialized properly. Finally, in \S \ref{subsectionExperiments:ComparativeSynthetic} we compare DPCP variants with state-of-the-art robust PCA algorithms for the purpose of outlier detection using synthetic data, and similarly in \S \ref{subsectionExperiments:Real} using real images.

\subsection{Numerical evaluation of the theoretical conditions of Theorems \ref{thm:DiscreteNonConvex} and \ref{thm:DiscreteConvexRelaxations}} \label{subsectionExperiments:Theory-Check}

We begin with a numerical evaluation of the theoretical condition \eqref{eq:gammaUP} of Theorem \ref{thm:DiscreteNonConvex}, under which every global minimizer of the DPCP problem \eqref{eq:ell1} is orthogonal to the inlier subspace $\cS$. We also evaluate the initial minimal angle $\phi_0^*$ from $\cS$ given in \eqref{eq:phiLB} of Theorem \ref{thm:DiscreteConvexRelaxations}, which together with \eqref{eq:gammaUP} guarantee the convergence of the linear programming recursion \eqref{eq:ConvexRelaxations} to an element of $\cS^\perp$. 
As explained in the discussion of 
Theorems \ref{thm:DiscreteNonConvex} and \ref{thm:DiscreteConvexRelaxations}, for any fixed outlier ratio, condition \eqref{eq:gammaUP} will eventually be satisfied and also the angle $\phi_0^*$ will become arbitrarily small regardless of the subspace relative dimension $d/D$, provided that $N$ is sufficiently large and that both inliers and outliers are uniformly distributed. Hence, we check whether  \eqref{eq:gammaUP} is true and also plot $\phi_0^*$ as we vary $N$ for uniformly distributed inliers and outliers. Towards that end, we fix the ambient dimension as $D=30$ and randomly sample a subspace ${\cS}$ of varying dimension $d=[5:5:25 \, \, 29]$ so that the relative subspace dimension $d/D$ varies as $[5/30:5/30:25/30 \, \, 29/30]$. We sample $N$ inliers uniformly at random from ${\cS}\cap \Sp^{D-1}$ for different values $N=500,2000,7000$. For each value of $N$ we also sample $M$ outliers uniformly at random from $\Sp^{D-1}$ so that the percentage of outliers varies as $R:=M/(N+M)=[0.1:0.1:0.7]$. For each dataset instance as above, we estimate the parameters $\eX, \eO,  \mathcal{R}_{\bO,\bX}$ appearing in \eqref{eq:gammaUP} and \eqref{eq:phiLB} by Monte-Carlo simulation. 

The top row of Fig. \ref{figure:DPCP-theory} shows whether condition \eqref{eq:gammaUP} is true (white) or not (black) as we vary $N$. Notice that for $N=500$, Fig. \ref{figure:DPCP_theory_N500} shows a poor success regime. However, as we increase $N$ to $2000$, Fig. \ref{figure:DPCP_theory_N2000} shows that the success regime improves dramatically. Finally, as expected from our earlier theoretical arguments, for sufficiently large $N$, in particular for $N=7000$, Fig. \ref{figure:DPCP_theory_N7000} shows that the sufficient condition \eqref{eq:gammaUP} is satisfied regardless of outlier ratio or subspace relative dimension. 
Similarly, notice how the angle $\phi_0^*$, plotted in the bottom row of Fig. \ref{figure:DPCP-theory} (black for $0^\circ$ white for $90^\circ$), uniformly decreases as we increase $N$ across all outlier ratios and relative dimensions.

\begin{figure}[t!]
	\centering
	\subfigure[Check \eqref{eq:gammaUP}, $N=500$]{\label{figure:DPCP_theory_N500}\includegraphics[width=0.25\linewidth]{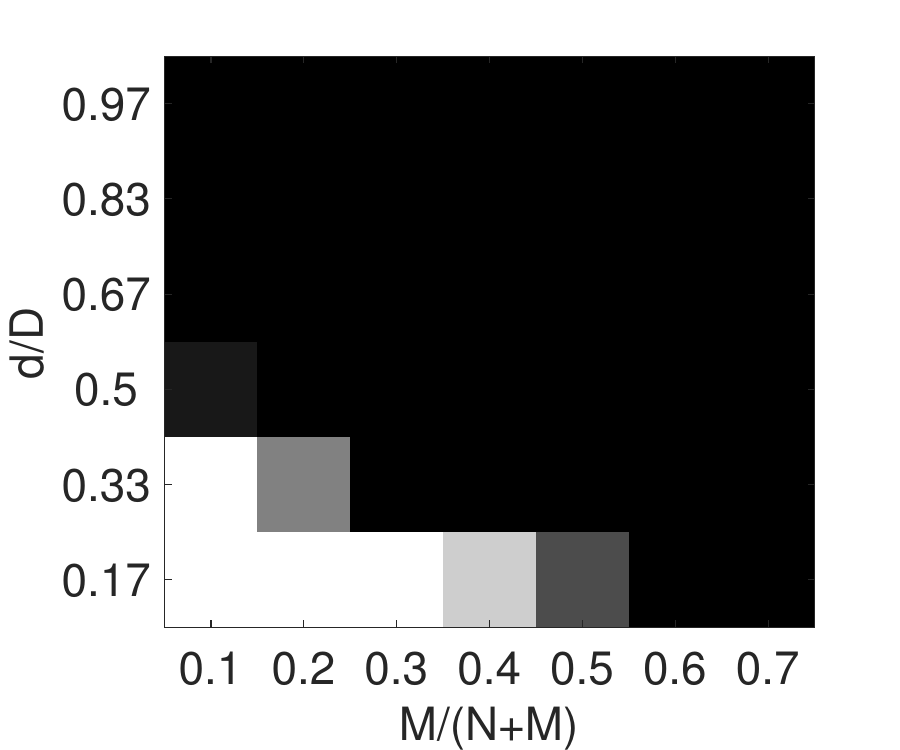}}
\subfigure[Check \eqref{eq:gammaUP}, $N=2000$]{\label{figure:DPCP_theory_N2000}\includegraphics[width=0.25\linewidth]{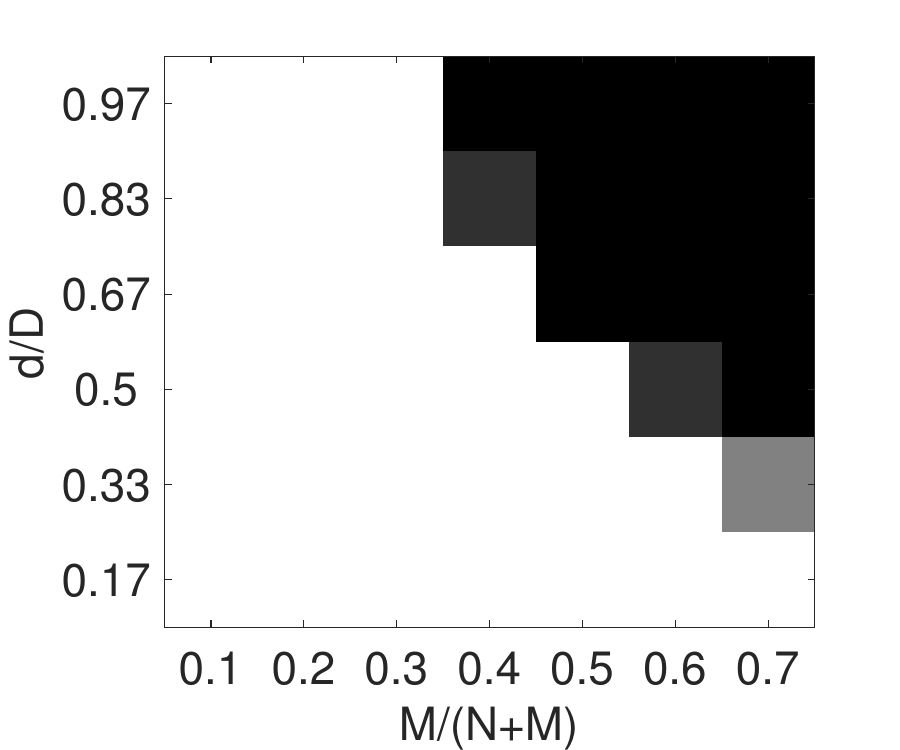}}
\subfigure[Check \eqref{eq:gammaUP}, $N=7000$]{\label{figure:DPCP_theory_N7000}\includegraphics[width=0.25\linewidth]{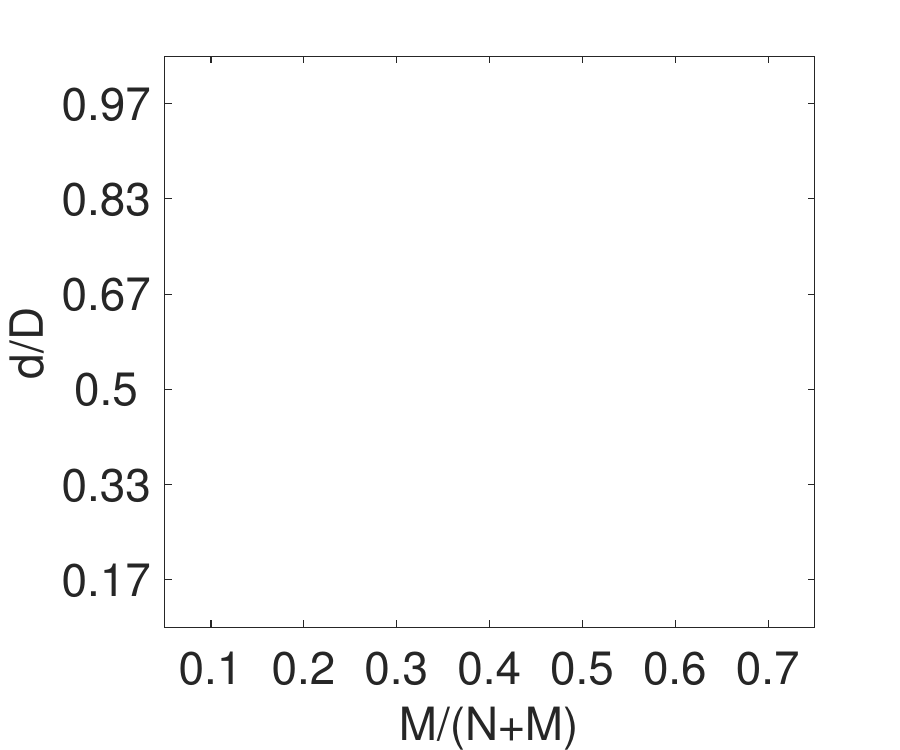}} \\
\subfigure[$\phi_0^*$ when $N=500$]{\label{figure:DPCP_theory_angle_N500}\includegraphics[width=0.25\linewidth]{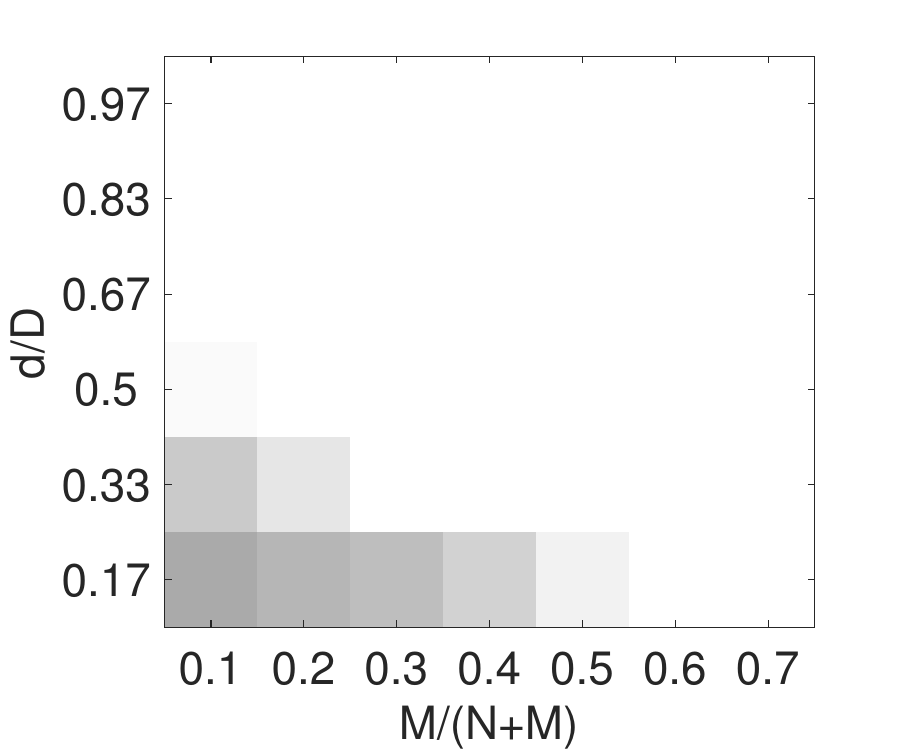}}
\subfigure[$\phi_0^*$ when $N=2000$]{\label{figure:DPCP_theory_angle_N2000}\includegraphics[width=0.25\linewidth]{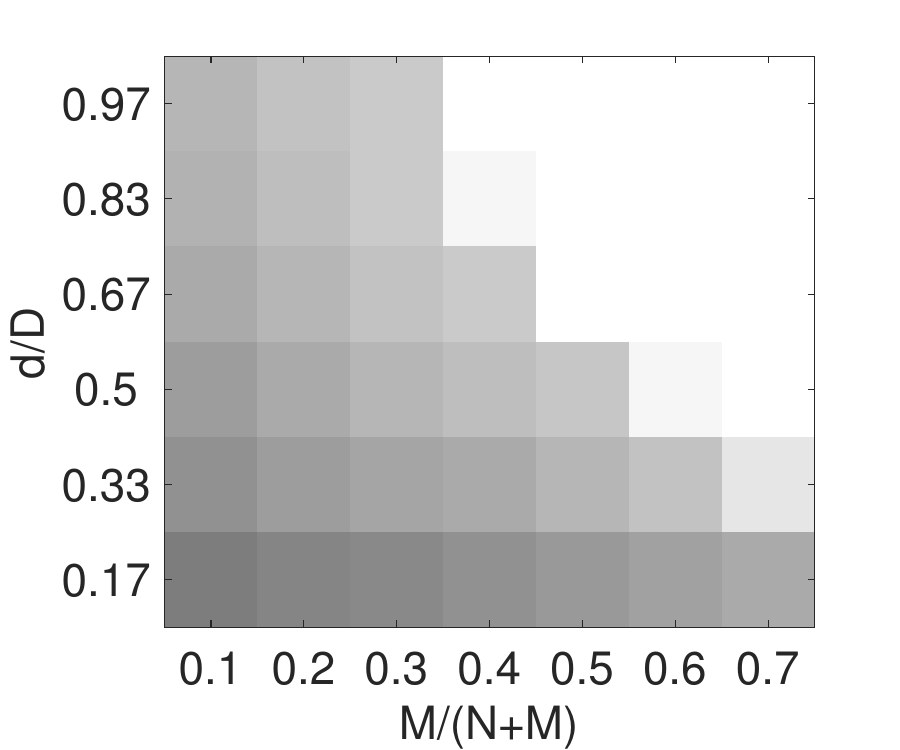}}
\subfigure[$\phi_0^*$ when $N=7000$]{\label{figure:DPCP_theory_angle_N7000}\includegraphics[width=0.25\linewidth]{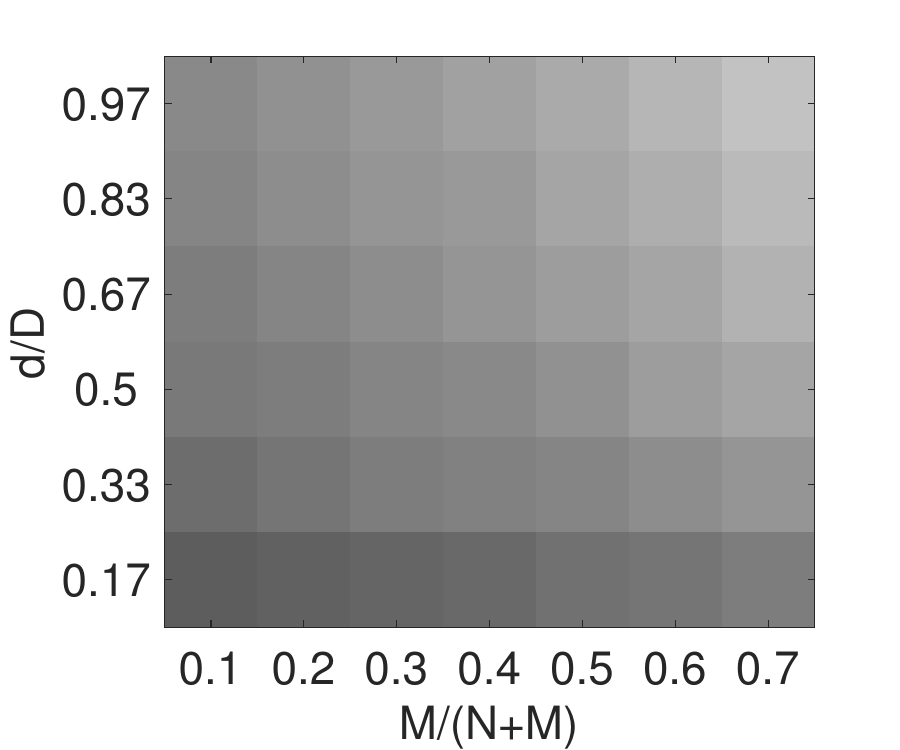}}		
\caption{ Figs. \ref{figure:DPCP_theory_N500}-\ref{figure:DPCP_theory_N7000} check whether the condition \eqref{eq:gammaUP} is satisfied (white) or not (black) for a fixed number $N$ of inliers while varying the outlier ratio $M/(N+M)$ and the subspace relative dimension $d/D$. Figs. \ref{figure:DPCP_theory_angle_N500}-\ref{figure:DPCP_theory_angle_N7000} plot the minimum initial angle $\phi_0^*$ needed for convergence of the recursion of linear programs \eqref{eq:ConvexRelaxations} as per Theorem \ref{thm:DiscreteConvexRelaxations} ($0^\circ$ corresponds to black and $90^\circ$ corresponds to white). Results are averaged over $10$ independent trials.}
\label{figure:DPCP-theory}	
\end{figure}

Next, we show that the recursion \eqref{eq:ConvexRelaxations}, if initialized properly, is in fact able to converge in just a few iterations to a vector normal to the inlier subspace, even when the sufficient conditions of Theorem \ref{thm:DiscreteConvexRelaxations} are not satisfied. Towards that end, we maintain the same experimental setting as above using $N=500$ and run \eqref{eq:ConvexRelaxations} with a maximal number of iterations set to $T_{\max}=10$ and a convergence accuracy set to $10^{-3}$. Fig. \ref{figure:DPCP_repeated_theory_N500} is a replicate of Fig. \ref{figure:DPCP_theory_N500} and serves as a reminder that $N=500$ results in a limited success regime as predicted by the theory; in particular the sufficient condition \eqref{eq:gammaUP} is satisfied only for a small outlier ratio or small subspace relative dimensions. Even so, Fig. \ref{figure:PhiStar_SVD_T10} shows that, when the recursion \eqref{eq:ConvexRelaxations} is initialized using $\hat{\bn}_0$ as the left singular vector of $\btX$ corresponding to the smallest singular value, then \eqref{eq:ConvexRelaxations} converges in at most $10$ iterations to a vector $\hat{\bn}^*$ whose angle $\phi^*$ from the subspace is precisely $90^\circ$. This suggests that the sufficient condition \eqref{eq:gammaUP} is much stronger than necessary, leaving room for future theoretical improvements. On the other hand, Fig. \ref{figure:PhiStar_random_T10} shows that when $\hat{\bn}_0$ is initialized uniformly at random, the recursion \eqref{eq:ConvexRelaxations} does not always converge to a normal vector, particularly for high outlier ratios and relative dimensions. This reveals that initializing \eqref{eq:ConvexRelaxations} from the SVD of the data is indeed a good strategy, which is further supported by Fig. \ref{figure:Phi0_SVD}, which plots the angle of the initialization from the subspace (contrast this to Fig. \ref{figure:Phi0_random}, which shows the angle of a random initialization from the subspace).

\begin{figure}[t!]
\centering
\subfigure[Check if \eqref{eq:gammaUP} is true]{\label{figure:DPCP_repeated_theory_N500}\includegraphics[width=0.25\linewidth]{theory_N500}}
\subfigure[$\phi^*$ ($\hat{\bn}_0$ from SVD)]{\label{figure:PhiStar_SVD_T10}\includegraphics[width=0.25\linewidth]{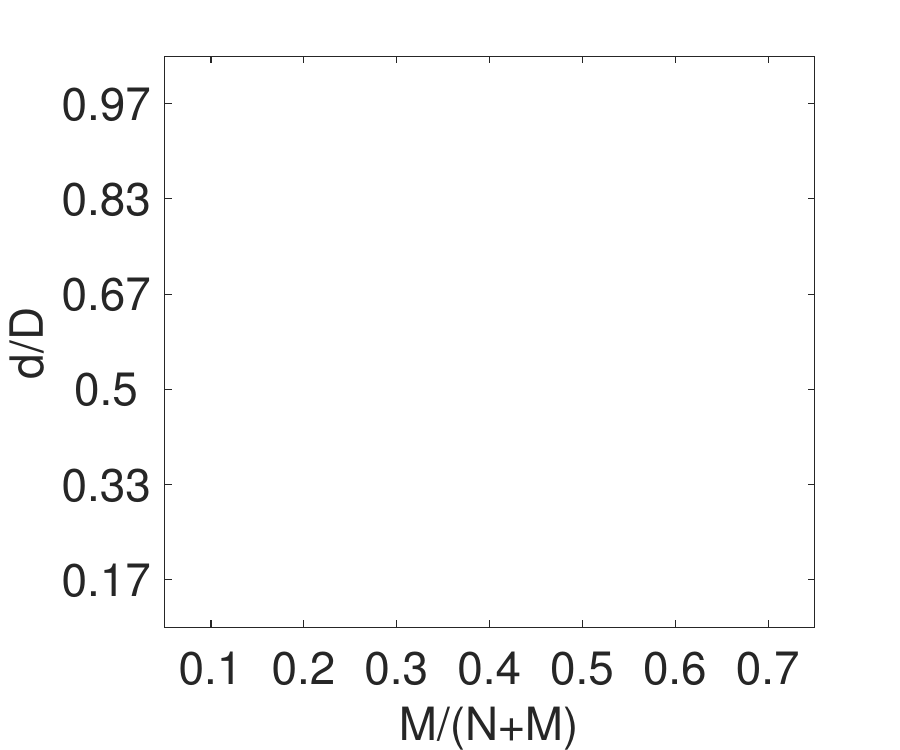}}
\subfigure[$\phi^*$ ($\hat{\bn}_0$ random)]{\label{figure:PhiStar_random_T10}\includegraphics[width=0.25\linewidth]{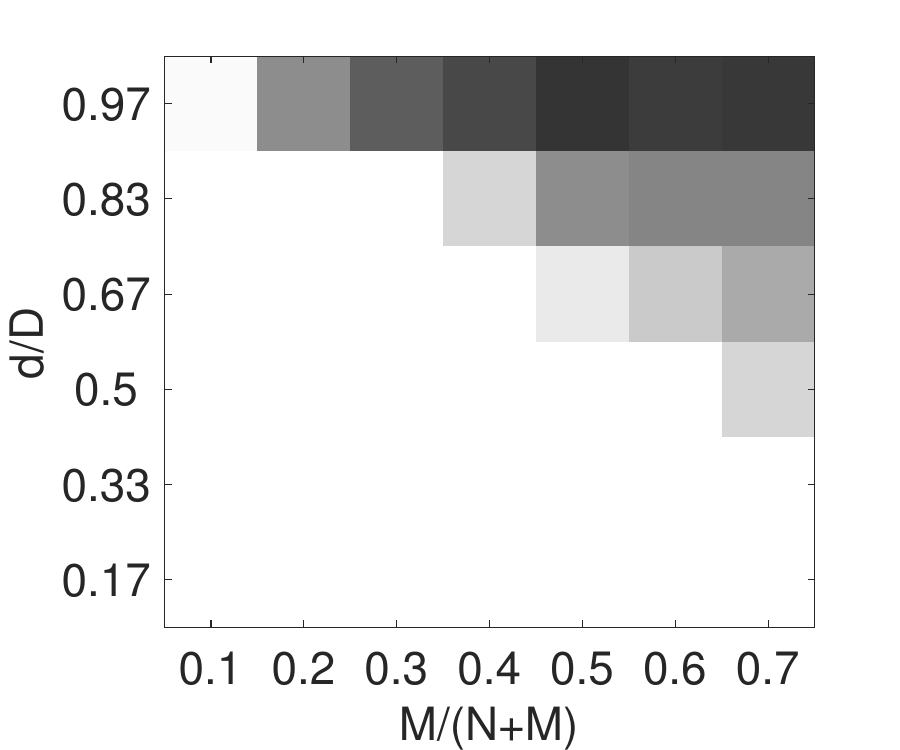}} \\
\subfigure[$\phi_0^*$ of \eqref{eq:phiLB}]{\label{figure:DPCP_repeated_theory_angle_N500}\includegraphics[width=0.25\linewidth]{theory_angle_N500}}
\subfigure[$\phi_0$ ($\hat{\bn}_0$ from SVD)]{\label{figure:Phi0_SVD}\includegraphics[width=0.25\linewidth]{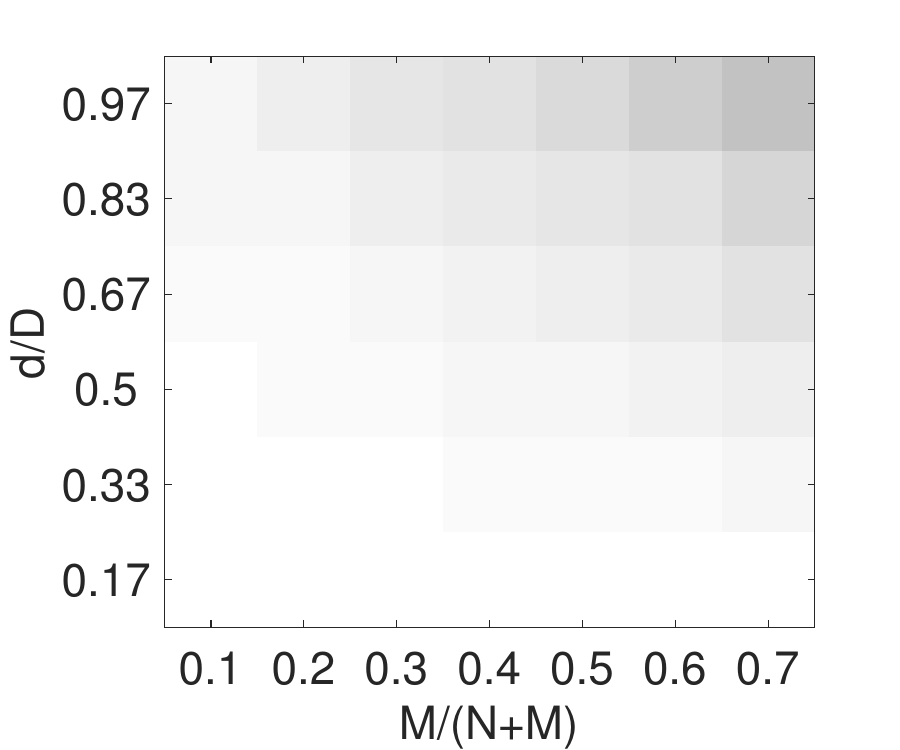}}
\subfigure[$\phi_0$ ($\hat{\bn}_0$ random)]{\label{figure:Phi0_random}\includegraphics[width=0.25\linewidth]{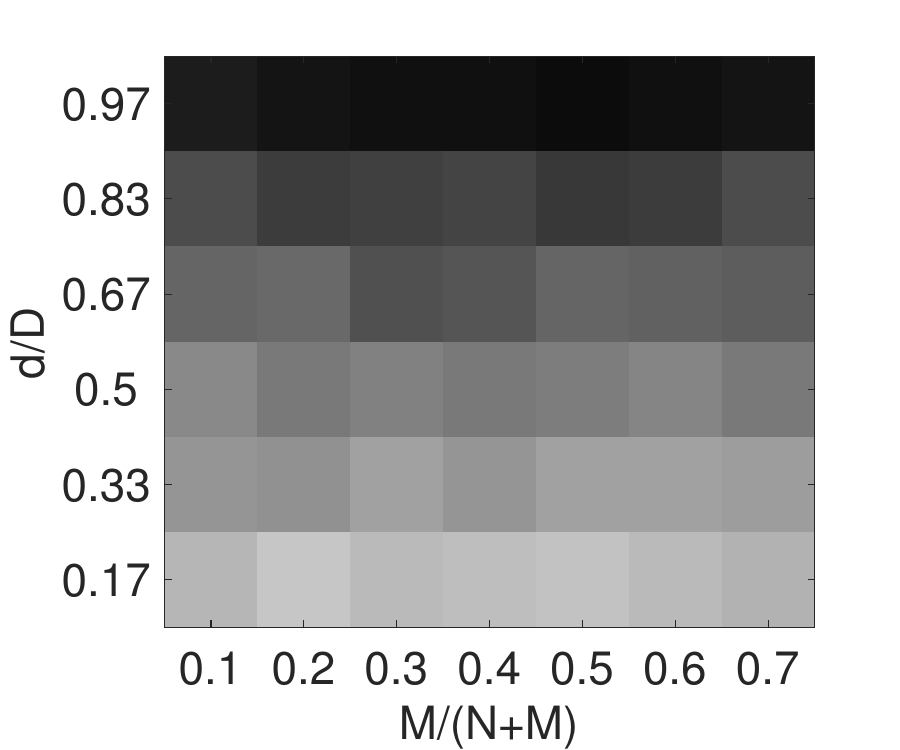}}		
\caption{Convergence of recursion \eqref{eq:ConvexRelaxations} in a regime of limited theoretical guarantees ($N=500$) as concluded from Fig. \ref{figure:DPCP-theory}. The number of inliers is fixed at $N=500$. Fig. \ref{figure:DPCP_repeated_theory_N500} plots whether the sufficient condition \eqref{eq:gammaUP} is satisfied (white) or not (black) for varying outlier ratios $M/(N+M)$ and relative dimensions $d/D$. Figs.
\ref{figure:PhiStar_SVD_T10}-\ref{figure:PhiStar_random_T10} plot the angle $\phi^*$ ($0^\circ$ corresponds to black and $90^\circ$ corresponds to white) from the inlier subspace of the vector $\hat{\bn}^*$ that recursion \eqref{eq:ConvexRelaxations} converges to, when $\hat{\bn}_0$ is initialized from the SVD of the data or uniformly at random, respectively. 
Fig. \ref{figure:DPCP_repeated_theory_angle_N500} plots the minimum angle $\phi^*_0$ needed for the convergence of \eqref{eq:ConvexRelaxations} to a normal vector in the inlier subspace as per Theorem \ref{thm:DiscreteConvexRelaxations}, while Figs. \ref{figure:Phi0_SVD}-\ref{figure:Phi0_random} plot the angle $\phi_0$ of $\hat{\bn}_0$ from the subspace, when it is initialized from the SVD of the data or uniformly at random, respectively. 
The results are averages over $10$ independent trials.}
\label{figure:DPCP-single}	
\end{figure}

\subsection{Comparative analysis using synthetic data} \label{subsectionExperiments:ComparativeSynthetic}

In this section we use the same synthetic experimental set-up as in \S \ref{subsectionExperiments:Theory-Check} (with $N=500$) to demonstrate the behavior of several methods relative to each other under uniform conditions, in the context of outlier rejection in single subspace learning. In particular, we test DPCP-LP (Algorithm \ref{alg:DPCP-LP}), DPCP-IRLS (Algorithm \ref{alg:DPCP-IRLS}), DPCP-d (Algorithm \ref{alg:DPCP-d}), RANSAC \citep{RANSAC}, SE-RPCA \citep{Soltanolkotabi:AS12}, $\ell_{21}$-RPCA \citep{Xu:TIT12}, the IRLS version of REAPER \citep{Lerman:FCM15}, as well as Coherence Pursuit (CoP) \citep{Rahmani:arXiv17}; see \S \ref{section:RelatedWork} for details on these last five existing methods. 
	
For the methods that require an estimate of the subspace dimension $d$, such as RANSAC, REAPER, CoP, and all DPCP variants, we provide as input the true subspace dimension. The convergence accuracy of all methods is set to $10^{-3}$. For REAPER we set the regularization parameter as $\delta=10^{-6}$ and the maximum number of iterations equal to $100$. For DPCP-d we set $\tau = 1/\sqrt{N+M}$ as suggested in \cite{Qu:NIPS14} and the maximum number of iterations to $1000$. For RANSAC we set its thresholding parameter to $10^{-3}$, and for fairness, we do not let it terminate earlier than the running time of DPCP-LP, unless the theoretically required number of iterations for a success probability $0.99$ is reached (here we are using the ground truth outlier ratio). 
Both SE-RPCA and $\ell_{21}$-RPCA are implemented with ADMM, with augmented Lagrange parameters $1000$ and $100$ respectively. For $\ell_{21}$-RPCA $\lambda$ is set to $3/(7\sqrt{M})$, as suggested in \cite{Xu:TIT12}. DPCP variants are initialized via the SVD of the data as in Algorithm \ref{alg:DPCP-LP}. CoP is implemented using the code provided by its authors, and selects $3d$ points upon classic PCA gives the subspace estimate. Finally, the linear programs in DPCP-LP are solved via the generic LP solver Gurobi \citep{gurobi}, while the maximum number of iterations for DPCP-LP is set to $T_{\max}=10$.

\paragraph{Absence of Noise} We first investigate the potential of each of the above methods to perfectly distinguish outliers from inliers in the absence of noise.\footnote{We do not include DPCP-d for this experiment, since it only approximately solves the DPCP optimization problem, and hence it can not be expected to perfectly separate inliers from outliers, even when there is no noise (we have confirmed this experimentally).} Note that each method returns a \emph{signal} $\ba \in \Re_+^{N+M}$, which can be thresholded for the purpose of declaring outliers and inliers. For SE-RPCA, $\ba$ is the $\ell_1$-norm of the columns of the coefficient matrix $\bC$, while for $\ell_{21}$-RPCA $\ba$ is the $\ell_2$-norm of the columns of $\bE$. Since RANSAC, REAPER, CoP, and DPCP variants directly return subspace models, for these methods $\ba$ is simply the distances of all points to the estimated subspace.

\begin{figure}[t!]
	\centering
	\subfigure[RANSAC]{\label{figure:RANSACseparation}\includegraphics[width=0.25\linewidth]{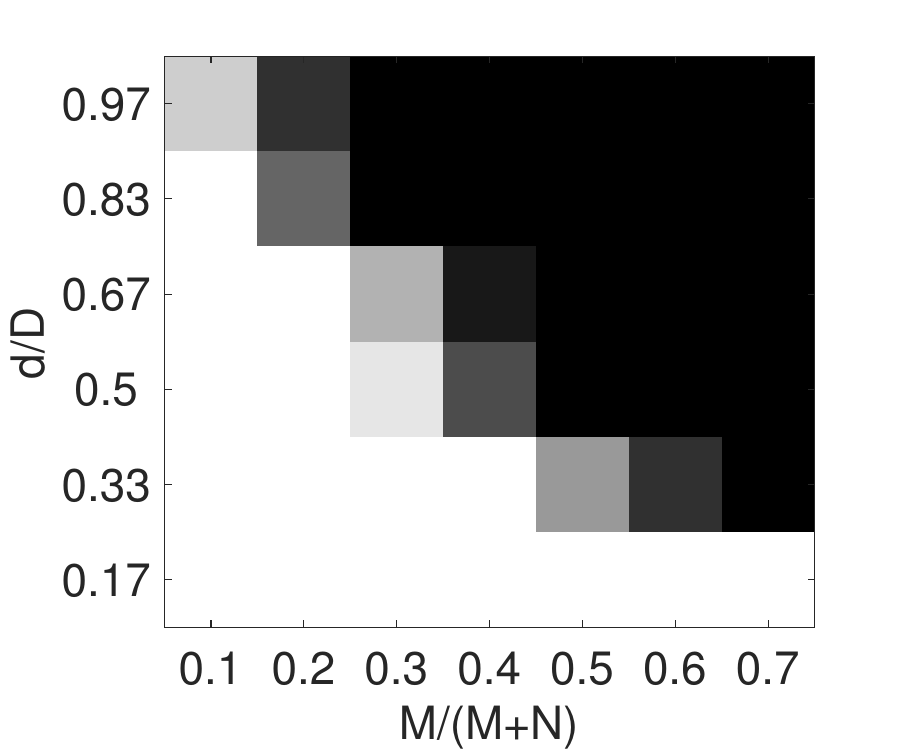}}
		\subfigure[SE-RPCA]{\label{figure:SEseparation}\includegraphics[width=0.25\linewidth]{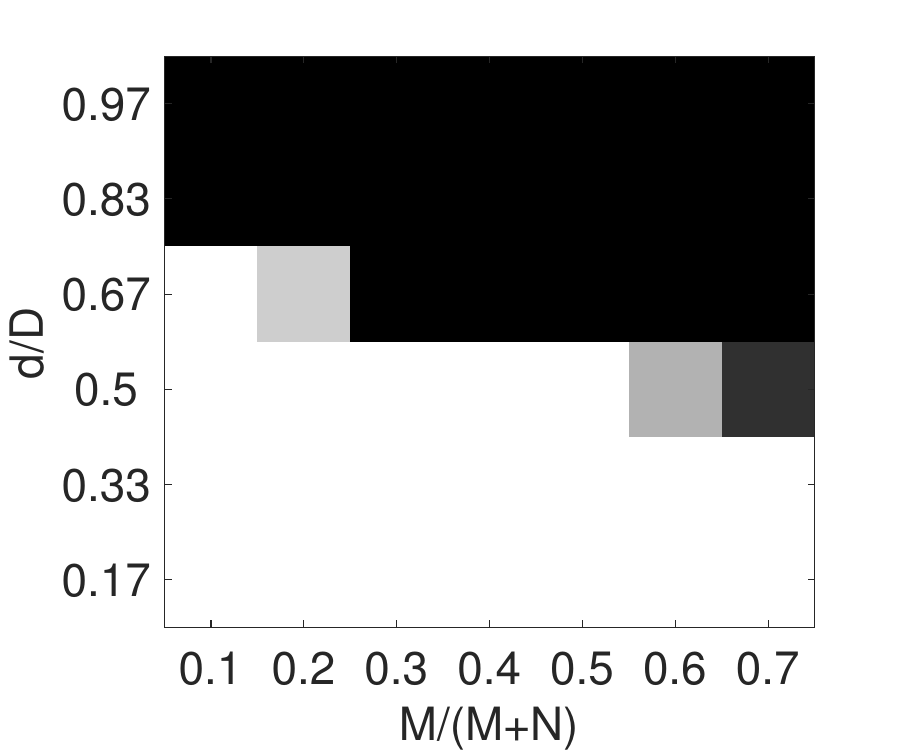}} 	
	\subfigure[$\ell_{21}$-RPCA]{\label{figure:L21separation}\includegraphics[width=0.25\linewidth]{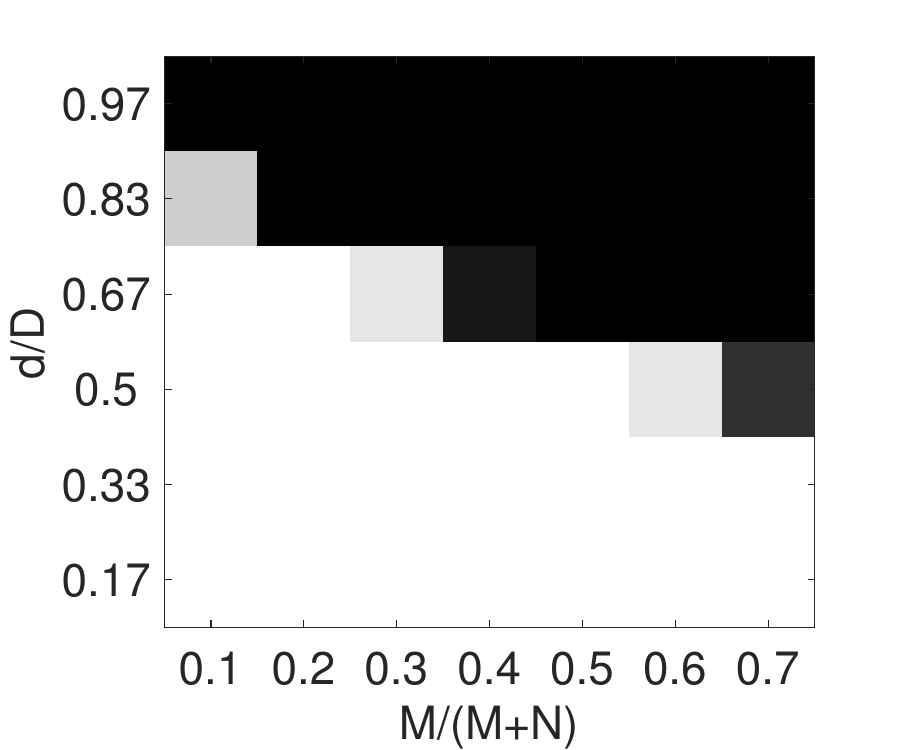}} 	\\
	\subfigure[REAPER]{\label{figure:REAPERseparation}\includegraphics[width=0.25\linewidth]{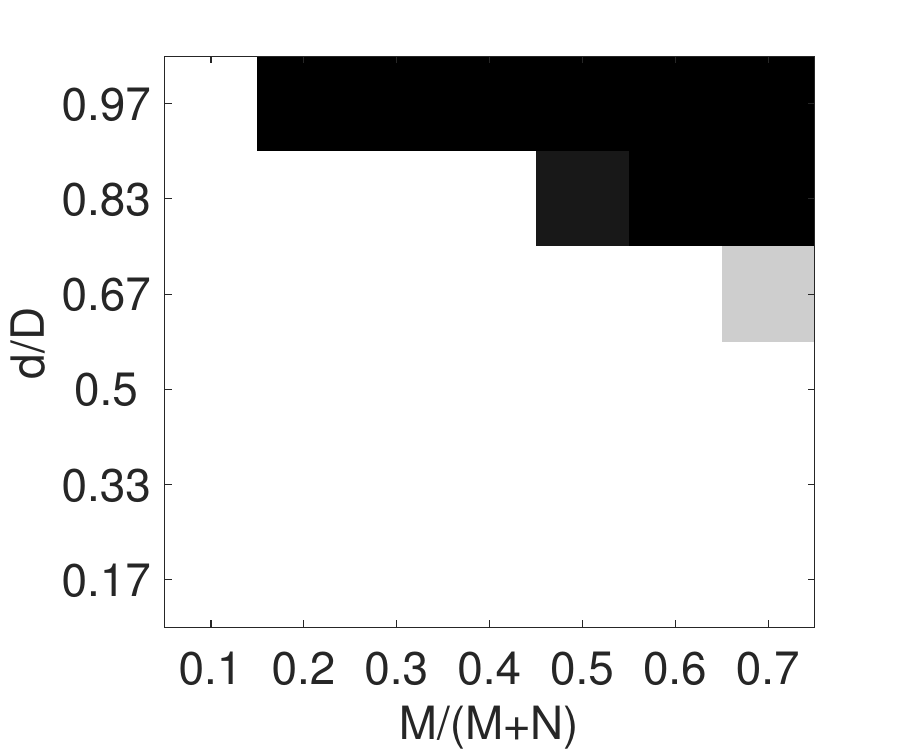}}
	\subfigure[CoP]{\label{figure:CPseparation}\includegraphics[width=0.25\linewidth]{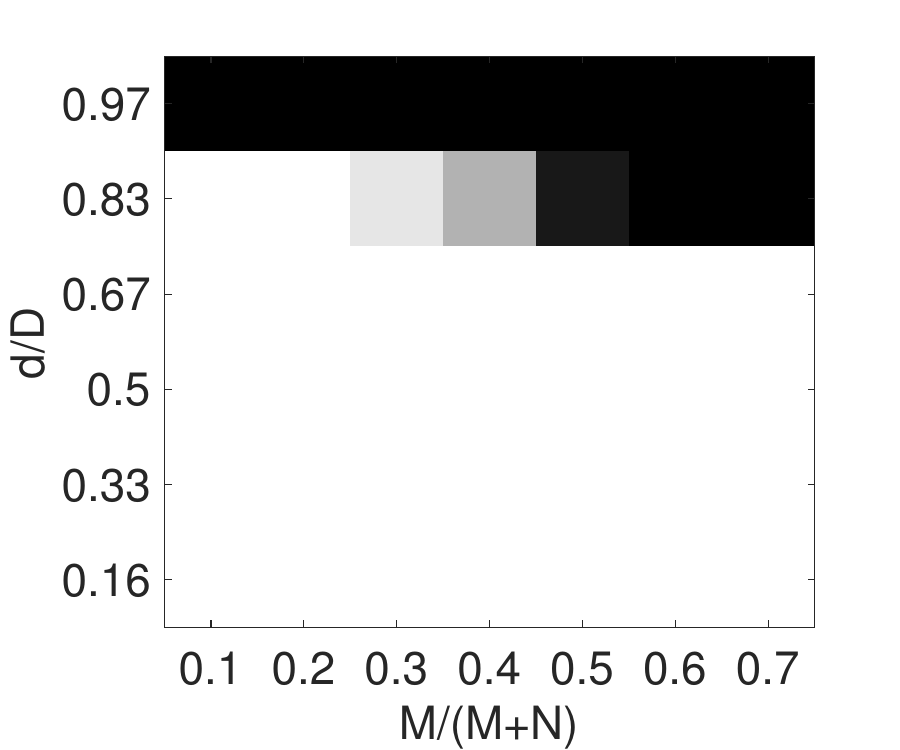}}\\
	\subfigure[DPCP-LP]{\label{figure:DPCP-LPseparation}\includegraphics[width=0.25\linewidth]{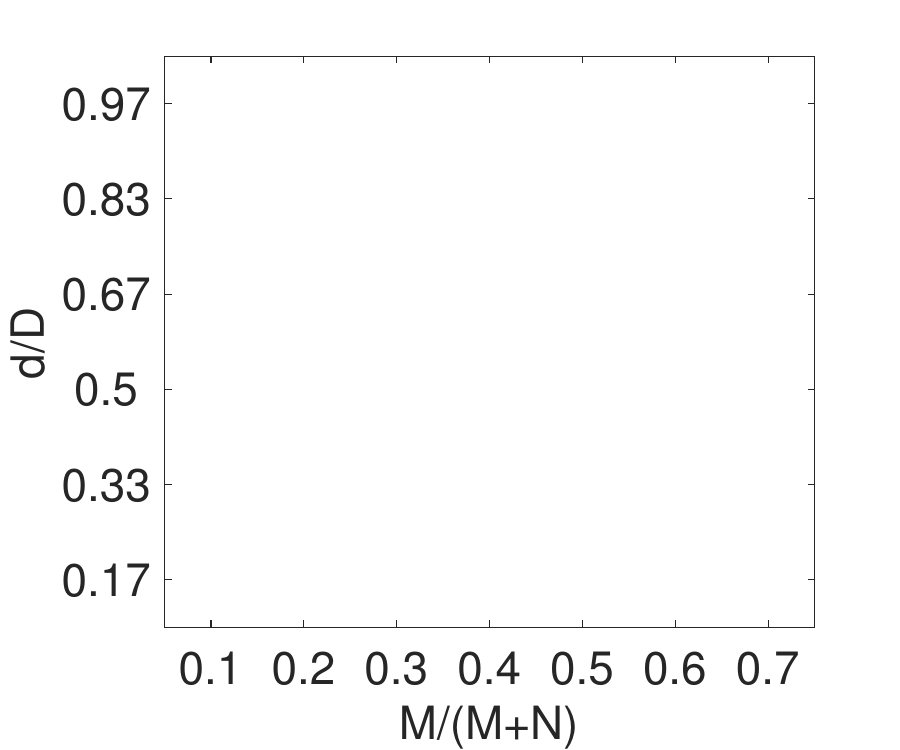}}		\subfigure[DPCP-IRLS]{\label{figure:L1_IRLS_separation}\includegraphics[width=0.25\linewidth]{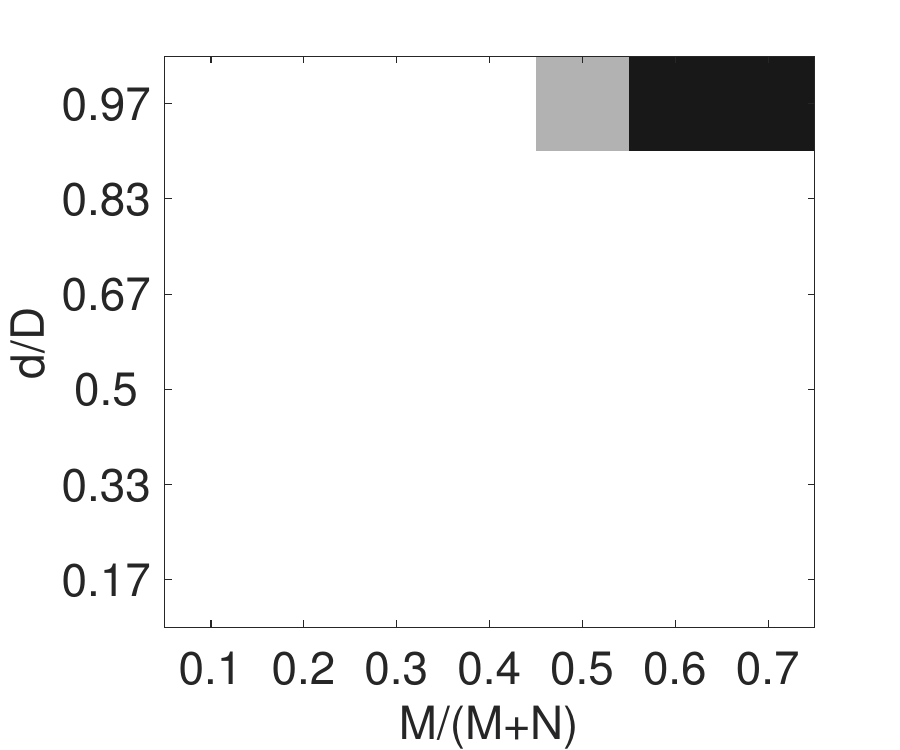}}	 	
	\caption{Outlier/Inlier separation in the absence of noise over $10$ independent trials. The horizontal axis is the outlier ratio defined as $M/(N+M)$, where $M$ is the number of outliers and $N$ is the number of inliers. The vertical axis is the relative inlier subspace dimension $d/D$; the dimension of the ambient space is $D=30$. Success (white) is declared by the existence of a threshold that, when applied to the output of each method, perfectly separates inliers from outliers.}\label{figure:separation}
\end{figure} 

\begin{figure}[t!]
\centering
\subfigure[$\sigma=0.05, d/D= 25/30$]{\label{ROC_d25_sigma05}\includegraphics[width=0.25\linewidth]{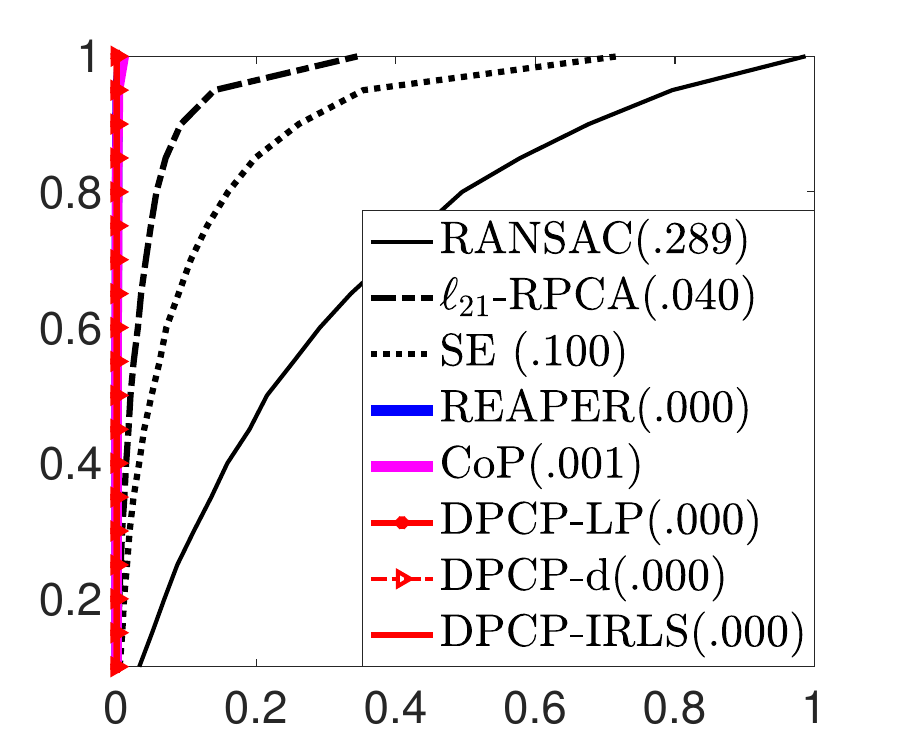}}
\subfigure[$\sigma=0.10, d/D= 25/30$]{\label{ROC_d25_sigma10}\includegraphics[width=0.25\linewidth]{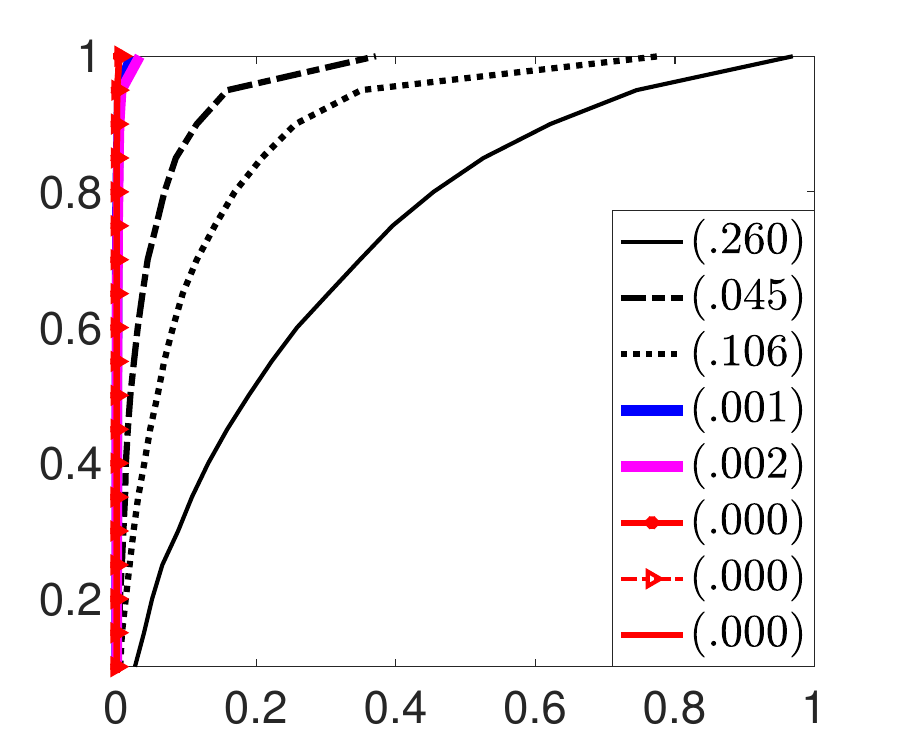}} 
\subfigure[$\sigma=0.20, d/D= 25/30$]{\label{ROC_d25_sigma20}\includegraphics[width=0.25\linewidth]{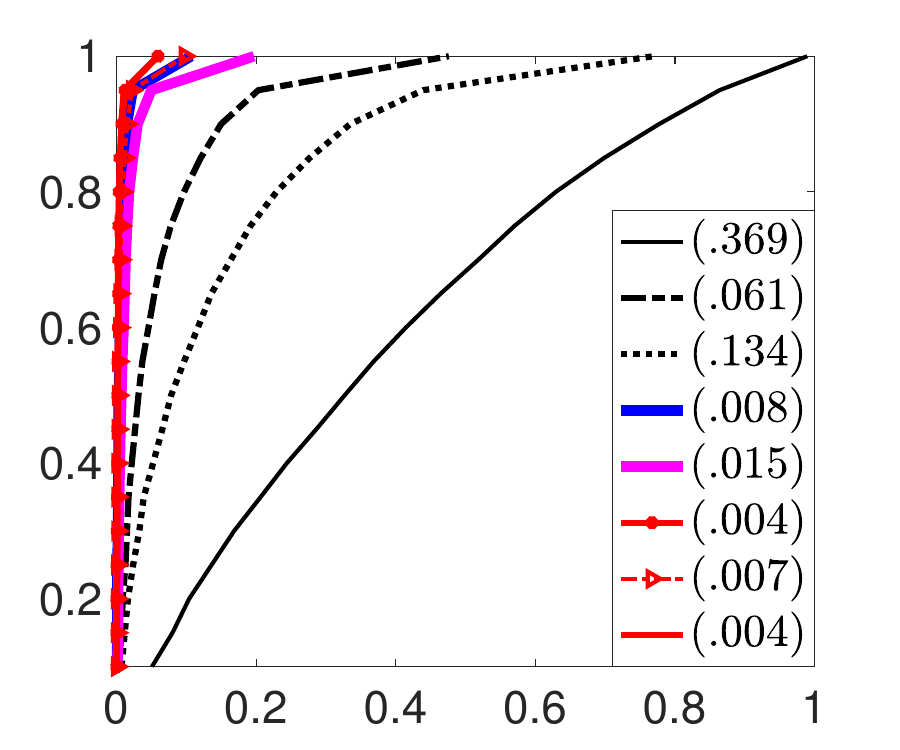}}\\
\subfigure[$\sigma=0.05, d/D= 29/30$]{\label{ROC_d29_sigma05}\includegraphics[width=0.25\linewidth]{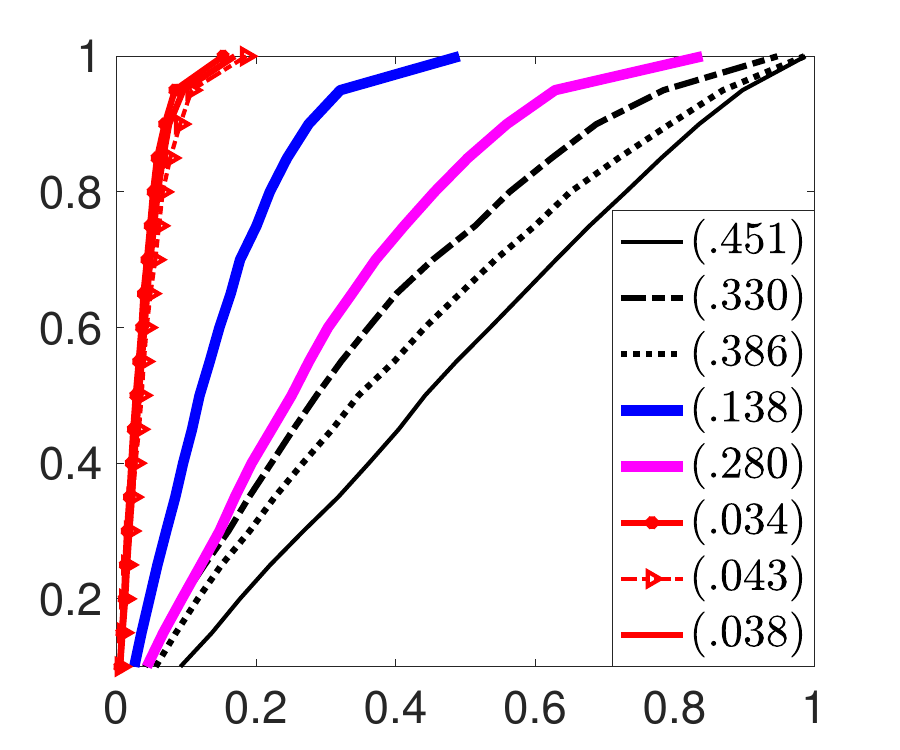}}
\subfigure[$\sigma=0.10, d/D= 29/30$]{\label{ROC_d29_sigma10}\includegraphics[width=0.25\linewidth]{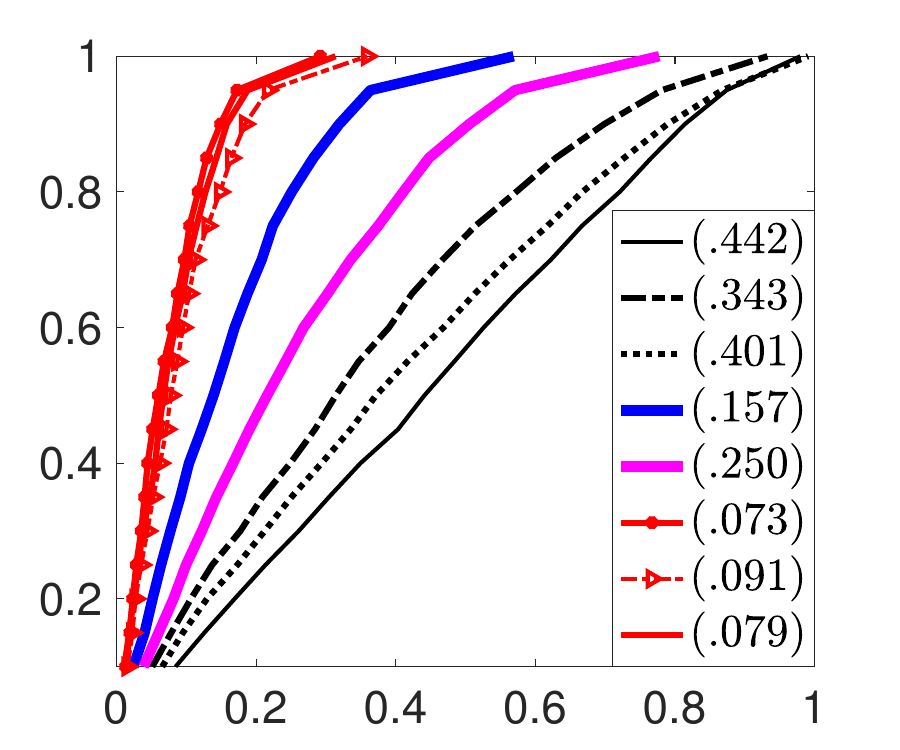}} 	
\subfigure[$\sigma=0.20, d/D= 29/30$]{\label{ROC_d29_sigma20}\includegraphics[width=0.25\linewidth]{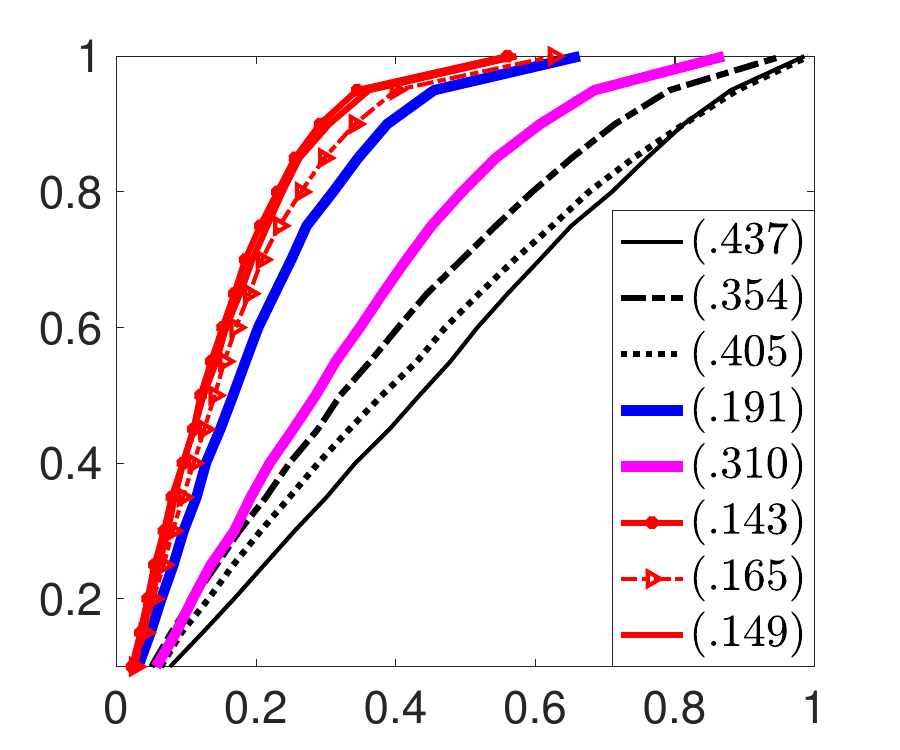}} \\
\subfigure[$\sigma=0.05, d/D= 25/30$]{\label{distance2S_d25}\includegraphics[width=0.25\linewidth]{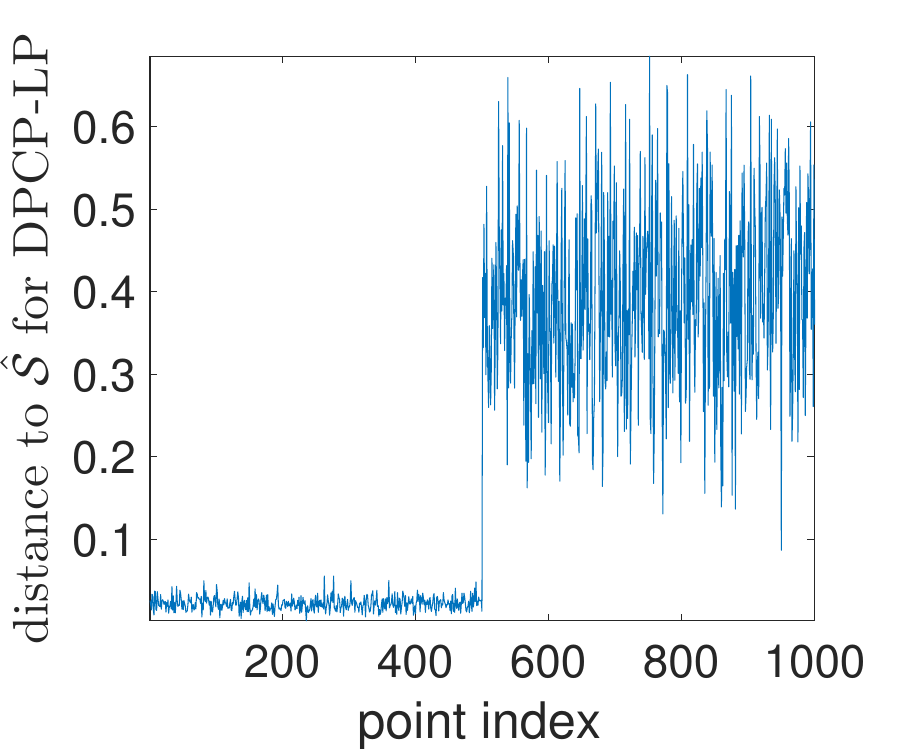}}
\subfigure[$ \sigma=0.05, d/D= 29/30$]{\label{distance2S_d29}\includegraphics[width=0.25\linewidth]{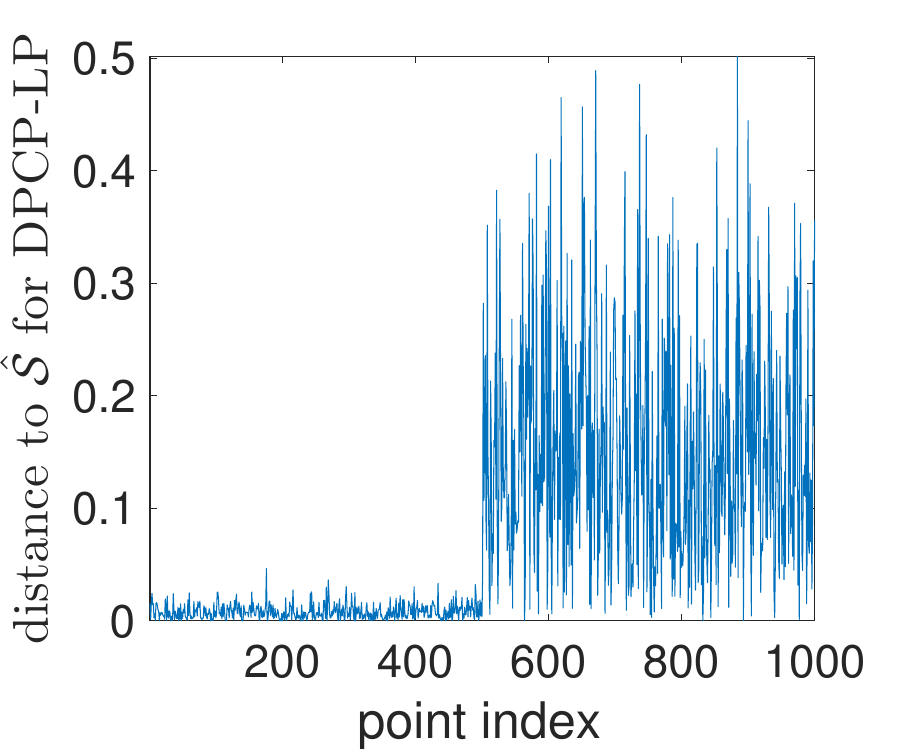}}
\caption{Figs. \ref{ROC_d25_sigma05}-\ref{ROC_d29_sigma20} show ROC curves for varying noise standard deviation $\sigma=0.05, 0.1, 0.2$, and subspace relative dimension $d/D = 25/30, 29/30$ (number of inliers is $N=500$ and outlier ratio is $M/(N+M) = 0.5$). The horizontal axis is False Positives ratio and the vertical axis is True Positives ratio. The number associated with each curve is the area above the curve; smaller numbers reflect more accurate performance. Figs. \ref{distance2S_d25}-\ref{distance2S_d29} show the distance of the noisy points to the subspace estimated by DPCP-LP for noise $\sigma=0.05$ for both relative dimensions under consideration.}
\label{figure:ROC_synthetic}
\end{figure}

\begin{table}
\centering
\caption{Mean running time of each method in seconds over $10$ independent trials for the experimental setting of \S \ref{subsectionExperiments:ComparativeSynthetic}.
We report only the extreme regimes corresponding to $d/D=5/30, 29/30$ and $M/(N+M) = 0.1, 0.7$.
The experiment is run in MATLAB on a standard Macbook-Pro with a dual core 2.5GHz Processor and a total of $4$GB Cache memory.}
\label{table:RunningTimes}
\ra{0.7}
\begin{tabular}{@{}l@{\, \, \, \,}c@{\, \, \, \,}c@{\, \, \, \,}c@{\, \, \, \,}c@{\, \, \, \,}c@{\, \, \, \,}c@{\, \, \, \,}c@{\, \, \, \,}c}\toprule[1pt]  $d/D:M/(N+M)$ &  $5/30:0.1$ & $29/30: 0.1$ & $5/30:0.7$ &  $29/30:0.7$  \\ 
\midrule[0.5pt]
RANSAC & $0.097$ & $0.410$ & $23.31$ & $2.83$   \\
SE-RPCA & $4.485$ & $4.519$ & $58.94$ & $79.07$  \\
$\ell_{21}$-RPCA & $0.048$ & $0.014$ & $0.185$ & $0.180$ \\
REAPER & $0.050$ & $0.058$ & $0.153$ & $0.042$  \\
CoP & $0.014$ & $0.014$ & $0.062$ & $0.061$  \\
DPCP-LP & $16.87$ & $0.407$ & $95.35$ & $2.822$  \\
DPCP-IRLS & $0.046$ & $0.038$ & $0.121$ & $0.415$  \\
\bottomrule[1pt]
\end{tabular}
\end{table}

In Fig. \ref{figure:separation} we depict success (white) versus failure (black), where success is interpreted as the existence of a threshold on $\ba$ that perfectly separates outliers and inliers. First observe that, as expected, RANSAC succeeds when there are very few outliers ($10\%$) regardless of the inlier relative dimension $d/D$, since in such a case the probability of sampling outlier-free points is high. Similarly, RANSAC succeeds when $d/D$ is small regardless of the outlier ratio, since in that case one needs only sample $d$ points, and for a sufficient budget (say, the running time of DPCP-LP), the probability of one of these samples being outlier-free is again high. Moving on, and again as expected, both SE-RPCA and $\ell_{21}$-RPCA succeed only for low to medium relative dimensions, since both methods are meant to exploit the low-rank structure of the inlier data, and thus fail when such a structure does not exist. Remarkably, even though CoP is a low-rank method in spirit, it performs surprisingly better than its low-rank alternatives SE-RPCA and $\ell_{21}$-RPCA, giving perfect inlier/outlier separation regardless of the outlier ratio for relative dimensions $d/D\le 2/3$. Further improvement is achieved by REAPER, which succeeds for as many as $40\%$ outliers and as high a relative dimension as $25/30 \approx 0.83$. Yet, REAPER fails in the challenging case $d/D=29/30$ as soon as there are more than $20\%$ outliers. Remarkably, DPCP-LP allows for perfect outlier rejection across all outlier ratios and all relative dimensions, thus clearly improving the state-of-art in the high relative dimension regime. Moreover, DPCP-IRLS does almost as well as DPCP-LP thus being the second best method, except that it only fails in the hardest of regimes, i.e., for relative dimension $29/30 \approx 0.97$ and for more than $50\%$ outliers. 

Finally, it is important to comment on the running time of the methods. As Table \ref{table:RunningTimes} shows, DPCP-LP is admittedly the slowest among the methods, particularly for low relative dimensions, since in that case many dual principal components need to be computed. Indeed, for $10\%$ outliers and $d/D = 5/30$ DPCP-LP computes $D-d = 25$ dual components and thus it takes about $17$ seconds, as opposed to $0.4$ seconds for the same amount of outliers but $d/D = 29/30$, since a single dual component is computed in this latter case. Similarly, for $70\%$ outliers DPCP-LP takes about $95$ seconds when $d/D=5/30$ as opposed to about $3$ seconds for $d/D = 29/30$. On the other hand, DPCP-IRLS is one order of magnitude faster than DPCP-LP and comparable to $\ell_{21}$-RPCA and REAPER, which overall are the second fastest methods, with CoP being the fastest of all.

\paragraph{Presence of Noise} 

Next, we fix $D=30, N=500, M/(N+M)=0.5$, and investigate the performance of the methods, adding DPCP-d to the mix, in the presence of varying levels of noise for two cases of high relative dimension, i.e., $d/D=25/30$ and $d/D=29/30$. The inliers are corrupted by additive white Gaussian noise of zero mean and standard deviation $\sigma=0.05,0.1,0.2$, with support in the orthogonal complement of the inlier subspace. The parameters of all methods are the same as earlier except for DPCP-d we set $\tau = \max\left\{\sigma,1/\sqrt{N+M}\right\}$, while for RANSAC we set its threshold parameter equal to $\sigma$.

We evaluate the performance of each method by its corresponding ROC curve. Each point of an ROC curve corresponds to a certain value of a threshold, with the vertical coordinate of the point giving the percentage of inliers being correctly identified as inliers (True Positives), and the horizontal coordinate giving the number of outliers erroneously identified as inliers (False Positives). As a consequence, an ideal ROC curve should be concentrated to the top left of the plot, i.e., the area above the curve should be zero. 
The ROC curves\footnote{We note that the vertical axis of all ROC curves in this paper starts from a ratio of $0.1$ True Positives.} as well as the area above each curve are shown in Fig. \ref{figure:ROC_synthetic}. As expected, the low-rank methods
RANSAC, SE-RPCA and $\ell_{21}$-RPCA  perform poorly for either relative dimension with performance being close to that of a random guess (inlier vs. outlier) for relative dimension $29/30$. On the other hand REAPER, CoP, DPCP-LP, DPCP-IRLS and DPCP-d perform almost perfectly well for $d/D=25/30$, while REAPER starts failing for very high relative dimension $29/30$, even for as low noise standard devision as $\sigma=0.05$ (of course this is to be expected because we already know from Fig. \ref{figure:separation} that REAPER fails at this regime even in the absence of noise), and CoP fails completely. In contrast, the DPCP variants remain robust to noise in this 
challenging regime for as high noise as $\sigma = 0.1$. What is remarkable, is that both DPCP-LP and DPCP-IRLS, which are designed for noiseless data, are surprisingly robust to noise, and slightly outperform DPCP-d, the latter meant to handle noisy data. We attribute this fact to the suboptimal approach we followed in solving the DPCP-d problem, as well as to the lack of a suitable mechanism for optimally tuning its thresholding parameter. 

\subsection{Comparative analysis using real data: Three-view geometry} \label{subsectionExperiments:Real}

In this section we perform an experimental evaluation of the proposed methods in the context of the three-view problem in computer vision using real data. In that setting one is given three images of the same static scene taken from different views, and the goal is to estimate the relative view poses, i.e., the rotations and translations that relate, say, view $1$ to view $2$ and $3$ (e.g., see Figs \ref{figure:trifocal_3view}-\ref{figure:trifocal_vis_cameras}). This task is of fundamental importance in many computer vision applications, such as $3$D reconstruction, where a $3$D model of a real-world scene is constructed from $2$D images of the scene. 
 
\begin{figure}[t!]
\centering
\subfigure[Three views of the same scene.]{\label{figure:trifocal_3view}\includegraphics[width=0.7\linewidth]{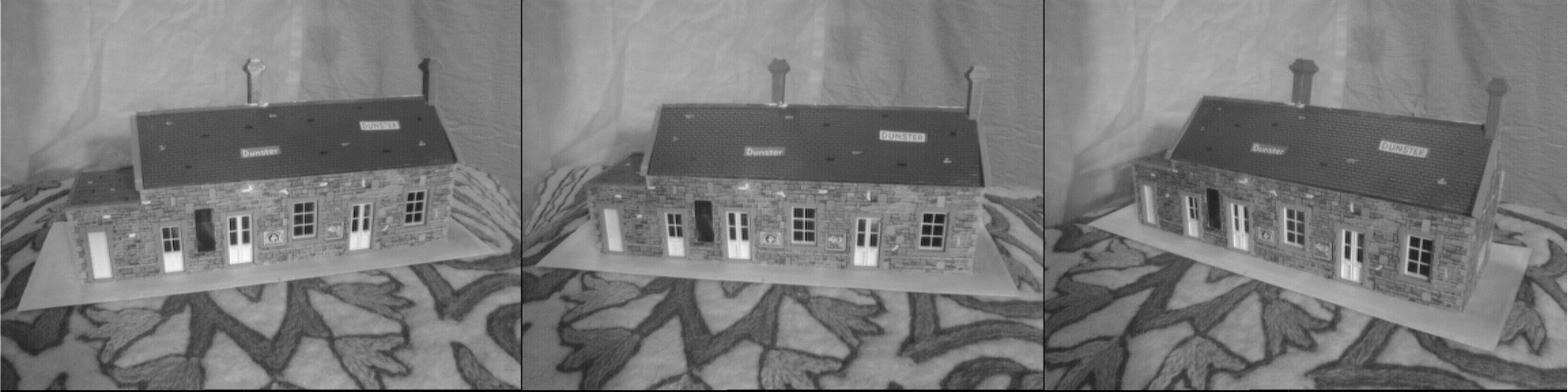}}\\
\subfigure[Left: Example of points viewed by all three cameras. Right: Configuration of cameras and the same points depicted in $3$D space. Color  represents height.]{\label{figure:trifocal_vis_cameras}\includegraphics[width=0.7\linewidth]{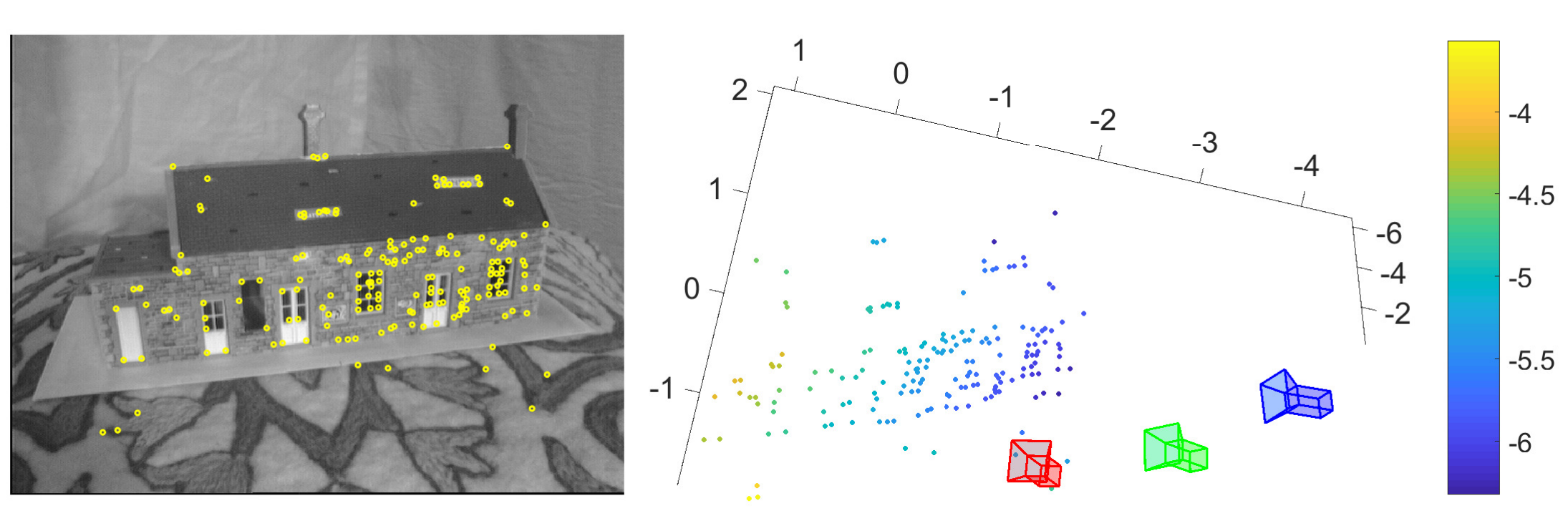}}\\
\subfigure[Examples of correct (yellow) and incorrect (red) point correspondences between the three views.]{\label{figure:trifocal_3view_corres}\includegraphics[width=0.7\linewidth]{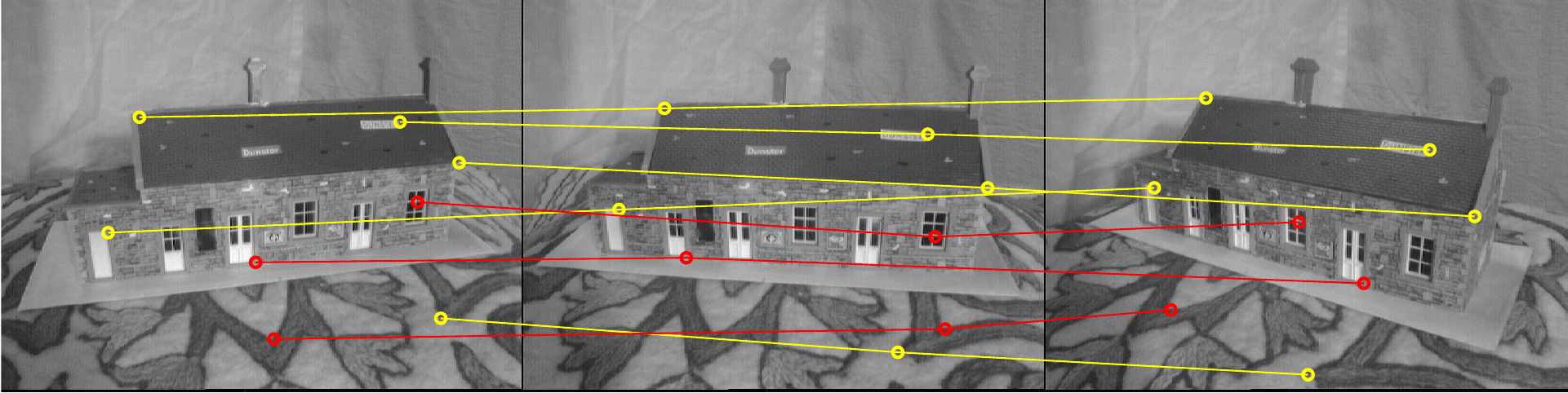}}\\	
	\caption{An example of three views of a static scene along with camera configurations and point correspondences (views $2, 4, 6$ of the Model House dataset, provided by the Visual Geometry Group, University of Oxford).}\label{figure:trifocal}
\end{figure} 
\myparagraph{The trifocal tensor}
The three images of the static scene may have been taken from three different cameras, or from a single moving camera. Regardless, the underlying three-view geometry is characterized by the constraints satisfied by any points lying in views $1,2$ and $3$ respectively, that correspond to the same $3$D point (e.g., see Fig. \ref{figure:trifocal_vis_cameras}). To describe the nature of these constraints, we fix a coordinate system $(x,y,z)$ for the $3$D space, and identify view $1$ with the plane $z=1$ and its optical center with the origin $\0$ of the coordinate system. We refer to this view as \emph{canonical view} $\V$. Then the projection $\X$ of a $3$D point $\Xi=(\xi_1,\xi_2,\xi_3)^\top$ onto $\V$ is the intersection point of the plane $z=1$ with the line that passes through $\Xi$ and the origin, i.e., $\X=\lambda \Xi$ with $\lambda = 1/\xi_3$. For simplicity, we assume that views $2$ ($\V'$) and $3$ ($\V''$) are rotated and translated versions of the canonical view $\V$, i.e., there exist rotations $\bR', \bR'' \in \text{SO}(3)$ and translations $\t',\t'' \in \Re^3$, such that\footnote{This geometry corresponds to \emph{calibrated cameras}, where the camera calibration parameters are known.}\footnote{In the computer vision literature it is customary to represent each camera with its projection matrix. In our discussion this representation takes the form $[\bI \, \, \0], \, [\bR' \, \, \t'], \,  [\bR'' \, \, \t'']$.}
\begin{align}
 \V = \bR'(\V')+\t'  = \bR''(\V'')+\t''.
\end{align} The projection $\X'$ of the $3$D point $\Xi$ onto $\V'$ is the intersection of $\V'$ with the line that passes from $\Xi$ and the optical center $-\bR'^\top\t'$ of view $2$. However, in practice $\X'$ is only known up to local pixel coordinates with respect to view $2$. That is, we can only know the representation $\x'$ of $\X'$ with respect to a coordinate system where view $2$ is the canonical view. In such a system of coordinates the point $\Xi$ is represented as $\bR' \Xi +\t'$ and hence $\x' = \lambda' (\bR' \Xi +\t')$, where $\lambda'$ is the inverse of the third coordinate of the vector $\bR' \Xi +\t'$. Substituting $\Xi = (1/\lambda) \x$ in this equation yields a relation between the local representations\footnote{Without loss of generality we take the local system of coordinates of view $1$ to be the same as the global system of coordinates.} $\x,\x'$ of $\X,\X'$ in $\V$ and $\V'$ respectively as follows: 
\begin{align}
\frac{1}{\lambda'} \x' = \frac{1}{\lambda} \bR' \x +\t'. \label{eq:xx'}
\end{align} Now, for a vector $\v = (\alpha,\beta,\gamma)^\top \in \Re^3$, denote by $[\v]$ the skew-symmetric matrix 
\begin{align}
[\v] = \begin{bmatrix}
0 & \gamma & -\beta \\
-\gamma & 0 & \alpha \\
\beta & -\alpha & 0
\end{bmatrix},
\end{align} and note that $[\v] \v = \0$. Multiplying equation \eqref{eq:xx'} from the left by $[\x']$ gives
\begin{align}
\frac{1}{\lambda} [\x']\bR' \x +[\x'] \t' = \0. \label{eq:xx'h}
\end{align} In exactly the same way, and letting $\x''$ be the representation of $\X''$ in the canonical coordinate system for view $3$, we have a relationship
\begin{align}
\frac{1}{\lambda} [\x'']\bR'' \x +[\x''] \t'' = \0. \label{eq:xx''h}
\end{align} Degenerate cases aside, equations \eqref{eq:xx'h}-\eqref{eq:xx''h} are equivalent to the condition 
\begin{align}
\Rank \left(\begin{bmatrix}
[\x']\bR' \x &  [\x']\t' \\
[\x'']\bR'' \x &  [\x'']\t''
\end{bmatrix} \right) \le 1, \label{eq:RankCondition}
\end{align} which in turn is equivalent to the matrix equation\footnote{Here we have used the fact that for vectors $\a,\b,\c,\d \in \Re^n$ the $2n \times 2$ matrix $\begin{bmatrix} \a & \b \\ \c & \d \end{bmatrix}$ has rank at most $1$ if and only if $\a \d^\top - \b \c^\top = \0_{n \times n}$; thanks to Tianjiao Ding for this observation.}
\begin{align}
[\x']\bR' \x \, \t''^\top [\x'']^\top- [\x']\t' \x^\top \, \bR''^\top [\x'']^\top  = \0_{3 \times 3}, \label{eq:NineEquations}
\end{align} or more elegantly written as 
\begin{align}
[\x'] \Big(\sum_{i=1}^3 x_i \bT_i \Big) [\x'']^\top =\0_{3 \times 3}, \, \, \, \x=(x_1,x_2,x_3)^\top, \, \, \, \bT_i := \r_i' \t''^\top -  \t' \r_i''^\top, \, i=1,2,3, \label{eq:slices}
\end{align} where $\r_i', \r_i''$ is the $i$th column of $\bR', \bR''$ respectively.
Equation \eqref{eq:slices} consists of $9$ \emph{trilinear} constraints on the local representations $\x,\x',\x''$ of the imaged $3$D point $\Xi$, among which a maximal number of four are linearly independent \citep{Hartley-Zisserman04}. The matrices $(\bT_1,\bT_2,\bT_3)$ are the slices of the so-called \emph{trifocal tensor}\footnote{In the uncalibrated case, which is more relevant in computer vision applications, the trifocal tensor has exactly the same structure as in \eqref{eq:slices}, with the only difference being that the matrices $\bR', \bR''$ are no longer rotations.} $\mathcal{T} \in \Re^{3 \times 3} \times \Re^{3 \times 3} \times \Re^{3 \times 3}$, which is the mathematical object that encodes the relative geometry of the calibrated three views: indeed, up to a change of coordinates there is a $1-1$ correspondence between trifocal tensors and camera views $\V, \V', \V''$; see Proposition $15$ and Theorem $16$ in \cite{Kileel:SIAM17}.

\myparagraph{Trifocal tensor estimation as a hyperplane learning problem}
Notice that the trilinear constraints \eqref{eq:slices} are linear in the entries of the tensor $\T=(\bT_1,\bT_2,\bT_3)$, which in its unfolded form can be regarded as a vector $\boldsymbol{\mathfrak{t}} \in \Re^{27}$. In fact, the space of (uncalibrated) trifocal tensors is an algebraic variety of $\Re^{27}$ of dimension $19$\footnote{$18$ in projective space $\mathbb{P}^{26}$.} \citep{Alzati:2010,Oeding:14}. As already noted, every point correspondence $(\x,\x',\x'')$ contributes four linearly independent equations in $\boldsymbol{\mathfrak{t}}$; equivalently, every point correspondence cuts the variety with four hyperplanes. As it turns out though, only $3$ of these hyperplanes are algebraically independent with respect to the variety\footnote{A more precise way to state this in algebraic-geometric language is that the ideal generated by these four equations has depth $3$ in the quotient ring of the trifocal variety.}, i.e., every generic point correspondence reduces the dimension of the variety by three \citep{Kileel:SIAM17}. As a result, one needs $6$ correspondences to get a finite number of candidate trifocal tensors that agree with them. Adding a $7$th correspondence allows us to uniquely determine $\boldsymbol{\mathfrak{t}}$ via solving a $28 \times 27$ homogeneous linear system of equations, while the relative poses $(\bR',\t')$ and $(\bR'',\t'')$ can be extracted from $\boldsymbol{\mathfrak{t}}$ by, e.g., the procedure described by \cite{Hartley-Zisserman04}. 

The above discussion suggests that given a set of $N' \ge 7$ generic and exact point correspondences $\{(\x_j,\x'_j,\x''_j)\}_{j=1}^{N'}$, the coefficient vectors 
\begin{align}
\c_1^{(1)},\c_2^{(1)},\c_3^{(1)},\c_4^{(1)},\dots,\c_1^{(j)},\c_2^{(j)},\c_3^{(j)},\c_4^{(j)}\dots,\c_1^{(N')},\c_2^{(N')},\c_3^{(N')},\c_4^{(N')} \in \Re^{27}, \label{eq:TriEmb}
\end{align} of the resulting $N=4N'$ linear equations span a hyperplane in $\Re^{27}$ with normal vector $\boldsymbol{\mathfrak{t}}$. We will be referring to the vectors \eqref{eq:TriEmb} as the \emph{trilinear embeddings} of the point correspondences. However, in practice one obtains such correspondences by matching points across images based on the similarity of some local features, such as \emph{SIFT} \citep{Lowe:CVPR99}. As a result, it is typically the case that many of the produced correspondences are incorrect (see Fig. \ref{figure:trifocal_3view_corres}), and the problem then becomes that of detecting inliers lying close to a hyperplane of $\Re^{27}$, from a dataset corrupted by outliers. Equivalently, one is presented with a codimension $1$ subspace learning problem, for which the proposed Dual Principal Component Pursuit (DPCP) is ideally suited; as we show next, the method can achieve superior performance than RANSAC, the latter being the traditional and up to date one of the most popular options in the computer vision community for such problems.

\myparagraph{Data} We use the first three views of the datasets \emph{Model House}, \emph{Corridor} and \emph{Merton College III}, provided by the Visual Geometry Group at Oxford University. Each of such dataset contains different views of the same static scene, together with the projection matrices of each view and high-quality (inlier) point correspondences. From each dataset we randomly pick $N'=125$ inlier correspondences $\{(\x_j,\x'_j,\x''_j)\}_{j=1}^{N'}$. We further generate $100\cdot M'/(N'+M')=30\%,\, 40\%, 50\%$ outlier correspondences $\{(\o_j,\o'_j,\o''_j)\}_{j=1}^{M'}$ as follows: For each triplet $(\V, \V', \V'')$ of views we randomly sample $M'$ points in each view from a Gaussian distribution with mean and covariance equal to that of the points associated with the inlier correspondences, and we randomly match them in $M'$ triples. We normalize all data according to \cite{Hartley:InDefense} and form a unit $\ell_2$-norm dataset $\btX=[\bX \, \, \bO] \boldsymbol{\Gamma} \in \Re^{27 \times 4(M+N)}$ that consists of the trilinear embeddings (see \eqref{eq:TriEmb}) of all inlier/outlier correspondences.

\myparagraph{Algorithms} We compare RANSAC and REAPER (see \S \ref{section:RelatedWork}) with the proposed fast DPCP variants DPCP-IRLS (Algorithm \ref{alg:DPCP-IRLS}) and DPCP-d (Algorithm \ref{alg:DPCP-d}) in the context of outlier detection for trifocal tensor estimation\footnote{The purely low-rank methods SE-RPCA and $\ell_2$-RPCA are unsuitable for learning a hyperplane; since Coherence Pursuit (CoP) performed much better than them with synthetic data, we also included it in our experiments, but do not report its performance as it was not competitive with the other tested methods, i.e., RANSAC, REAPER and DPCP. Notice from Fig. \ref{figure:ROC_synthetic} that with synthetic data CoP fails precisely at the case of a hyperplane.}. REAPER, DPCP-IRLS and DPCP-d receive as input the dataset $\btX$ and are configured to learn a subspace of dimension $26$ in $\Re^{27}$ (a hyperplane). We compare with two variations of RANSAC. The first variation, called H-RANSAC (\emph{hyperplane} RANSAC), is exactly as the standard RANSAC, except that it samples trilinear embeddings in groups of four (instead of individually), where each group is associated with a point correspondence\footnote{The standard RANSAC in this context samples trilinear embeddings disregarding the known information of which trilinear embeddings are associated to the same point-point-point correspondence. This performs poorly and for this reason we do not report it here.}. The second variation, called R-RANSAC (\emph{re-projection} RANSAC) is a more common choice in the computer vision community, and searches directly for the relative poses of the three views by minimizing the \emph{re-projection error}. That is, within its given time budget R-RANSAC repeats the following steps:
\begin{enumerate}
\item Randomly samples $7$ point correspondences.
\item Computes through SVD on the $28$ trilinear embeddings an approximate trifocal tensor $\boldsymbol{\mathfrak{t}}$.
\item  Runs Alg. $16.2$ of \cite{Hartley-Zisserman04} to obtain a valid trifocal tensor $\boldsymbol{\mathfrak{t}}^*$ from $\boldsymbol{\mathfrak{t}}$.
\item Extracts the relative poses from $\boldsymbol{\mathfrak{t}}^*$ (see \cite{Hartley-Zisserman04} for details).
\item For every correspondence $(\tilde{\x}_j,\tilde{\x}'_j,\tilde{\x}''_j)$ computes via triangulation the viewed $3$D point $\tilde{X}$, and through the extracted poses obtains its re-projections $(\hat{\tilde{\x}}_j,\hat{\tilde{\x}}'_j,\hat{\tilde{\x}}''_j)$ onto the three views.
\item For every point correspondence computes the associated re-projection error
\begin{align}
\frac{1}{3} \left(\left\|\tilde{\x}_j - \hat{\tilde{\x}}_j\right\|_2+\left\|\tilde{\x}_j' - \hat{\tilde{\x}}'_j\right\|_2+\left\|\tilde{\x}_j'' - \hat{\tilde{\x}}_j''\right\|_2\right).
\end{align}
\item Declares as inliers the correspondences with re-projection error less than a threshold. 
\end{enumerate} For both H-RANSAC and R-RANSAC we use as thresholds the corresponding maximal ground-truth distance to hyperplane and re-projection error respectively. Regarding time budget, we note that the fastest method is DPCP-d, and then follows REAPER and DPCP-IRLS. For example, for the experiment we are considering and for $40\%$ outliers, DPCP-d needs an average of about $0.03$sec to converge as opposed to $0.1$sec and $0.6$sec for REAPER and DPCP-IRLS respectively. Since RANSAC's performance is sensitive to the allocated time budget, we set this equal to the running time of DPCP-IRLS (high time budget) as well as that of DPCP-d (low time budget). In the latter case, for fairness, we also restrict the running time of REAPER and DPCP-IRLS accordingly.

\myparagraph{Results} We use as a metric the \emph{precision} of the algorithms that corresponds to a \emph{recall} value equal to $1$: given a hyperplane estimate $\hat{\H}$, this induces an ordering of all the inlier/outlier point correspondences based on increasing average distance of the corresponding $4$-tuples of trilinear embeddings to $\hat{\H}$. Letting $\alpha$ be the maximal such average distance that corresponds to an inlier point correspondence, our metric is the percentage of the inliers among all correspondences with average distance to $\hat{\H}$ less or equal than $\alpha$; for R-RANSAC we use re-projection error instead of average distance to hyperplane\footnote{When varying the outlier ratio but keeping the number of inliers fixed, the area under the ROC or precision-recall curve can be misleading, hence we do not use it here.}. Tables \ref{table:trifocal-co}, \ref{table:trifocal-mh} and \ref{table:trifocal-mc3} report the precision of the algorithms for the three different scenes, \emph{Corridor, Model House} and \emph{Merton College III} respectively, for different outlier ratios and different time budgets.

As Table \ref{table:trifocal-co} shows, for the rather easy dataset \emph{Corridor} and for $30\%$ outliers all methods give almost perfect precision regardless of time budget. The fact that R-RANSAC is slightly more computationally demanding than H-RANSAC reveals itself for $40\%$ outliers and low time budget, with the precision of the former dropping from $1$ to $0.601$, while the latter from $0.958$ to $0.903$. The robustness of the re-projection (R-RANSAC) error vs. the algebraic error (H-RANSAC) is evident from the case of $50\%$ outliers, where for high time budget the former gives a precision of $1$, while the latter essentially fails with a precision of $0.648$. At any case, both H-RANSAC and R-RANSAC fail for $50\%$ outliers and low time budget, indicating the sensitivity of RANSAC variants to the allocated running time. On the other hand, REAPER has a high precision $0.969$-$0.954$ even for the challenging case of $50\%$, while the DPCP variants are the best methods with DPCP-d maintaining a precision of $1$ across all outlier and time budget configurations.
 
Table \ref{table:trifocal-mh} shows that for the more challenging dataset \emph{Model House} all methods maintain a  high precision for up to $40\%$ outliers and high time budget, while both RANSAC variants start failing for $40\%$ outliers and low time budget. For this last case DPCP-d is only method having high precision $0.977$, while the next best method is REAPER with $0.906$. Interestingly, for $50\%$ outliers and if let run to convergence, DPCP-IRLS is the best method with precision $0.929$, while for low time budget all methods fail except DPCP-d. For the even more challenging dataset \emph{Merton College III} all methods fail for $50\%$ outliers, while DPCP-IRLS is the best method with precision $1$ for $40\%$ if let run to convergence (high time budget) and DPCP-d is the best method for low time budget.

In conclusion, even though RANSAC can have a very high precision given sufficient time budget, once the latter is restricted its performance can drop dramatically. This is particularly the case for large outlier ratios, a regime where an exponentially large time budget might be needed. On the other hand, DPCP-IRLS is the best method for large outlier ratios, striking a balance between computational requirements and precision, while DPCP-d is slightly less precise than DPCP-IRLS yet much faster. Overall, the above experiment suggests that the proposed Dual Principal Component Pursuit variants can be a useful or even superior alternative to popular approaches such as RANSAC, for three-view geometry or other computer vision applications.

\begin{table}[h!]
\caption{Algorithm precision when recall value is $1$ for the first three views of dataset \emph{Corridor}.}
\label{table:trifocal-co}
\resizebox{\textwidth}{!}{%
\begin{tabular}{@{}lcccccccc@{}}
\toprule
Algorithm vs. & \multicolumn{2}{c}{30\% outliers} & \multicolumn{1}{l}{} & \multicolumn{2}{c}{40\% outliers} & \multicolumn{1}{l}{} & \multicolumn{2}{c}{50\% outliers} \\ \cmidrule(lr){2-3} \cmidrule(lr){5-6} \cmidrule(l){8-9} 
\% of outliers \& time budget & high t.b. & low t.b. &  & high t.b. & low t.b. &  & high t.b. & low t.b. \\ \midrule
H-RANSAC & 0.996 & 0.969 &  & 0.958 & 0.903 &  & 0.648 & 0.601 \\
R-RANSAC & \textbf{1.000} & \textbf{1.000} &  & \textbf{1.000} & 0.601 &  & \textbf{1.000} & 0.506 \\
REAPER-IRLS & \textbf{1.000} & \textbf{1.000} &  & \textbf{1.000} & 0.992 &  & 0.969 & 0.954 \\
DPCP-IRLS & \textbf{1.000} & \textbf{1.000} &  & \textbf{1.000} & \textbf{1.000} &  & \textbf{1.000} & 0.965 \\
DPCP-d & \textbf{1.000} & \textbf{1.000} &  & \textbf{1.000} & \textbf{1.000} &  & \textbf{1.000} & \textbf{1.000} \\ \bottomrule
\end{tabular}%
}
\end{table}

\begin{table}[h!]
\caption{Algorithm precision when recall value is $1$, for the first three views of dataset \emph{Model House}.}
\label{table:trifocal-mh}
\resizebox{\textwidth}{!}{%
\begin{tabular}{@{}lcccccccc@{}}
\toprule
Algorithm vs. & \multicolumn{2}{c}{30\% outliers} & \multicolumn{1}{l}{} & \multicolumn{2}{c}{40\% outliers} & \multicolumn{1}{l}{} & \multicolumn{2}{c}{50\% outliers} \\ \cmidrule(lr){2-3} \cmidrule(lr){5-6} \cmidrule(l){8-9} 
\% of outliers \& time budget & high t.b. & low t.b. &  & high t.b. & low t.b. &  & high t.b. & low t.b. \\ \midrule
H-RANSAC & 0.984 & 0.962 &  & 0.940 & 0.636 &  & 0.610 & 0.510 \\
R-RANSAC & \textbf{0.992} & 0.779 &  & \textbf{0.988} & 0.601 &  & 0.503 & 0.502 \\
REAPER-IRLS & 0.984 & 0.965 &  & 0.947 & 0.906 &  & 0.700 & 0.693 \\
DPCP-IRLS & 0.973 & 0.951 &  & 0.980 & 0.856 &  & \textbf{0.929} & 0.612 \\
DPCP-d & 0.973 & \textbf{0.973} &  & 0.977 & \textbf{0.977} &  & 0.912 & \textbf{0.912} \\ \bottomrule
\end{tabular}%
}
\end{table}

\begin{table}[h!]
\caption{Algorithm precision when recall value is $1$ for the three views of dataset \emph{Merton College III}.}
\label{table:trifocal-mc3}
\resizebox{\textwidth}{!}{%
\begin{tabular}{@{}lcccccccc@{}}
\toprule
Algorithm vs. & \multicolumn{2}{c}{30\% outliers} & \multicolumn{1}{l}{} & \multicolumn{2}{c}{40\% outliers} & \multicolumn{1}{l}{} & \multicolumn{2}{c}{50\% outliers} \\ \cmidrule(lr){2-3} \cmidrule(lr){5-6} \cmidrule(l){8-9} 
\% of outlier \& time budget & high t.b. & low t.b. &  & high t.b. & low t.b. &  & high t.b. & low t.b. \\ \midrule
H-RANSAC & 0.988 & 0.900 &  & 0.751 & 0.689 &  & \textbf{0.579} & \textbf{0.558} \\
R-RANSAC & 0.702 & 0.698 &  & 0.601 & 0.601 &  & 0.507 & 0.507 \\
REAPER-IRLS & \textbf{1.000} & 0.992 &  & 0.958 & 0.923 &  & 0.502 & 0.502 \\
DPCP-IRLS & \textbf{1.000} & 0.977 &  & \textbf{1.000} & 0.784 &  & 0.513 & 0.513 \\
DPCP-d & \textbf{1.000} & \textbf{1.000} &  & 0.992 & \textbf{0.992} &  & 0.507 & 0.507 \\ \bottomrule
\end{tabular}%
}
\end{table}

\section{Conclusions}
We presented and studied a solution to the problem of robust principal component analysis in the presence of outliers, called
\emph{Dual Principal Component Pursuit (DPCP)}. The heart of the proposed method consisted of
a non-convex $\ell_1$ optimization problem on the sphere, for which a solution strategy based on a recursion of linear programs was analyzed. Rigorous mathematical analysis revealed that DPCP is a natural method for learning the inlier subspace in the presence of outliers, even in the challenging regime of large outlier ratios and high subspace relative dimensions. In fact, experiments on synthetic data showed that DPCP was the only method that could handle $70\%$ outliers inside a $30$-dimensional ambient space, irrespectively of the subspace dimension. Moreover, experiments with real images in the context of the trifocal tensor and three-view geometry suggest that DPCP could be a useful tool in computer vision problems.

\appendix		
\section{Review of existing results on Problems \eqref{eq:ell1} and \eqref{eq:ConvexRelaxations}} \label{appendix:Spath}

In this appendix we state three results that are important for our mathematical
analysis, already known in \cite{Spath:Numerische87}. For the sake of clarity and convenience, we have also taken the liberty of writing complete proofs of the statements, as not all of them can be found in \cite{Spath:Numerische87}.

\begin{prp} \label{prp:NonConvexMaximalInterpolation}
Let $\bY=[\y_1,\dots,\y_L] \in D \times L$ be full rank.
Then any global solution $\b^*$ to 
\begin{align}
\min_{\b^\transpose \b=1} \big \|\bY^\transpose \b \big \|_1, \label{eq:ell1Y}
\end{align} must be orthogonal to $(D-1)$ linearly independent columns of $\bY$.
\end{prp}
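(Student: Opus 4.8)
The plan is to exploit the piecewise-linear structure of the objective $f(\b):=\big\|\bY^\transpose\b\big\|_1=\sum_{j=1}^L|\y_j^\transpose\b|$ on the sphere and to show, by a perturbation argument, that a global minimizer must lie on a sufficiently low-dimensional face of the underlying hyperplane arrangement $\{\y_j^\transpose\b=0\}$. Fix a global solution $\b^*$, let $\I:=\{j:\y_j^\transpose\b^*=0\}$ be its \emph{active set}, set $\W:=\Span\{\y_j:j\in\I\}$ and $r:=\dim\W$. Since $\b^*\perp\y_j$ for $j\in\I$ we have $\b^*\in\W^\perp$; and since $\b^*\neq\0$ while $\bY$ has rank $D$, we cannot have $\W=\Re^D$, so $r\le D-1$. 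The goal is therefore to prove $r\ge D-1$, which forces $r=D-1$ and then yields $D-1$ linearly independent columns of $\bY$ orthogonal to $\b^*$ (any maximal independent subset of $\{\y_j:j\in\I\}$).

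Assume for contradiction that $r\le D-2$, so $\dim\W^\perp\ge2$ and there is a unit vector $\v\in\W^\perp$ with $\v\perp\b^*$. I would then consider the geodesic $\b(t):=\b^*\cos t+\v\sin t$, which stays in $\W^\perp\cap\Sp^{D-1}$. For $j\in\I$ we have $\y_j\in\W$, hence $\y_j^\transpose\v=0$ and $\y_j^\transpose\b(t)=0$ for all $t$: the active constraints remain active. For $j\notin\I$ the value $\y_j^\transpose\b^*$ is nonzero, so for all sufficiently small $|t|$ the sign $\Sign(\y_j^\transpose\b(t))$ equals $\Sign(\y_j^\transpose\b^*)$. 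Writing $\c:=\sum_{j\notin\I}\Sign(\y_j^\transpose\b^*)\y_j$ and $\c_\perp:=\pi_{\W^\perp}(\c)$, this gives for small $|t|$ the \emph{exact} identity $f(\b(t))=\c^\transpose\b(t)=\mu\cos t+\c_\perp^\transpose\v\,\sin t$, where $\mu:=\c^\transpose\b^*=\sum_{j=1}^L|\y_j^\transpose\b^*|=f(\b^*)\ge0$ (here I used $\b^*,\v\in\W^\perp$ to replace $\c$ by $\c_\perp$ in the inner products with $\v$). If $\c_\perp^\transpose\v\neq0$, choosing a small $t$ whose sign is opposite to that of $\c_\perp^\transpose\v$ makes $f(\b(t))<\mu=f(\b^*)$, contradicting global optimality; hence $\c_\perp^\transpose\v=0$. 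As $\v$ ranges over all unit vectors of $\W^\perp\cap(\b^*)^\perp$, this forces $\c_\perp\in\Span(\b^*)$, and taking the inner product with $\b^*$ identifies the scalar, i.e. $\c_\perp=\mu\b^*$.

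With $\c_\perp=\mu\b^*$ the geodesic value simplifies to $f(\b(t))=\mu\cos t$. If $\mu>0$, then $f(\b(t))<\mu=f(\b^*)$ for every small $t\neq0$, again contradicting that $\b^*$ is a global minimizer; since also $\mu\ge0$, we conclude $\mu=0$. But $\mu=f(\b^*)=\sum_{j=1}^L|\y_j^\transpose\b^*|=0$ means $\b^*$ is orthogonal to every column of $\bY$, and because $\bY$ has rank $D$ this forces $\b^*=\0$, contradicting $\|\b^*\|_2=1$. Hence $r\le D-2$ is impossible, so $r=D-1$, and the proposition follows.

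The delicate point throughout is the nonsmoothness of $f$: the argument succeeds precisely because restricting the perturbation to $\W^\perp$ keeps the active terms identically zero and freezes the signs of the inactive terms, turning $f$ along the geodesic into an \emph{exact} trigonometric function of $t$ rather than a mere first-order approximation. I expect the main bookkeeping obstacle to be the projection step---verifying that the active contribution lies in $\W$ and therefore drops out when one passes to $\W^\perp$---together with guaranteeing a genuine tangent direction, which is exactly what the hypothesis $r\le D-2$ (equivalently $\dim\W^\perp\ge2$) supplies. Once the problem is confined to the two-dimensional plane $\Span(\b^*,\v)$, everything collapses to the elementary behaviour of $\mu\cos t$ together with the nonnegativity of $f$.
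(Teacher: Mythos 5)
Your proof is correct and follows essentially the same route as the paper's: both argue by contradiction that if the active columns span less than $D-1$ dimensions, one can perturb $\b^*$ in a direction orthogonal to the active columns and to $\b^*$ itself, freeze the signs of the inactive terms, and conclude that the objective can be strictly decreased on the sphere. The only cosmetic difference is that you derive the vanishing of the first-order term directly from global optimality along a spherical geodesic (your $\mu\cos t+\c_\perp^\transpose\v\sin t$ identity), whereas the paper extracts the same identity from the subdifferential optimality condition and then obtains the decrease by normalizing $\b^*+\varepsilon\bzeta$.
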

\begin{proof} Let $\b^*$ be an optimal solution of \eqref{eq:ell1Y}. Then $\b^*$ must satisfy the first order optimality relation
\begin{align}
\0 \in \bY \Sgn(\bY^\transpose \b^*) + \lambda \b^* \label{eq:OptimalityConditionY},
\end{align} where $\lambda$ is a scalar Lagrange multiplier parameter, and
$\Sgn$ is the sub-differential of the $\ell_1$ norm. Without loss of generality, let $\y_1,\dots,\y_K$ be the 
columns of $\bY$ to which $\b^*$ is orthogonal. Then 
equation \eqref{eq:OptimalityConditionY} implies that there exist real numbers 
$\alpha_1,\dots,\alpha_K \in [-1,1]$ such that
\begin{align}
\sum_{j=1}^K \alpha_j \y_j + \sum_{j=K+1}^L \Sign(\y_j^\transpose \b^*) \y_j+ \lambda \b^*= \0. \label{eq:FirstOrderExpandedY}
\end{align} Now, suppose that the span of $\y_1,\dots,\y_K$ is of dimension less than $D-1$. Then there exists a unit norm vector $\bzeta \in \Sp^{D-1}$ that is orthogonal to all $\y_1,\dots,\y_K,\b^*$, and multiplication of \eqref{eq:FirstOrderExpandedY} from the left by $\bzeta^\transpose$ gives
	\begin{align}
	\sum_{j=K+1}^L \Sign(\y_j^\transpose \b^*) \bzeta^\transpose \y_j = 0.
	\end{align} Furthermore, we can choose a sufficiently small $\varepsilon >0$, such that 
	\begin{align}
	\Sign(\y_j^\transpose \b^* + \varepsilon \y_j^\transpose \bzeta) = \Sign(\y_j^\transpose \b^*), \, \, \, \forall j \in [L].
	\end{align} The above equation is trivially true for all $j$ such that $\y_j^\transpose \b^*=0$, because in that case $\y_j^\transpose \bzeta=0$ by the definition of $\bzeta$. On the other hand, if $\y_j^\transpose \b^* \neq 0$, then a small perturbation $\epsilon$ will not change the sign of $\y_j^\transpose \b^*$. Consequently, we can write 
	\begin{align}
	\left| \y_j^\transpose (\b^* + \varepsilon \bzeta) \right| = \left| \y_j^\transpose \b^* \right| + \varepsilon \Sign(\y_j^\transpose \b^*) \y_j^\transpose \bzeta, \, \forall j \in [L]
	\end{align} and so
	\begin{align}
	\big \| \bY^\transpose (\b^* + \varepsilon \bzeta) \big \|_1 = \big \|\bY^\transpose \b^* \big \|_1 + \varepsilon\sum_{j=K+1}^L \Sign(\y_j^\transpose \b^*) \bzeta^\transpose \y_j = \big \|\bY^\transpose \b^* \big \|_1.
	\end{align} However, 
	\begin{align}
	\big \|\b^* + \varepsilon \bzeta \big \|_2 = \sqrt{1 + \varepsilon^2} >0,
	\end{align} and normalizing $\b^*+\varepsilon \bzeta$ to have unit $\ell_2$ norm, we get a contradiction on $\b^*$ being a global solution. 
\end{proof}

\begin{prp} \label{prp:LPMaximalInterpolation}
	Let $\bY=[\y_1,\dots,\y_L]$ be a $D \times L$ matrix of rank $D$. Then problem 
\begin{align}	
\min_{\b^\transpose \hat{\bn}_k=1} \big \|\bY^\transpose \b \big \|_1 \label{eq:Watson}
\end{align} admits a computable solution $\bn_{k+1}$ that is orthogonal to $(D-1)$ linearly independent points of $\bY$. 
\end{prp}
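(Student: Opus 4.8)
The plan is to mimic the perturbation argument used for Proposition~\ref{prp:NonConvexMaximalInterpolation}, adapted to the affine constraint $\b^\transpose \hat{\bn}_k = 1$, while noting that the resulting solution is exactly what a vertex-based LP solver returns. First I would record that \eqref{eq:Watson} attains its minimum: since $\bY$ has rank $D$, the map $\b \mapsto \bY^\transpose \b$ is injective, so $\b \mapsto \|\bY^\transpose\b\|_1$ is a genuine norm on $\Re^D$ and hence coercive, and a coercive continuous function on the closed affine hyperplane $\{\b : \b^\transpose\hat{\bn}_k = 1\}$ attains its infimum. Equivalently, after the standard substitution $\y_j^\transpose\b = u_j^+ - u_j^-$ with $u_j^+,u_j^- \ge 0$, problem \eqref{eq:Watson} becomes a linear program whose feasible polyhedron contains no line (any recession direction must vanish on the sign-constrained coordinates $u_j^+,u_j^-$, leaving a $\b$-component $\boldsymbol{r}$ with $\bY^\transpose\boldsymbol{r}=\0$, hence $\boldsymbol{r}=\0$ since $\operatorname{rank}(\bY)=D$), so the LP is solvable at a vertex, which the simplex method computes; this is the sense in which $\bn_{k+1}$ is computable.

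Next, starting from any optimizer $\b^*$ of \eqref{eq:Watson}, I would track the subspace $V := \Span\{\y_j : \y_j^\transpose\b^* = 0\}$ spanned by the columns orthogonal to $\b^*$, and show it can be enlarged to dimension $D-1$ without leaving the optimal set. The key structural observation is that $\hat{\bn}_k \notin V$: every generator of $V$ is orthogonal to $\b^*$, so $V \subseteq (\b^*)^\perp$, whereas $\hat{\bn}_k^\transpose\b^* = 1 \neq 0$. Consequently, whenever $\dim V < D-1$ we have $\dim(V + \Span(\hat{\bn}_k)) = \dim V + 1 < D$, so there is a unit vector $\bzeta$ with $\bzeta \perp V$ and $\bzeta \perp \hat{\bn}_k$. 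Moving along the feasible line $\b^* + t\bzeta$, the terms with $\y_j \in V$ stay fixed (as $\bzeta \perp V$), while for $\y_j \notin V$ we have $\y_j^\transpose\b^* \neq 0$, so near $t = 0$ no sign changes and the objective is affine in $t$; since $\b^*$ is optimal and the objective is convex, its slope at $t=0$ must vanish, i.e. the objective is constant on the linear piece containing $t = 0$.

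Finally I would move to the nearest breakpoint of this piece. Because $\operatorname{rank}(\bY)=D$ forces $\y_{j_0}^\transpose\bzeta \neq 0$ for at least one $j_0$ (otherwise $\bzeta$ would be orthogonal to every column, hence $\bzeta = \0$), the linear piece has a finite endpoint $t^*$ at which some $\y_{j_0}^\transpose(\b^* + t^*\bzeta) = 0$; this point is still optimal (same objective value) and orthogonal to $\y_{j_0}$, and since $\bzeta \perp V$ but $\y_{j_0}^\transpose\bzeta \neq 0$ we must have $\y_{j_0}\notin V$, so replacing $\b^*$ by $\b^* + t^*\bzeta$ strictly increases $\dim V$. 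Iterating at most $D-1$ times produces an optimal $\bn_{k+1}$ with $\dim V = D-1$, i.e. orthogonal to $D-1$ linearly independent columns of $\bY$. The main obstacle to watch is the bookkeeping at each step: I must guarantee both that the breakpoint $t^*$ is finite (which fails only if $\bzeta$ is orthogonal to all columns, excluded by the rank hypothesis) and that the newly captured column genuinely lies outside $V$ (which is precisely what $\y_{j_0}^\transpose\bzeta \neq 0$ gives); these facts, together with $\hat{\bn}_k \notin V$ keeping a valid perturbation direction available until $\dim V$ reaches $D-1$, are what make the induction terminate correctly.
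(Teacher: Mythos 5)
Your proof is correct and follows essentially the same perturbation argument as the paper's: pick a direction $\bzeta$ orthogonal to the already-annihilated columns and to the constraint normal, use optimality of $\b^*$ to conclude that the objective is constant along the resulting feasible line, and slide to the nearest breakpoint to capture a new linearly independent column, iterating until $D-1$ are obtained. If anything, your version is more careful than the paper's on three points: you require $\bzeta \perp \hat{\bn}_k$ rather than the paper's $\bzeta \perp \bn_{k+1}$ (only the former preserves feasibility of the perturbed point), you justify existence of a minimizer and its computability via the LP reformulation, and you make explicit that the rank-$D$ hypothesis is what guarantees a finite breakpoint and a genuinely new independent column at each step.
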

\begin{proof}
	Let $\bn_{k+1}$ be a solution to $\min_{\b^\transpose \hat{\bn}_k=1} \big \|\bY^\transpose \b \big \|_1$ that is orthogonal to less than $D-1$ linearly independent points of $\bY$. Then we can find a unit norm vector $\bzeta$ that is orthogonal to the same points of $\bY$ that $\bn_{k+1}$ is orthogonal to, and moreover $\bzeta \perp \bn_{k+1}$. In addition, we can find a sufficiently small $\varepsilon>0$ such that
	\small
	\begin{align}
	\big \| \bY^\transpose (\bn_{k+1} + \varepsilon \bzeta) \big \|_1 = \big \|\bY^\transpose \bn_{k+1} \big \|_1 + \varepsilon\sum_{j: \, \bn_{k+1} \not\perp \y_j} \Sign(\y_j^\transpose \bn_{k+1}) \bzeta^\transpose \y_j ,
	\end{align} \normalsize where 
	\begin{align}
	\sum_{j: \, \bn_{k+1} \not\perp \y_j} \Sign(\y_j^\transpose \bn_{k+1}) \bzeta^\transpose \y_j \le 0.
	\end{align} Since $\bn_{k+1}$ is optimal, it must be the case 
	that 
	\begin{align}
	\sum_{j: \, \bn_{k+1} \not\perp \y_j} \Sign(\y_j^\transpose \bn_{k+1}) \bzeta^\transpose \y_j = 0, \label{eq:DifferentSigns}
	\end{align} and so 
	\begin{align}
	\big \| \bY^\transpose (\bn_{k+1} + \varepsilon \bzeta) \big \|_1 = \big \|\bY^\transpose \bn_{k+1} \big \|_1. \label{eq:VaryingEpsilon}
	\end{align} By \eqref{eq:VaryingEpsilon} we see that as we vary $\varepsilon$ the objective remains unchanged. Notice also that varying $\varepsilon$ preserves all zero entries appearing in the vector $\bY^\transpose \bn_{k+1}$. Furthermore, because of \eqref{eq:DifferentSigns}, it is always possible to either decrease or increase $\varepsilon$ until an additional zero is achieved, i.e., until $\bn_{k+1} + \varepsilon \bzeta$ becomes orthogonal to a point of $\bY$ that $\bn_{k+1}$ is not orthogonal to. Then we can replace $\bn_{k+1}$ with $\bn_{k+1} + \varepsilon \bzeta$ and repeat the process, until we get some $\bn_{k+1}$ that is orthogonal to $D-1$ linearly independent points of $\bY$.
\end{proof}

\begin{prp} \label{prp:LPconvergence}
Let $\bY=[\y_1,\dots,\y_L]$ be a $D \times L$ matrix of rank $D$. Suppose that for each problem \eqref{eq:Watson} a solution $\bn_{k+1}$ is chosen such that $\bn_{k+1}$ is orthogonal to $D-1$ linearly independent points of $\bY$, in accordance with Proposition \ref{prp:LPMaximalInterpolation}. Then the sequence $\left\{\bn_k\right\}$ converges to a critical point of problem \eqref{eq:ell1Y} in a finite number of steps.
\end{prp}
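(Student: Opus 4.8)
The plan is to combine a monotonicity property of the objective with a finiteness property of the iterates, and then to identify the terminal iterate with a critical point of \eqref{eq:ell1Y} via the first-order condition \eqref{eq:OptimalityConditionY}.

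First I would show the objective sequence $J_k := \big\|\bY^\transpose \hat{\bn}_k\big\|_1$ is non-increasing. Since $\hat{\bn}_k$ is itself feasible for the problem \eqref{eq:Watson} solved at iteration $k$ (it satisfies $\hat{\bn}_k^\transpose \hat{\bn}_k = 1$), optimality of $\bn_{k+1}$ yields $\big\|\bY^\transpose \bn_{k+1}\big\|_1 \le \big\|\bY^\transpose \hat{\bn}_k\big\|_1$. Moreover, the constraint $\bn_{k+1}^\transpose \hat{\bn}_k = 1$ together with Cauchy--Schwarz forces $\big\|\bn_{k+1}\big\|_2 \ge 1$, so normalizing can only decrease the objective: $J_{k+1} = \big\|\bY^\transpose \bn_{k+1}\big\|_1 / \big\|\bn_{k+1}\big\|_2 \le \big\|\bY^\transpose \bn_{k+1}\big\|_1 \le J_k$. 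I would also record the sharper dichotomy: the chain is strict whenever $\big\|\bn_{k+1}\big\|_2 > 1$, since $\bY$ has full rank $D$ and $\bn_{k+1} \neq \0$ force $\big\|\bY^\transpose \bn_{k+1}\big\|_1 > 0$; while $\big\|\bn_{k+1}\big\|_2 = 1$ is precisely the equality case of Cauchy--Schwarz and hence forces $\bn_{k+1} = \hat{\bn}_k$. Consequently $\hat{\bn}_{k+1} \neq \hat{\bn}_k$ implies the strict decrease $J_{k+1} < J_k$.

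Next I would invoke finiteness of the set of possible iterates. By Proposition \ref{prp:LPMaximalInterpolation}, each $\bn_{k+1}$ is orthogonal to $D-1$ linearly independent columns of $\bY$; such a set spans a hyperplane whose orthogonal complement is one-dimensional, so $\hat{\bn}_{k+1}$ equals, up to sign, the unique unit normal of that span. As there are at most $\binom{L}{D-1}$ such $(D-1)$-subsets, the iterates $\hat{\bn}_k$ for $k \ge 1$ all lie in a fixed finite set $\mathcal{F} \subset \Sp^{D-1}$, so the values $J_k$ lie in the finite set of objective values attained on $\mathcal{F}$. A non-increasing sequence that strictly decreases at every step where the iterate changes, while taking values in a finite set, can change only finitely many times; hence there is $k^*$ with $\hat{\bn}_k = \hat{\bn}_{k^*}$ for all $k \ge k^*$. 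In particular $\hat{\bn}_{k^*+1} = \hat{\bn}_{k^*}$, which by the Cauchy--Schwarz equality case gives $\bn_{k^*+1} = \hat{\bn}_{k^*}$, so $\hat{\bn}_{k^*}$ minimizes \eqref{eq:Watson} at step $k^*$. Writing down the KKT condition of that convex program, there is $\mu \in \Re$ with $\0 \in \bY \Sgn(\bY^\transpose \hat{\bn}_{k^*}) - \mu \hat{\bn}_{k^*}$; setting $\lambda = -\mu$ recovers exactly \eqref{eq:OptimalityConditionY}, certifying $\hat{\bn}_{k^*}$ as a critical point of \eqref{eq:ell1Y}.

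The main obstacle I anticipate is ruling out indefinite cycling among iterates of equal objective value: finiteness alone guarantees the objective stabilizes but not that the iterate does. The strict-decrease dichotomy ($J_{k+1} < J_k$ unless $\hat{\bn}_{k+1} = \hat{\bn}_k$) is exactly what closes this gap, and the delicate point is verifying it rigorously — in particular the Cauchy--Schwarz equality case that pins down $\big\|\bn_{k+1}\big\|_2 = 1 \Rightarrow \bn_{k+1} = \hat{\bn}_k$, and the strict positivity $\big\|\bY^\transpose \bn_{k+1}\big\|_1 > 0$ coming from $\bY$ having full rank $D$.
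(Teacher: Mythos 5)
Your proposal is correct and follows essentially the same route as the paper's own proof: monotone decrease of the objective with strict decrease whenever $\big\|\bn_{k+1}\big\|_2>1$, finiteness of the candidate directions (each determined up to sign by a set of $D-1$ linearly independent columns), and identification of the stabilized iterate with a critical point of \eqref{eq:ell1Y} by matching first-order conditions. You merely make explicit a few details the paper leaves implicit (the Cauchy--Schwarz equality case, the positivity of $\big\|\bY^\transpose\bn_{k+1}\big\|_1$ from $\operatorname{rank}(\bY)=D$, and the KKT correspondence), all of which are correct.
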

\begin{proof}	
	If $\bn_{k+1} = \hat{\bn}_k$, then inspection of the first order optimality conditions of the two problems, reveals that $\hat{\bn}_k$ is a critical point of $\min_{\b^\transpose \b=1} \big \|\bY^\transpose \b \big \|_1$. If $\bn_{k+1} \neq \hat{\bn}_k$, then $\big \| \bn_{k+1} \big \|_2>1$, and so 
	$\big \|\bY^\transpose \hat{\bn}_{k+1} \big \|_1 < \big \|\bY^\transpose \hat{\bn}_{k} \big \|_1$. As a consequence, if $\bn_{k+1} \neq \hat{\bn}_k$, then $\hat{\bn}_k$ can not arise as a solution for some $k' > k$. 
	Now, because of Proposition \ref{prp:LPMaximalInterpolation}, for each $k$, there is a finite number of candidate directions $\bn_{k+1}$. These last two observations imply that the sequence $\left\{\bn_k\right\}$ must converge in a finite number of steps to a critical point of $\min_{\b^\transpose \b=1} \big \|\bY^\transpose \b \big \|_1$.
\end{proof}

\section*{Acknowledgement}
This work was supported by NSF grants 1447822, 1618637 and 1704458.  The first author is thankful to Dr. Daniel P. Robinson for many comments that helped improve this manuscript, to Dr. Glyn Harman for indicating the proof of Lemma \ref{lem:Koksma-eO}, to Dr. Zhihui Zhu for pointing out the approximate asymptotic behavior $M<\mathcal{O}(N^2)$, and to Tianjiao Ding for his help in producing the experiments on the trifocal tensor. The authors also thank Dr. Gilad Lerman for useful discussions regarding robust PCA, as well as the two anonymous reviewers for their constructive comments.

\bibliography{DPCP-JMLR18_final.bbl}


\end{document}